\documentclass[11pt]{article}
\pdfoutput=1   %
\usepackage[letterpaper, margin=1in]{geometry}
\usepackage{parskip}
\usepackage[sort]{natbib}
\usepackage{times}

\PassOptionsToPackage{hyphens}{url}
\usepackage{hyperref}          %
\usepackage{url}
\usepackage{enumitem}
\newcommand{\PIG}{\textsc{pig}}
\newcommand{\PIGint}{\textsc{pig.int}}
\newcommand{\PIGext}{\textsc{pig.ext}}
\usepackage{graphicx}

\usepackage{bm}
\usepackage{multirow}
\usepackage{booktabs}
\usepackage{algorithm}
\usepackage{algpseudocode}
\usepackage{wrapfig}
\usepackage{axlib}

\usepackage{extarrows}

\usepackage{placeins}
\usepackage{float}          %
\newcommand{\detailfigscale}{0.9}   %
\newcommand{\codefootnote}{}
\newcommand{\hflink}{}
\newcommand{\tbteaserscale}{1.18}   %
\newsavebox{\squishbox}
\newlength{\squishlen}
\newcommand{\squishfig}[2]{%
  \sbox{\squishbox}{\resizebox{\textwidth}{!}{#1}}%
  \setlength{\squishlen}{\dimexpr\ht\squishbox+\dp\squishbox-0.5cm\relax}%
  \resizebox*{\textwidth}{\squishlen}{\begingroup\tikzset{every picture/.append style={yscale=#2}}#1\endgroup}}
\usepackage{comment}        %
\usepackage{titletoc}       %

\usepackage{authblk}

\makeatletter
\def\@maketitle{%
  \newpage\null\vskip 2.0em%
  \begin{center}%
    {\LARGE \@title \par}\vskip 2.2em%
    {\normalsize\lineskip .5em\begin{tabular}[t]{c}\@author\end{tabular}\par}%
  \end{center}\par\vskip 2.4em}
\makeatother

\usepackage{fancyhdr}
\renewenvironment{abstract}{%
  \begin{center}\normalsize\bfseries Abstract\end{center}\vspace{-0.3\baselineskip}
  \begin{list}{}{\setlength{\leftmargin}{0.3in}\setlength{\rightmargin}{\leftmargin}\setlength{\listparindent}{0pt}%
    \setlength{\itemindent}{0pt}\setlength{\parsep}{0pt}}\item\relax\small}
 {\end{list}\vspace{0.3\baselineskip}}
\definecolor{linkblue}{RGB}{23, 64, 140}
\definecolor{citeteal}{RGB}{15, 118, 110}
\hypersetup{colorlinks=true, linkcolor=linkblue, citecolor=citeteal, urlcolor=linkblue,
            pdftitle={PIG: Parallel Inconsistency Guidance for Accelerating Diffusion Language Models},
            pdfauthor={Quentin Fruytier, Anton Xue, Sanjay Shakkottai, Sujay Sanghavi}}

\title{\texorpdfstring{%
  $\vcenter{\hbox{\includegraphics[height=2.15em]{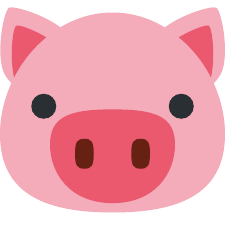}}}\mkern9mu
   \vcenter{\hbox{\bfseries PIG: Parallel Inconsistency Guidance for}\vskip2pt\hbox{\bfseries Accelerating Diffusion Language Models}}$%
  }{PIG: Parallel Inconsistency Guidance for Accelerating Diffusion Language Models}}

\author{Quentin Fruytier}
\author{Anton Xue}
\author{Sanjay Shakkottai}
\author{Sujay Sanghavi}
\affil{The University of Texas at Austin}
\affil{\footnotesize qdf76@my.utexas.edu, anton.xue@utexas.edu, sanjay.shakkottai@utexas.edu, sanghavi@mail.utexas.edu}
\date{}                          %

\begin{document}

\maketitle

\begin{abstract}

Diffusion language models accelerate generation by decoding multiple masked positions per forward pass. However, accuracy is known to drop when parallel decoding trajectories deviate from their sequential counterparts, which are slow and expensive ``oracles'' that adhere to the joint distribution of language. In this paper, we propose \textit{Parallel Inconsistency Guidance} (PIG), a new paradigm that uses a trained auxiliary model to directly estimate the inconsistency between parallel and sequential generation from a single model forward pass. During inference, masked positions are traversed by decreasing confidence, and tokens are added to the parallel decode set one-at-a-time until the total estimated inconsistency between the parallel and sequential generation exceeds a threshold. Surprisingly, it suffices to train a lightweight $\sim2$M parameter auxiliary model on a small mixture --- drawn from OpenCoder, GSM8K-train, MATH-train, and MBPP-train. For LLaDA-8B Instruct and NemotronDiffusion-3B Instruct, PIG extends the accuracy-TPF frontier on GSM8K, MATH500, MBPP, HumanEval, and MMLU, when compared to the industry standard Fast-dLLM and measures favorably on LLaDA-8B Instruct against DAPD, KLASS, EB-sampler, and other learned unmasking policies.

\end{abstract}

\noindent{\centering\footnotesize\textbf{Code:} \url{https://github.com/Quentin-Fruytier/PIG-DLM-release}\hflink\par}
\vspace{1.3\baselineskip}

\providecolor{tzcommit}{HTML}{0F766E}
\providecolor{tzover}{HTML}{B42318}
\newcommand{\tbeightpt}{\fontsize{8}{9.6}\selectfont}
\newcommand{\tbsevenpt}{\fontsize{7}{8.4}\selectfont}
\tikzset{
  tbtok/.style={font=\small, inner sep=0pt, execute at begin node=\strut},
  tbctx/.style={tbtok, text=black!55},
  tbslot/.style={draw=black!30, fill=black!7, rounded corners=2pt, minimum width=0.6cm, minimum height=0.36cm,
                 inner sep=0pt, font=\tbeightpt, text=black!50},
  tbconf/.style={font=\tbsevenpt, text=black!35, inner sep=0pt},
  tbmodel/.style={draw=tzcommit, fill=white, rounded corners=2pt, font=\tbeightpt\scshape, text=tzcommit,
                  inner xsep=4pt, inner ysep=2pt},
  tbout/.style={font=\small\bfseries, inner sep=1pt, execute at begin node=\strut},
  tbnote/.style={font=\tbsevenpt, inner sep=0pt},
  tbwide/.style={-{Triangle[length=7pt,width=11pt]}, line width=5pt, draw=tzcommit!22},
  tbarrow/.style={-{Stealth[length=4pt,width=3.2pt]}, line width=0.6pt},
  tbfaded/.style={draw=black!35, dash pattern=on 2pt off 1.5pt, rounded corners=4pt, line width=0.5pt, fill=black!3},
  tbmodelfaded/.style={draw=black!35, fill=white, rounded corners=2pt, font=\tbsevenpt\scshape, text=black!45,
                       inner xsep=3pt, inner ysep=1.5pt},
}

\newcommand{\tbsixpt}{\fontsize{6}{7.2}\selectfont}
\newcommand{\tbfivept}{\fontsize{5.5}{6.6}\selectfont}
\tikzset{
  tbpred/.style={tbslot, minimum width=0.62cm, inner xsep=2.5pt, text=black!60},
  tbstep/.style={draw=tzover!55, fill=tzover!4, rounded corners=4pt, line width=0.6pt},
  tbref/.style={font=\tbsixpt, text=black!50, inner sep=0pt},
  tbgroup/.style={fill=black!5, draw=black!15, rounded corners=3pt},
  tbouttok/.style={tbslot, inner xsep=2.5pt, font=\tbeightpt\bfseries},
  tbnfe/.style={draw=tzcommit, fill=white, rounded corners=2pt, font=\tbsixpt\scshape, text=tzcommit, inner xsep=2.5pt, inner ysep=1.5pt},
  tbfat/.style={-{Triangle[length=5pt,width=11pt]}, line width=7.5pt, draw=black!18},
  tbfatlbl/.style={font=\tbfivept\scshape, text=black!60, inner sep=0pt},
  tbdash/.style={-{Stealth[length=5pt,width=4.5pt]}, line width=1.1pt, draw=black!50, dash pattern=on 2.5pt off 1.8pt},
}
\newcommand{\tbteaserleft}{%
\begin{tikzpicture}[x=1cm,y=1cm]
  \def\cw{8.0}
  \node[tbctx] at (0.55,4.7) {Roses};
  \node[tbslot] at (1.4,4.7) {M1};
  \node[tbctx] at (2.2,4.7) {red,};
  \node[tbslot] at (2.95,4.7) {M2};
  \node[tbctx] at (3.85,4.7) {violets};
  \node[tbslot] at (4.75,4.7) {M3};
  \node[tbctx] at (5.5,4.7) {blue};
  \draw[rounded corners=5pt, draw=tzcommit, line width=1pt, fill=white] (0,1.75) rectangle (\cw,4.35);
  \node[fill=white, inner xsep=5pt, inner ysep=1.5pt, font=\normalsize\scshape, text=tzcommit, anchor=west] at (0.25,4.35) {pig};
  \draw[tbgroup] (0.12,2.15) rectangle (0.88,3.95);
  \draw[tbgroup] (1.8,2.15) rectangle (2.6,3.95);
  \draw[tbgroup] (7.0,2.15) rectangle (7.9,3.95);
  \node[tbslot] (m1) at (0.5,3.65) {M1};
  \node[tbslot] (m2) at (0.5,3.05) {M2};
  \node[tbslot] (m3) at (0.5,2.45) {M3};
  \draw[-{Triangle[length=7pt,width=22pt]}, line width=13pt, draw=tzcommit!18] (0.95,3.05) -- (1.72,3.05);
  \node[tbnfe] at (1.3,3.05) {nfe};
  \node[tbpred] (p1) at (2.2,3.65) {are};
  \node[tbpred] (p2) at (2.2,3.05) {and};
  \node[tbpred] (p3) at (2.2,2.45) {look};
  \draw[tbstep] (2.8,2.05) rectangle (6.75,4.1);
  \node[tbnote, text=tzover, font=\tbeightpt\scshape\bfseries] at (4.775,3.86) {parallel inconsistency};
  \node[tbnote, text=tzover, font=\tbeightpt] at (4.775,3.55) {$\hat{P}\,(\textsc{parallel}\neq\textsc{oracle})$};
  \node[tbnote, text=tzover, font=\tbeightpt] at (4.775,3.24) {(too high for M3)};
  \draw[-{Triangle[length=9pt,width=19pt]}, line width=12pt, draw=tzcommit!28] (2.95,2.55) -- (6.6,2.55);
  \node[font=\tbeightpt\scshape\bfseries, text=tzcommit, inner sep=0pt] at (4.7,2.55) {pig adapter};
  \node[tbouttok, text=tzcommit] at (7.45,3.65) {are};
  \node[tbouttok, text=tzcommit] at (7.45,3.05) {and};
  \node[tbouttok, text=tzover] (look) at (7.45,2.45) {look};
  \draw[tzover, line width=0.8pt] (7.05,2.2) -- (7.85,2.7);
  \begin{scope}[opacity=0.65]
    \draw[tbfaded] (0.0,0.3) rectangle (8.0,1.3);
    \draw[tbdash] (m3.south) -- (0.5,1.35);
    \node[tbslot] (s1) at (0.6,0.72) {M1};
    \draw[tbfat] (0.9,0.72) -- (1.8,0.72);
    \node[tbfatlbl] at (1.33,0.72) {nfe};
    \node[tbpred, text=black!45] (a1) at (2.1,0.72) {are};
    \node[tbslot] (s2) at (3.25,0.72) {M2};
    \draw[tbarrow, draw=black!45] (a1.east) -- (s2.west);
    \draw[tbfat] (3.55,0.72) -- (4.45,0.72);
    \node[tbfatlbl] at (3.98,0.72) {nfe};
    \node[tbpred, text=black!45] (a2) at (4.75,0.72) {and};
    \node[tbslot] (s3) at (5.9,0.72) {M3};
    \draw[tbarrow, draw=black!45] (a2.east) -- (s3.west);
    \draw[tbfat] (6.2,0.72) -- (7.1,0.72);
    \node[tbfatlbl] at (6.63,0.72) {nfe};
    \node[tbpred, text=black!45] (a3) at (7.4,0.72) {are};
    \draw[tbdash] (7.4,1.3) -- (7.4,2.22);
  \end{scope}
  \node[fill=white, inner xsep=5pt, inner ysep=1.5pt, font=\normalsize\scshape, text=black!60] at (4.0,1.3) {unseen sequential oracle};
\end{tikzpicture}}

\providecommand{\tbteaserscale}{1}
\newlength{\tbpanelh}\setlength{\tbpanelh}{\tbteaserscale\dimexpr122.8pt\relax}   %
\begin{figure}[H]
  \noindent\resizebox{!}{\tbpanelh}{\tbteaserleft}\hfill\includegraphics[height=\tbpanelh]{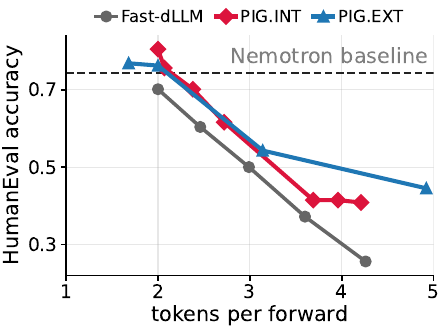}
  \caption{\textbf{Parallel Inconsistency Guidance (\PIG{}).} \emph{Left:} from one forward pass, \PIG{}
  estimates how likely each parallel commit is to disagree with the unseen sequential decode and defers the
  risky ones (M3). \emph{Right:} on Nemotron-3B HumanEval (64-token blocks), \PIGint{} and \PIGext{} extend
  the accuracy--TPF frontier of the model's own confidence decoder and baseline reported result.}
  \label{fig:teaser}
\end{figure}

\tikzset{
  pwtitle/.style={font=\tbeightpt\bfseries, inner sep=0pt},
  pwsub/.style={font=\tbsevenpt, text=black!55, inner sep=0pt},
  pwtxt/.style={font=\tbeightpt, inner sep=0pt},
  pwbox/.style={draw=black!30, fill=black!7, rounded corners=2pt, minimum width=0.62cm, minimum height=0.4cm,
                inner xsep=2pt, inner ysep=0pt, font=\tbeightpt, text=black!55},
  pwcommit/.style={pwbox, fill=tzcommit, draw=tzcommit, text=white, font=\tbeightpt\bfseries},
  pwreject/.style={pwbox, fill=white, draw=tzover, text=tzover, font=\tbeightpt\bfseries},
  pwframe/.style={draw=black!35, fill=black!4, line width=0.4pt},
  pwok/.style={font=\tbsevenpt\bfseries, text=tzcommit, inner sep=0pt},
  pwbad/.style={font=\tbsevenpt\bfseries, text=tzover, inner sep=0pt},
  pwdiv/.style={draw=black!22, line width=0.4pt},
}
\newcommand{\tbpigintwalk}{%
\begin{tikzpicture}[x=1cm,y=1cm]
  \draw[rounded corners=5pt, draw=tzover, line width=1pt, fill=white] (1.75,1.65) rectangle (12.95,6.2);
  \node[fill=white, inner xsep=5pt, inner ysep=1.5pt, font=\normalsize\scshape, text=tzover, anchor=west] at (2.0,6.2) {pig.int};
  \foreach \y/\w in {5.6/Roses, 4.36/{red,}, 3.74/and, 3.12/violets}{ \node[pwtxt] at (0.55,\y) {\w}; }
  \foreach \y/\w in {4.98/M1, 2.5/M2, 1.88/M3}{ \node[pwbox] at (0.55,\y) {\w}; }
  \node[pwsub] at (0.55,6.05) {masked};
  \draw[-{Triangle[length=8pt,width=24pt]}, line width=14pt, draw=tzcommit!22] (1.06,3.74) -- (1.72,3.74);
  \node[tbnfe] at (1.39,3.74) {nfe};
  \draw[pwdiv] (5.05,1.9) -- (5.05,5.55);
  \draw[pwdiv] (8.65,1.9) -- (8.65,5.55);
  \draw[black!30, dash pattern=on 2pt off 1.5pt, line width=0.4pt] (5.15,3.98) -- (12.85,3.98);
  \draw[black!30, dash pattern=on 2pt off 1.5pt, line width=0.4pt] (5.15,2.87) -- (12.85,2.87);
  \node[pwtitle] at (3.45,5.8)  {I.\ Traversal Order};
  \node[pwsub]   at (3.45,5.5)  {decreasing confidence order};
  \node[pwtitle] at (6.85,5.8)  {II.\ Set Selection};
  \node[pwsub]   at (6.85,5.5)  {$b_i=\hat p_\nu\big(\nu_i,\ \hat\mu_i(S)\big)$};
  \node[pwtitle] at (10.8,5.8)  {III.\ Thresholding};
  \node[pwsub]   at (10.8,5.5)  {admit while $\hat R(S)\le\varphi$};
  \draw[pwframe] (2.12,1.82) rectangle (2.38,4.8);
  \fill[black!18] (2.12,1.82) rectangle (2.38,4.8);
  \draw[tbdash] (1.96,4.8) -- (1.96,1.82);   %
  \node[pwsub] at (2.25,5.0) {$\nu$};
  \foreach \y/\slot/\k/\cv/\tok in {4.53/M1/1/0.92/are, 3.42/M3/3/0.71/blue, 2.32/M2/2/0.55/look}{
    \draw[black!60, line width=0.7pt] (2.06,\y) -- (2.44,\y);
    \node[pwbox] at (2.95,\y) {\slot};
    \node[pwtxt, anchor=west] at (3.4,\y+0.12) {$\nu_{\k}=\cv$};
    \node[pwsub, anchor=west] at (3.4,\y-0.16) {$\hat x_{\k}=$ \tok};
  }
  \node[pwtxt, anchor=west] (m1) at (5.25,4.53) {$S=\{$};
  \node[pwcommit, anchor=west] (m1a) at ([xshift=2pt]m1.east) {are};
  \node[pwtxt, anchor=west] at ([xshift=2pt]m1a.east) {$\}$};
  \node[pwtxt, anchor=west] (m2) at (5.25,3.42) {$S=\{$};
  \node[pwcommit, anchor=west] (m2a) at ([xshift=2pt]m2.east) {are};
  \node[pwcommit, anchor=west] (m2b) at ([xshift=3pt]m2a.east) {blue};
  \node[pwtxt, anchor=west] at ([xshift=2pt]m2b.east) {$\}$};
  \node[pwtxt, anchor=west] (m3) at (5.25,2.32) {$S=\{$};
  \node[pwcommit, anchor=west] (m3a) at ([xshift=2pt]m3.east) {are};
  \node[pwcommit, anchor=west] (m3b) at ([xshift=3pt]m3a.east) {blue};
  \node[pwreject, anchor=west] (m3c) at ([xshift=3pt]m3b.east) {look};
  \node[pwtxt, anchor=west] at ([xshift=2pt]m3c.east) {$\}$};
  \draw[tzover, line width=0.8pt] (m3c.south west) -- (m3c.north east);
  \foreach \y in {4.53, 3.42, 2.32}{
    \draw[pwframe] (9.3,\y-0.12) rectangle (11.3,\y+0.12);
  }
  \fill[tzcommit]    (9.3,3.42-0.12) rectangle (9.8,3.42+0.12);
  \fill[tzcommit!45] (9.3,2.32-0.12)  rectangle (9.8,2.32+0.12);
  \fill[tzover]      (9.8,2.32-0.12)  rectangle (11.09,2.32+0.12);
  \foreach \y in {4.53, 3.42, 2.32}{
    \draw[black, line width=0.7pt] (10.55,\y-0.2) -- (10.55,\y+0.2);
  }
  \node[pwtxt] at (10.55,4.89) {$\varphi$};
  \node[text=tzcommit, inner sep=0pt] at (9.0,4.53) {\ding{51}};
  \node[text=tzcommit, inner sep=0pt] at (9.0,3.42) {\ding{51}};
  \node[text=tzover,   inner sep=0pt] at (9.0,2.32) {\ding{55}};
  \node[pwok,  anchor=west] at (11.45,4.53)  {$0.00\le\varphi$};
  \node[pwok,  anchor=west] at (11.45,3.42) {$0.12\le\varphi$};
  \node[pwbad, anchor=west] at (11.45,2.32)  {$0.43>\varphi$};
  \draw[-{Triangle[length=8pt,width=24pt]}, line width=14pt, draw=black!15] (13.0,3.74) -- (13.62,3.74);
  \foreach \y/\w in {5.6/Roses, 4.36/{red,}, 3.74/and, 3.12/violets}{ \node[pwtxt] at (14.1,\y) {\w}; }
  \node[pwcommit] at (14.1,4.98) {are};
  \node[pwbox]    at (14.1,2.5)  {M2};
  \node[pwcommit] at (14.1,1.88) {blue};
  \node[pwsub] at (14.1,6.05) {committed};
\end{tikzpicture}}

\providecolor{tzext}{HTML}{1F77B4}   %
\tikzset{
  pwnext/.style={draw=tzext, dash pattern=on 2pt off 1.2pt, line width=0.7pt, fill=white, rounded corners=2pt,
                 minimum width=0.62cm, minimum height=0.4cm, inner xsep=2pt, inner ysep=0pt, font=\tbeightpt\bfseries, text=tzext},
  pwsmall/.style={draw=tzcommit!55, fill=tzcommit!55, rounded corners=1.5pt, minimum width=0.46cm, minimum height=0.3cm,
                  inner xsep=1.5pt, inner ysep=0pt, font=\tbsevenpt\bfseries, text=white},
  pwsmalltxt/.style={font=\tbsevenpt, text=black!45, inner sep=0pt},
}
\newcommand{\tbpigextwalk}{%
\begin{tikzpicture}[x=1cm,y=1cm]
  \draw[rounded corners=5pt, draw=tzext, line width=1pt, fill=white] (1.75,1.65) rectangle (12.95,6.2);
  \node[fill=white, inner xsep=5pt, inner ysep=1.5pt, font=\normalsize\scshape, text=tzext, anchor=west] at (2.0,6.2) {pig.ext};
  \foreach \y/\w in {5.6/Roses, 4.36/{red,}, 3.74/and, 3.12/violets}{ \node[pwtxt] at (0.55,\y) {\w}; }
  \foreach \y/\w in {4.98/M1, 2.5/M2, 1.88/M3}{ \node[pwbox] at (0.55,\y) {\w}; }
  \node[pwsub] at (0.55,6.05) {masked};
  \draw[-{Triangle[length=8pt,width=24pt]}, line width=14pt, draw=tzcommit!22] (1.06,3.74) -- (1.72,3.74);
  \node[tbnfe] at (1.39,3.74) {nfe};
  \draw[pwdiv] (5.05,1.9) -- (5.05,5.55);
  \draw[pwdiv] (8.65,1.9) -- (8.65,5.55);
  \node[pwtitle] at (3.45,5.8)  {I.\ Traversal Order};
  \node[pwsub]   at (3.45,5.5)  {decreasing confidence order};
  \node[pwtitle] at (6.85,5.8)  {II.\ Set Selection};
  \node[pwsub]   at (6.85,5.5)  {$N(S)$ = the next candidate};
  \node[pwsub]   at (6.85,5.22) {$c_i=\Delta\hat p\big(H_i,\ \widehat H_i^{\,\mathrm{next}}\big)$};
  \node[pwtitle] at (10.8,5.8)  {III.\ Thresholding};
  \node[pwsub]   at (10.8,5.5)  {admit token, continue if};
  \node[pwsub]   at (10.8,5.22) {$\hat R\big(N(S)\big)-\hat R\big(N(S)\mid S\big)\le\theta$};
  \draw[pwframe] (2.12,1.82) rectangle (2.38,4.8);
  \fill[black!18] (2.12,1.82) rectangle (2.38,4.8);
  \draw[tbdash] (1.96,4.8) -- (1.96,1.82);   %
  \node[pwsub] at (2.25,5.0) {$\nu$};
  \foreach \y/\slot/\k/\cv/\tok in {4.33/M1/1/0.92/are, 3.08/M3/3/0.71/blue, 2.12/M2/2/0.55/look}{
    \draw[black!60, line width=0.7pt] (2.06,\y) -- (2.44,\y);
    \node[pwbox] at (2.95,\y) {\slot};
    \node[pwtxt, anchor=west] at (3.4,\y+0.12) {$\nu_{\k}=\cv$};
    \node[pwsub, anchor=west] at (3.4,\y-0.16) {$\hat x_{\k}=$ \tok};
  }
  \node[pwsmalltxt, anchor=west] (s1) at (5.25,4.62) {$S=\{$};
  \node[pwsmall, anchor=west] (s1a) at ([xshift=2pt]s1.east) {are};
  \node[pwsmalltxt, anchor=west] at ([xshift=2pt]s1a.east) {$\}$};
  \node[pwtxt, anchor=west] (n1) at (5.25,4.08) {$N(S)=\{$};
  \node[pwnext, anchor=west] (n1a) at ([xshift=2pt]n1.east) {M3};
  \node[pwtxt, anchor=west] (n1z) at ([xshift=2pt]n1a.east) {$\}$};
  \node[pwsmalltxt, anchor=west] (s2) at (5.25,3.37) {$S=\{$};
  \node[pwsmall, anchor=west] (s2a) at ([xshift=2pt]s2.east) {are};
  \node[pwsmall, anchor=west] (s2b) at ([xshift=3pt]s2a.east) {blue};
  \node[pwsmalltxt, anchor=west] at ([xshift=2pt]s2b.east) {$\}$};
  \node[pwtxt, anchor=west] (n2) at (5.25,2.83) {$N(S)=\{$};
  \node[pwnext, anchor=west] (n2a) at ([xshift=2pt]n2.east) {M2};
  \node[pwtxt, anchor=west] (n2z) at ([xshift=2pt]n2a.east) {$\}$};
  \node[pwtxt, text=black!60] at (9.0,2.12) {M2 is not reached};
  \draw[black!30, dash pattern=on 2pt off 1.5pt, line width=0.4pt] (5.15,3.7) -- (12.85,3.7);
  \draw[black!30, dash pattern=on 2pt off 1.5pt, line width=0.4pt] (5.15,2.45) -- (12.85,2.45);
  \foreach \y in {4.33, 3.08}{
    \draw[pwframe] (9.3,\y-0.12) rectangle (11.3,\y+0.12);
    \draw[black, line width=0.7pt] (9.8,\y-0.2) -- (9.8,\y+0.2);
  }
  \fill[tzcommit] (9.3,4.33-0.12) rectangle (9.4,4.33+0.12);
  \fill[tzover]   (9.3,3.08-0.12) rectangle (10.55,3.08+0.12);
  \node[pwtxt] at (9.8,4.69) {$\theta$};
  \node[text=tzcommit, inner sep=0pt] at (9.0,4.33) {\ding{51}};
  \node[text=tzover,   inner sep=0pt] at (9.0,3.08) {\ding{55}};
  \node[pwok,  anchor=west] at (11.45,4.33) {$0.02\le\theta$};
  \node[pwbad, anchor=west] at (11.45,3.08) {$0.25>\theta$};
  \draw[-{Triangle[length=8pt,width=24pt]}, line width=14pt, draw=black!15] (13.0,3.74) -- (13.62,3.74);
  \foreach \y/\w in {5.6/Roses, 4.36/{red,}, 3.74/and, 3.12/violets}{ \node[pwtxt] at (14.1,\y) {\w}; }
  \node[pwcommit] at (14.1,4.98) {are};
  \node[pwbox]    at (14.1,2.5)  {M2};
  \node[pwcommit] at (14.1,1.88) {blue};
  \node[pwsub] at (14.1,6.05) {committed};
\end{tikzpicture}}

\providecolor{tzcommit}{HTML}{0F766E}
\providecolor{tzover}{HTML}{B42318}
\newcommand{\tfeightpt}{\fontsize{8}{9.6}\selectfont}
\newcommand{\tfsixpt}{\fontsize{6}{7.2}\selectfont}
\newcommand{\tffivept}{\fontsize{5.5}{6.6}\selectfont}
\tikzset{
  tftok/.style={font=\small, inner sep=0pt, execute at begin node=\strut},
  tfctx/.style={tftok, text=black!65},
  tfnew/.style={tftok, font=\small\bfseries, text=tzcommit},
  tfslot/.style={draw=black!30, fill=black!7, rounded corners=2pt, minimum width=0.55cm, minimum height=0.36cm,
                 inner sep=0pt, font=\tfeightpt, text=black!50},
  tfslotm/.style={tfslot, minimum width=0.5cm},
  tfblock/.style={draw=black!40, fill=black!4, rounded corners=4pt, line width=0.7pt},
  tfblocklbl/.style={font=\small, text=black!75, inner sep=0pt, execute at begin node=\strut},
  tfflow/.style={-{Stealth[length=4pt,width=3.4pt]}, line width=0.7pt, draw=black!45},
  tfwire/.style={-{Stealth[length=5pt,width=4.2pt]}, line width=0.7pt, draw=tzcommit},
  tfgroup/.style={draw=tzcommit, rounded corners=5pt, line width=0.8pt, fill=white},
  tfblk/.style={draw=tzcommit!65, fill=tzcommit!6, rounded corners=2pt, line width=0.6pt, minimum height=0.55cm,
                inner xsep=5pt, inner ysep=2pt, font=\tfeightpt, text=black!75, align=center},
  tffunnel/.style={draw=tzcommit!65, fill=tzcommit!6, line width=0.6pt},
  tfplus/.style={draw=tzcommit!70, fill=white, circle, inner sep=0pt, minimum size=0.3cm, font=\tfeightpt, text=tzcommit, line width=0.6pt},
  tfarchflow/.style={-{Stealth[length=3pt,width=2.6pt]}, line width=0.5pt, draw=tzcommit!70},
  tfaxis/.style={font=\tfeightpt, text=black!45, inner sep=0pt},
  tfdiag/.style={font=\small, text=black!35, inner sep=0pt},
}
\newif\iftfwhite
\newcommand{\tfcell}[3]{%
  \ifdim #3pt=0pt \node[tfdiag] at (#1,#2) {$\cdot$};
  \else
    \pgfmathsetmacro{\tfabs}{abs(#3)}\pgfmathsetmacro{\tfal}{min(100,210*\tfabs)}
    \ifdim #3pt>0pt \def\tfcol{tzcommit}\else\def\tfcol{tzover}\fi
    \fill[\tfcol!\tfal, rounded corners=1pt] ({#1-0.33},{#2-0.2}) rectangle ({#1+0.33},{#2+0.2});
    \pgfmathparse{\tfal>50}\ifnum\pgfmathresult=1 \def\tftxt{white}\else\def\tftxt{black!80}\fi
    \node[font=\tfeightpt, text=\tftxt, inner sep=0pt] at (#1,#2) {\tfnum{#3}};
  \fi}
\newcommand{\tfnum}[1]{\ifdim #1pt>0pt $\tfstrip\tfabs$\else $-\tfstrip\tfabs$\fi}
\def\tfstrip#1{\expandafter\tfstripz#1\relax}
\def\tfstripz#1#2\relax{\ifx0#1#2\else#1#2\fi}
\newcommand{\tbadapterfig}{%
\begin{tikzpicture}[x=1cm,y=1cm]
  \node[tfctx] at (0.45,4.5) {Roses};   \node[tfslot] at (1.2,4.5) {M1};  \node[tfctx] at (1.85,4.5) {red,};
  \node[tfctx] at (2.45,4.5) {and};     \node[tfctx] at (3.25,4.5) {violets};
  \node[tfslot] at (4.05,4.5) {M2};     \node[tfslot] at (4.75,4.5) {M3};
  \draw[tfblock] (0.15,2.7) rectangle (5.1,3.9);
  \node[tfblocklbl] at (2.6,3.3) {Frozen Transformer};
  \foreach \x in {1.2, 4.05, 4.75}{
    \draw[tfflow] (\x,4.28) -- (\x,3.9);
    \draw[tfflow] (\x,2.7) -- (\x,2.3);
  }
  \node[tfctx] at (0.45,2.05) {Roses};  \node[tfnew] at (1.2,2.05) {are};   \node[tfctx] at (1.85,2.05) {red,};
  \node[tfctx] at (2.45,2.05) {and};    \node[tfctx] at (3.25,2.05) {violets};
  \node[tfnew] at (4.05,2.05) {are};    \node[tfnew] at (4.75,2.05) {blue};
  \draw[tfwire] (5.1,3.3) -- (5.6,3.3);
  \draw[tfgroup] (5.6,1.85) rectangle (10.4,4.6);
  \node[fill=white, inner xsep=4pt, inner ysep=1pt, font=\tfeightpt\scshape, text=tzcommit, anchor=west] at (5.85,4.6) {learned adapter};
  \draw[tffunnel] (5.85,3.85) -- (5.85,2.75) -- (6.75,3.0) -- (6.75,3.6) -- cycle;
  \node[font=\tffivept, text=black!75, align=center] at (6.25,3.3) {down\\proj.};
  \draw[tfarchflow] (6.8,3.3) -- (7.1,3.3);
  \node[tfblk, minimum width=1.2cm] (tr) at (7.85,3.3) {$2\times$\\transformer};
  \node[tfblk, minimum width=1.3cm, minimum height=0.5cm] (rc) at (9.2,4.05) {receiver $i$};
  \node[tfblk, minimum width=1.3cm, minimum height=0.5cm] (sd) at (9.2,2.55) {source $j$};
  \draw[tfarchflow] (tr.north) |- (rc.west);
  \draw[tfarchflow] (tr.south) |- (sd.west);
  \node[tfplus] (pl) at (10.05,3.3) {$\otimes$};
  \draw[tfarchflow] (rc.east) -| (pl.north);
  \draw[tfarchflow] (sd.east) -| (pl.south);
  \draw[tfwire] (pl.east) -- (10.75,3.3);
  \foreach \c/\g in {1/M1, 2/M2, 3/M3} {\node[tfslotm] at ({11.1+\c*0.72},3.85) {\g};}
  \foreach \r/\g in {3.4/M1, 2.95/M2, 2.5/M3} {\node[tfslotm] at (11.1,\r) {\g};}
  \node[tfaxis] at (11.1,3.85) {$i\,\backslash\,j$};
  \tfcell{11.82}{3.4}{0}     \tfcell{12.54}{3.4}{0.03}  \tfcell{13.26}{3.4}{-0.02}
  \tfcell{11.82}{2.95}{0.12} \tfcell{12.54}{2.95}{0}    \tfcell{13.26}{2.95}{0.41}
  \tfcell{11.82}{2.5}{0.06}  \tfcell{12.54}{2.5}{0.09}  \tfcell{13.26}{2.5}{0}
  \node[tfblocklbl, align=center] at (12.54,4.5) {Entropy Finite\\Differences};
  \node[font=\footnotesize, inner sep=0pt] at (12.54,2.0) {$\hat{\mathbf D}_{23}\approx H_2(\mathbf x_t)-H_2(\mathbf x_t\oplus\hat x_3)$};
  \draw[tfflow] (12.95,2.16) -- (13.26,2.72);
\end{tikzpicture}}

\clearpage
\providecommand{\codefootnote}{\footnote{Code and adapters: \url{https://anonymous.4open.science/r/PIGDLM} (anonymized).}}
\section{Introduction}
\label{sec:introduction}

Masked diffusion language models (DLMs) enable parallel text generation by predicting many masked positions at once~\citep{austin2021structured,shi2024simplified,sahoo2024simple}.
Because these predictions are available from a single model evaluation, DLM decoders can reveal tokens in flexible orders and commit several positions in parallel, reducing the number of forward passes needed for generation.
Recent large-scale implementations demonstrate that this parallelism can yield substantial speedups: Nemotron-Labs-Diffusion~\citep{fu2026nemotron} decodes nearly $6\times$ more tokens per forward pass than Qwen3-8B at comparable accuracy, while DiffusionGemma~\citep{team2026diffusiongemma} reports up to $4\times$ faster token generation with competitive benchmark performance.

However, which positions should be unmasked in parallel?
A single DLM forward pass predicts every masked position from the same current canvas, so positions committed together cannot condition on one another.
This lack of intermediate conditioning can make individually plausible predictions jointly inconsistent: if one position could be either ``dog'' or ``cat'' and another either ``barks'' or ``meows,'' a parallel commit can produce incoherent pairs such as ``dog meows'' or ``cat barks.''~\citep{gu2018nonautoregressive,zhang2026generation}
Sequential decoding avoids this problem by updating the context after every commit, but doing so sacrifices the parallelism that makes DLMs attractive in the first place.
Parallel decoders must therefore decide which positions can be safely committed together and which should wait for additional context~\citep{campbell2022continuous,chen2025optimal}.

Existing parallel decoders select positions using two main signals: \emph{confidence} and \emph{dependence}.
Confidence-based methods commit positions whose individual predictions are sufficiently certain; Fast-dLLM~\citep{wu2025fastdllm}, for example, commits positions above a confidence threshold.
Dependency-based methods instead avoid jointly committing positions that appear strongly coupled: DAPD~\citep{kim2026dapd} constructs an attention-derived dependency graph, while DEMASK~\citep{ringel2026demask} learns pairwise conditional influence.
More broadly, these methods use confidence, dependence, or related signals to identify sets of positions that can be decoded together without intermediate conditioning.
A common underlying theoretical objective is to select positions with high conditional independence~\citep{wu2025fastdllm,zhang2026generation,chen2025optimal}.

In this work, we instead target a complementary objective: preserving the one-token-at-a-time decoding trajectory while committing multiple positions in parallel.
For any candidate sequence of positions proposed for a parallel round, we ask whether the same model would produce the same tokens if those positions were instead decoded one at a time in that sequence.
We call any resulting disagreement \emph{sequential--parallel inconsistency}.
If this inconsistency could be predicted in advance, parallel decoding would amount to selecting as many positions as possible while keeping the disagreement low.

But how can the decoder know this inconsistency in advance?
Naively, it would have to replay the candidate sequence one token at a time, defeating the computational advantage of parallel decoding.
Yet this sequential replay turns out not to be necessary.
We find that a lightweight adapter to the frozen DLM can accurately predict how committing one candidate would change the model's predictions at other positions using only information from the current forward pass.
This gives rise to a simple family of parallel decoding algorithms that we call \emph{Parallel Inconsistency Guidance} (\PIG{}).

Empirically, \PIG{} extends the accuracy--TPF frontier on LLaDA-8B-Instruct and Nemotron-Labs-Diffusion-3B-Instruct across four math and code benchmarks and multiple choice questions (MMLU), with its largest gains at high parallelism.
Against the strong Fast-dLLM baseline~\citep{wu2025fastdllm}, \PIGint{} reaches up to $1.7\times$ higher tokens per forward (TPF) at matched accuracy on LLaDA and $1.4\times$ over Nemotron's native confidence decoder; at the fastest confidence settings, \PIG{} retains up to $28$ additional accuracy points.
\PIG{} also improves over dependency-guided decoders, reaching up to $4.4\times$ higher TPF, and compares favorably with learned unmasking policies~\citep{jazbec2025unmasking}.
Notably, these gains require only a ${\sim}2$M-parameter predictor that is trained once per backbone on a single data mixture and consumes less than $0.1\%$ of the total forward pass MACs.
In summary, our contributions are as follows.\codefootnote{}

\begin{itemize}[leftmargin=1.2em, itemsep=2pt, topsep=2pt]

\item \textbf{Parallel Inconsistency Guidance.}
We formulate parallel decoding around \emph{sequential--parallel inconsistency}: the disagreement introduced by committing a set of positions in parallel rather than decoding the same positions one at a time.
This yields a family of parallel commit-set selection rules: \PIGint{} controls inconsistency within the current parallel round, while \PIGext{} controls inconsistency that can be avoided by deferring positions to a later round.

\item \textbf{Efficient one-pass prediction.}
We show that sequential--parallel inconsistency can be predicted before any parallel commits are made using information already available in the current DLM forward pass.
A ${\sim}2$M-parameter adapter to the frozen backbone, learned on a single dataset mixture, estimates the finite-difference structure used for selection, reaching $r\approx0.8$ per edge and $r\approx0.9$ on the aggregate statistics used by the decoder on held-out benchmarks.

\item \textbf{Strong parallel decoding across models.}
Across LLaDA and Nemotron on GSM8K, MATH-500, MBPP, and HumanEval, and MMLU, \PIG{} improves the accuracy--TPF frontier over confidence-based decoding, with its largest gains at aggressive parallelism.
We additionally compare against dependency-guided decoders and learned unmasking policies, and show that the same approach transfers across two DLM architectures with different inference structures.

\end{itemize}

\section{Related Work}
\label{sec:related}

\textbf{Confidence-based parallel decoding.}
Confidence is the most common criterion for parallel unmasking, from confidence-ranked masked generation~\citep{ghazvininejad2019maskpredict,chang2022maskgit} to top-$k$ selection~\citep{nie2025large}, confidence thresholds~\citep{wu2025fastdllm,cai2026confidence}, and entropy criteria~\citep{benhamu2025ebsampler}.
The intuition is that confident predictions are less likely to change as context grows, so several tokens can often be committed together safely.
Fast-dLLM~\citep{wu2025fastdllm}, for example, commits all positions above a confidence threshold, as does Nemotron's native diffusion decoder~\citep{fu2026nemotron}.
Other methods use entropy bounds~\citep{benhamu2025ebsampler}, adaptive confidence rules~\citep{liang2026cd4lm}, token-level stability criteria~\citep{kim2025klass}, or learned unmasking policies~\citep{jazbec2025unmasking}.
\PIG{} instead evaluates a proposed parallel commit set by its predicted disagreement with sequential decoding.

\textbf{Dependency-aware parallel decoding.}
A complementary line of work selects commits using dependencies among masked positions, estimated from attention~\citep{deng2026agdo,zhou2026attnsampler,kim2026dapd,sahin2026adas}, clustering~\citep{qi2026clad}, or learned conditional influence~\citep{ringel2026demask}.
The intuition is that plausible predictions can still conflict when committed together if they depend strongly on one another.
AGDO~\citep{deng2026agdo} and DAPD~\citep{kim2026dapd} use attention structure to guide decode order or avoid interacting positions, while CLAD~\citep{qi2026clad} and ADAS~\citep{sahin2026adas} use cluster-level and adaptive criteria.
DEMASK~\citep{ringel2026demask} instead learns pairwise conditional influence and constrains cumulative dependence within a parallel set.
Recent theory similarly relates parallelization error to distributional dependence, including conditional total correlation and inference scheduling~\citep{campbell2022continuous,chen2025optimal,zhang2026generation}.
\PIG{} uses these interactions only insofar as they predict whether parallel decoding will disagree with its sequential counterpart.

\textbf{Blockwise and hybrid decoding.}
Another line of work changes the generation protocol through blockwise decoding~\citep{arriola2025bd3lm}, alternative generation orders~\citep{hong2026oemdm}, or hybrid autoregressive--diffusion decoding~\citep{fu2026nemotron}.
The intuition is that structured canvas states can retain diffusion parallelism while recovering autoregressive properties such as stable prefixes and efficient caching.
Block Diffusion~\citep{arriola2025bd3lm} generates blocks autoregressively while denoising positions within each block, while OeMDM~\citep{hong2026oemdm} studies alternative diffusion generation orders.
Nemotron~\citep{fu2026nemotron} supports autoregressive, diffusion, and self-speculative decoding in one model, using diffusion proposals with autoregressive verification.
\PIG{} is complementary to these decoding protocols: given the positions currently available for decoding, it decides which should be committed together.

\section{Background}
\label{sec:background}

\textbf{Masked diffusion language models.}
Let $\mathcal{V}$ be a vocabulary and let $\mathbf{m}\notin\mathcal{V}$ denote a special \emph{mask}.
A generation canvas is a partially masked token sequence $\mathbf{x}\in(\mathcal{V}\cup\{\mathbf{m}\})^L$ with masked positions $M (\mbf{x}) =\{i\in[L]:x_i=\mathbf{m}\}$.
In contrast to autoregressive models that predict only the next token, a masked diffusion language model (MDLM)~\citep{austin2021structured,shi2024simplified,sahoo2024simple} produces a vocabulary distribution \(\Delta (\mcal{V})\) at every masked position in a single forward pass.
For a canvas \(\mbf{x}\) where \(i\) is masked, write $\hat{x}_i$ for its greedy prediction and $\nu_i$ for its confidence:
\begin{equation}
    q_\theta^i(\cdot\mid\mathbf{x})\in\Delta(\mathcal{V}),
    \qquad
    \hat{x}_i:=\arg\max_{v\in\mathcal{V}}q_\theta^i(v\mid\mathbf{x}),
    \qquad
    \nu_i:=q_\theta^i(\hat{x}_i\mid\mathbf{x}),
    \qquad i\in M (\mbf{x}).
    \label{eq:mdlm-output}
\end{equation}
In a decoding round, one first performs model evaluation on the canvas \(\mbf{x}\), selects a subset of masked positions \(S \subseteq M (\mbf{x})\), assigns each \(i \in S\) a token by sampling or argmax, and finally commits the tokens by updating \(\mbf{x}\) at each \(i \in S\).
Suppose \(y_S \in \mcal{V}^{\abs{S}}\) were the predicted tokens for \(S\), then let \(\mbf{x} \oplus y_S\) denote the updated canvas.
In this work, we focus on greedy (argmax) decoding.

\textbf{Sequential and parallel decoding.}
We next formalize our specific notion of sequential and parallel decoding.
Suppose \(S = (s_1, \ldots, s_k) \subseteq M (\mbf{x})\) is a subset (sequence) of such masked positions ordered by confidence, i.e., \(\nu_{s_i} \geq \nu_{s_j}\) for all \(i \geq j\).
Then for a canvas \(\mbf{x}\), the \textit{sequential reference} \(x_S ^\star\) is the tokens obtained by greedily decoding along \(S\).
We equivalently describe this below via stylized canvas transitions, where each arrow denotes one model evaluation followed by an update:
\begin{equation}
    \mathbf{x}
    \xrightarrow{x^\star_{s_1}}
    \mathbf{x}\oplus x^\star_{s_1}
    \xrightarrow{x^\star_{s_2}}
    \cdots
    \xrightarrow{x^\star_{s_k}}
    \mathbf{x}\oplus x^\star_S,
    \qquad
    x^\star_{s_i}:=\arg\max_{v\in\mathcal{V}}
    q_\theta^{s_i}\!\left(v\mid\mathbf{x}\oplus x^\star_{S_{<i}}\right).
    \tag{Sequential}
    \label{eq:background-seq}
\end{equation}
Here, $S_{<i}=(s_1,\ldots,s_{i-1})$ is a prefix of \(S\), where let \(S_{<1} = \emptyset\).
In contrast, our notion of parallel decoding unmasks all the argmax predictions \(\hat{x}_S\) from a single model evaluation, as follows:
\begin{equation}
    \mathbf{x}
    \xrightarrow{\hspace{5em}\hat{x}_S\hspace{5em}}
    \mathbf{x}\oplus\hat{x}_S,
    \qquad
    \hat{x}_{s_i}:=\arg\max_{v\in\mathcal{V}}
    q_\theta^{s_i}(v\mid\mathbf{x}).
    \tag{Parallel}
    \label{eq:background-par}
\end{equation}
That is, sequential decoding conditions later predictions on earlier commits in $S$ and requires $|S|$ model evaluations, while parallel decoding skips these intermediate updates and requires only one.
We use the sequential output $x^\star_S$ as the reference that the parallel output $\hat{x}_S$ attempts to match, and measure their disagreement via the edit (Hamming) distance.

\section{Methodology}
\label{sec:method}

Our main idea is to view sequential--parallel inconsistency as \textit{regret}, and use it to decide which positions to decode.
This perspective yields two complementary algorithms: \PIGint{} (\cref{sec:pig-int}) controls regret within the current round, while \PIGext{} (\cref{sec:pig-ext}) controls the future regret that can be avoided by delaying commits; both rely on a lightweight predictor described in~\cref{sec:implementation}.

\begin{figure}[!t]
    \centering
    \squishfig{\tbpigintwalk}{0.90}
    \caption{\textbf{\PIGint{} controls single-round regret.}    
    One forward pass (\textsc{nfe}) predicts every masked slot and ranks them by confidence; that ranking is the traversal order (I).
    Candidates are iteratively added to $S$, each with its predicted incremental regret $b_i=\hat p_\nu(\nu_i,\hat\mu_i(S))$ (II), and admitted only while the predicted regret $\hat R(S)$ fits the budget $\varphi$ (III); the first candidate that does not fit closes the round.
    We then greedily decode all tokens in \(S\) in parallel, before starting the next round.
    }
    \label{fig:pig-int}
\end{figure}

\subsection{Parallel decoding as regret-constrained subset selection}
\label{sec:pig}
\label{sec:pig-int}

Given a partially masked canvas $\mathbf{x}$, let $S=(s_1,\ldots,s_k)$ be a confidence ordered sequence of positions from $M(\mbf{x})$ to be committed in parallel.
For a single round, our idealized objective is to maximize the number of parallel commits subject to a regret bound:
\begin{equation}
    \max_{\substack{S\subseteq M(\mathbf{x})\\ \text{confidence ordered}}}|S|
    \quad
    \text{subject to}
    \quad
    R_{\mathbf{x}}(S)
    :=
    \sum_{i \in S}
    \mathbf{1}\left\{
        x^\star_{s_i}\neq\hat{x}_{s_i}
    \right\}
    \leq \varphi.
    \label{eq:pig-oracle}
\end{equation}
We call $R_{\mathbf{x}}(S)$ the \emph{single-round regret}, the realized number of mismatches between the parallel output $\hat{x}_S$ and sequential reference $x^\star_S$.
However, solving this objective exactly is impractical for two main reasons: searching over ordered candidate sequence \(S\) is combinatorial, and even then, computing $R_{\mathbf{x}}(S)$ requires sequential replay at the cost of \(|S|\) model evaluations.

\textbf{Idea 1: Confidence ordering reduces the problem to prefix search.}
To avoid searching jointly over candidate orderings, we use current-pass confidence to define a fixed proposal order.
Specifically, we sort the masked positions so that $\nu_{s_1} \ge \nu_{s_2} \ge \ldots$ and scan candidates for \(S\) in that order.
This reduces the search over ordered candidate sets to choosing the longest \textit{prefix} of the confidence ordering that can be committed within the regret budget.

\textbf{Idea 2: Prefix regret is predictable from a single model evaluation.}
Even with the masked positions ordered, exactly computing each \(R_\mbf{x} (S)\) would still require sequential replay at the cost of \(|S|\) evaluations. However, a disagreement at the candidate can only arise if revealing $S$ moves the candidate's distribution over the vocabulary enough that the argmax changes.
Thus, inconsistency is governed by the slack in the top-1 vs top-2 predicted probabilities and by the size of the perturbation $S$ applies to it, both of which a single forward pass on \(\mbf{x}\) exposes up to two deliberate approximations: the slack is approximated via confidence (or entropy), and the perturbation through a sum of pairwise perturbation estimates over $j \in S$, the first-order term of a set effect whose higher-order interactions we truncate (\Cref{app:theory}).
We defer implementation details of \(\hat{R}_\mbf{x} (S)\) to~\cref{sec:implementation}, and we next present a simple greedy algorithm that we call \PIGint{}, also illustrated in~\cref{fig:pig-int}

\textbf{The \PIGint{} algorithm}
Given a canvas \(\mbf{x}\) and budget \(\varphi\), first initialize \(S = \varnothing\).
Then, run a single forward pass on \(\mbf{x}\) and sort the masked positions \(M(\mbf{x})\) in decreasing confidence.
Next, for each \(i \in M(\mbf{x})\), if \(\hat{R}_\mbf{x} (S \cup \{i\}) \leq \varphi\), we update \(S \gets S \cup \{i\}\).
Otherwise, we terminate this parallel decoding round and update the canvas as \(\mbf{x} \oplus \hat{x}_S\), where \(\hat{x}_S\) are the argmax tokens at \(S\).

\subsection{Look-ahead decoding as cross-round regret}
\label{sec:pig-ext}

We next take a step back and ask: \emph{can slowing down help?}
Once the masked positions are ordered, parallel decoding reduces to partitioning this ordering into rounds, with one model evaluation per round.
Whereas \PIGint{} greedily places each round boundary using single-round regret, we instead consider whether committing fewer tokens \textit{now} can reduce regret \textit{later}.

\begin{figure}[t]
    \centering
    \squishfig{\tbpigextwalk}{0.90}
    \caption{\textbf{Cross-round regret measures the value of waiting.}
    On the canvas and confidence order of \Cref{fig:pig-int} (I), \PIGext{} admits each candidate and then prices the next candidate $N(S)$: $c_i$ is how much the mismatch risk of $i$ would drop after first committing $S$ (II).
    The round continues while $\hat R(N(S))-\hat R(N(S)\mid S)\le\theta$ (III): committing ``are'' changes little for M3, so the walk widens; committing ``blue'' would turn M2 from ``look'' into ``are'', so waiting is worth more than $\theta$, the round closes and commits only ``are'' and ``blue''.
    We implement with \(\abs{N(S)} = 16\) look-ahead positions.
    }
    \label{fig:value-of-waiting}
\end{figure}

For the fixed confidence ordering, let \(S\) be a candidate prefix and let \(N(S)\) denote the next candidate window in that ordering.
We write \(R_{\mathbf{x}}(N(S)\mid S)\) for the regret of decoding \(N(S)\) after first committing \(\hat{x}_S\) and re-evaluating, while keeping the sequential reference anchored at \(\mathbf{x}\).
We then define the \emph{cross-round regret}:
\begin{equation}
    C_{\mathbf{x}}(S)
    :=
    \underbrace{
        R_{\mathbf{x}}(S\cup N(S))-R_{\mathbf{x}}(S)
    }_{\text{decode $N(S)$ now}}
    -
    \underbrace{
        R_{\mathbf{x}}(N(S)\mid S)
    }_{\text{decode $N(S)$ after waiting}}.
    \label{eq:pig-ext-regret}
\end{equation}
Thus, a large positive $C_{\mathbf{x}}(S)$ means that waiting one round would avoid substantial regret on the upcoming candidates.
Using similar machinery as \PIGint{}; exactly computing cross-round regret requires additional model evaluations, but predicting it instead only requires a forward pass on \(\mbf{x}\).
We defer implementation to~\cref{sec:implementation}.
Nevertheless, access to a cross-round regret estimator yields a simple future--regret-aware algorithm, illustrated in~\cref{fig:value-of-waiting} and described next.

\textbf{\PIGext{}: controlling cross-round regret.}
\PIGext{} scans candidates in decreasing confidence, adding each candidate to $S$ before evaluating
$\hat{C}_{\mathbf{x}}(S)$ on the next window $N(S)$.
If $\hat{C}_{\mathbf{x}}(S)\le\theta$, waiting is predicted to buy little and the scan continues.
Otherwise, \PIGext{} closes the round and commits the current $S$ in parallel.

\subsection{A lightweight adapter enables one-pass regret prediction}
\label{sec:implementation}

\begin{figure}[!t]
    \centering
    \squishfig{\tbadapterfig}{0.85}
    \caption{\textbf{Finite Differences Estimator (FDE).}
    The frozen DLM's pass feeds a learned adapter (${\approx}2.2$M parameters), which returns in the same pass the full directed matrix $\hat{\mathbf D}$ of entropy finite differences of \Cref{eq:finite-difference}, $\hat{\mathbf D}_{ij}\approx H_i(\mathbf x_t)-H_i(\mathbf x_t\oplus\hat x_j)$, the change in uncertainty at $i$ if $j$ were committed. Full implementation and training details in (\Cref{app:estimator}).}
    \label{fig:adapter}
\end{figure}

In \Cref{sec:pig}, we argued that sequential--parallel disagreement is governed by two signals: the slack in the argmax's probability, and the perturbation that committing $S$ applies to its distribution.
We summarize the slack by confidence $\nu_i$ or predictive entropy $H_i(\mbf{x}) := H(q_\theta^i(\cdot\mid\mbf{x}))$; the perturbation is counterfactual.
For masked positions $i,j\in M(\mbf{x})$, we use a lightweight adapter to estimate the pairwise perturbation through the directed entropy finite difference
\begin{equation}
    \hat{\mathbf{D}}_{ij}(\mbf{x})
    \approx
    H_i(\mbf{x})
    -
    H_i(\mbf{x}\oplus\hat{x}_j),
    \label{eq:finite-difference}
\end{equation}
where $\hat{x}_j$ is the current argmax at $j$.
Thus, $\hat{\mathbf{D}}_{ij}$ measures how committing $j$ changes uncertainty at $i$, but does not itself determine whether the argmax changes.
Learning all $\hat{\mathbf{D}}_{ij}$ requires counterfactual backbone evaluations, so a lightweight adapter instead predicts the full matrix $\hat{\mbf D}$ from the current decoding state (\cref{fig:adapter,app:estimator}).
Motivated by this representation, \PIGint{} summarizes a candidate's interaction with the accepted prefix as
\begin{equation}
    \hat{\mu}_i(S)
    :=
    \sum_{j\in S:\nu_j>\nu_i} 
    \max\left(
        |\hat{\mbf D}_{ij}|,
        |\hat{\mbf D}_{ji}|
    \right).
    \label{eq:aggregate-influence}
\end{equation}
This is a deliberately low-order summary, rather than an exact reconstruction of the set effect. In \Cref{app:theory}, we show that $\hat{\mu}_i(S)$ is the sum of first-order approximations of the larger entropy perturbation from $S$.
From these extracted features from a single forward pass, we calibrate disagreement jointly from confidence and pairwise-sum of perturbations using offline sequential replays:
\begin{equation}
\begin{aligned}
    \hat p_{\nu}(\nu,\mu)
    &\approx
    \Pr\left(
        x_i^\star\neq\hat{x}_i
        \mid
        \nu_i=\nu,\,
        \hat{\mu}_i=\mu
    \right),
    \\
    \hat p_H(H,\mu)
    &\approx
    \Pr\left(
        x_i^\star\neq\hat{x}_i
        \mid
        H_i=H,\,
        \hat{\mu}_i=\mu
    \right).
\end{aligned}
    \label{eq:mismatch-table}
\end{equation}
\PIGint{} then sums the confidence-conditioned mismatch probabilities along the prefix:
\begin{equation}
    \hat R_{\mbf{x}}(S)
    :=
    \sum_{i\in S}
    \hat p_\nu\left(
        \nu_{s_i},
        \hat{\mu}_{s_i}(S_{<i})
    \right).
    \label{eq:predicted-regret}
\end{equation}
The same representation can also estimate the value of waiting.
Because the canvas after committing $S$ is unavailable when the decision is made, \PIGext{} approximates each upcoming candidate's next-round entropy using the largest predicted directed entropy reduction from $S$:
\begin{equation}
    \hat H_i(\mbf{x}\mid S)
    :=
    H_i(\mbf{x})
    -
    \max_{j\in S}\hat{\mbf D}_{ij}.
    \label{eq:future-entropy}
\end{equation}
We then evaluate the entropy-conditioned disagreement model at the current and predicted next-round entropies:
\begin{equation}
    \hat C_{\mbf{x}}(S)
    :=
    \sum_{i\in N(S)}
    \left[
        \hat p_H\left(
            H_i(\mbf{x}),
            \hat{\mu}_i(S\cup N(S))
        \right)
        -
        \hat p_H\left(
            \hat H_i(\mbf{x}\mid S),
            \hat{\mu}_i(S\cup N(S))
        \right)
    \right].
    \label{eq:predicted-cross-regret}
\end{equation}
Their difference estimates the regret avoided by waiting.
The adapter and calibration tables are fitted once per backbone on a \textbf{single dataset mixture} ($3721$ examples drawn from OpenCoder, GSM8K-train, MATH-train, and MBPP-train), and reused at inference without additional backbone evaluations.
The adapter has approximately $2$M parameters and costs less than $0.1\%$ of a backbone forward pass in MACs; \cref{app:estimator} evaluates prediction quality, while \cref{app:theory} gives guarantees conditional on calibration.

\section{Experiments}
\label{sec:experiments}

\begin{figure}[!t]
    \centering
    \includegraphics[width=\textwidth]{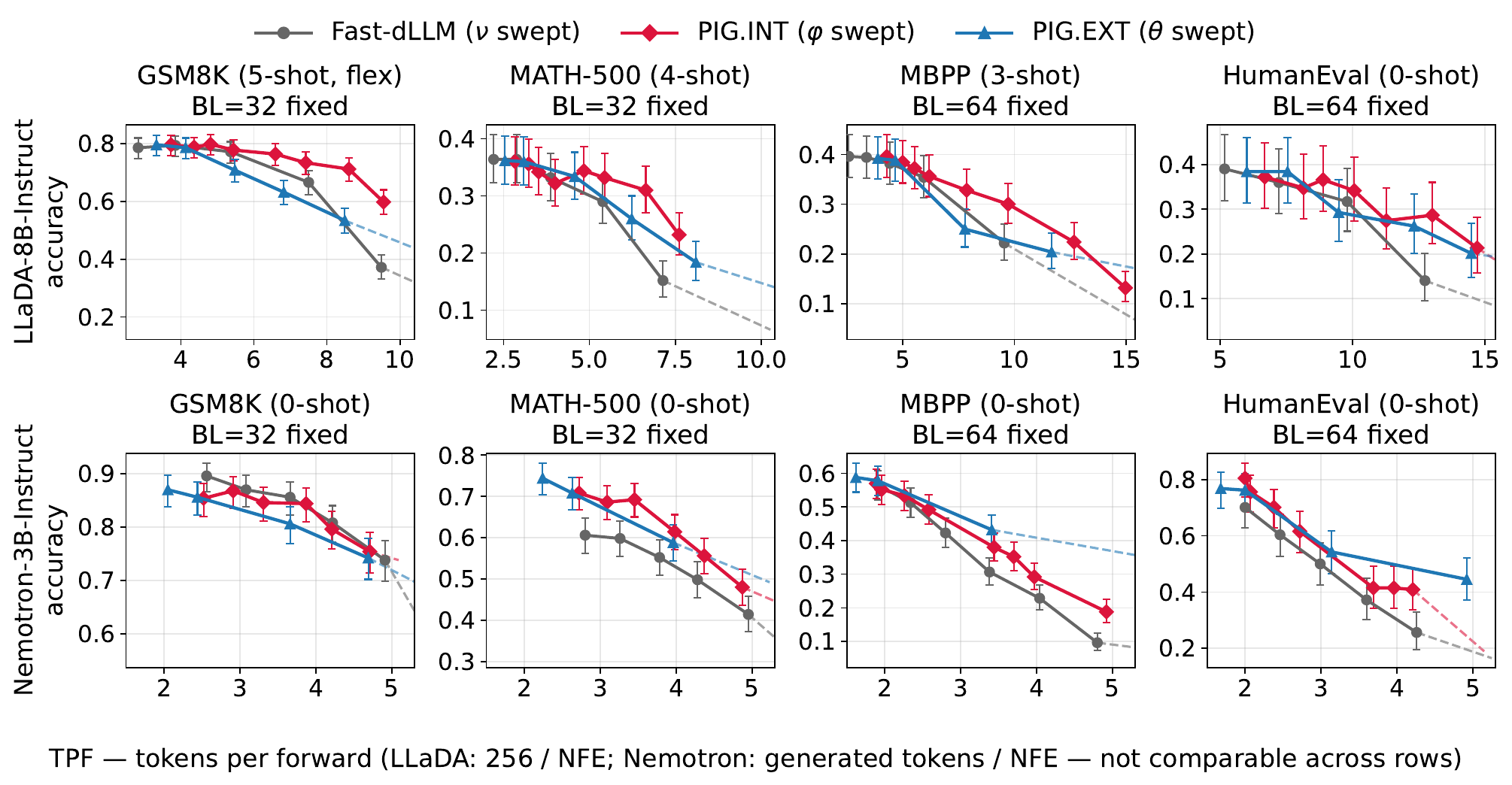}
    \caption{
    \textbf{\PIG{} improves the accuracy-TPF frontier of parallel decoding}
    On LLaDA-8B-Instruct (top) and Nemotron-Labs-Diffusion-3B-Instruct (bottom), both \PIGint{} (red) and \PIGext{} (blue) outperform a strong confidence-based baseline, Fast-dLLM (gray).
    We report Wilson $95\%$ error bars, and full numerical details in~\cref{app:results:grids,app:results:nemotron:grids}.
    }
    \label{fig:main-frontier}
\end{figure}

We now evaluate \PIG{} on parallel decoding, focusing on the tokens-per-forward (TPF) metric, the average number of tokens generated per model evaluation.
We give training details in~\cref{app:estimator}, and full evaluations in \Cref{app:results:grids}.

\textbf{Models, benchmarks, and baselines.}
We evaluate LLaDA-8B-Instruct~\citep{nie2025large} and Nemotron-Labs-Diffusion-3B~\citep{fu2026nemotron} on GSM8K, MATH-500, MBPP, and HumanEval.
For LLaDA, we use a $256$-token canvas and follow the Fast-dLLM~\citep{wu2025fastdllm} evaluation setup: GSM8K is 5-shot, MATH-500 4-shot, MBPP 3-shot, and HumanEval 0-shot.
For Nemotron, we use a $2048$-token canvas and evaluate all four benchmarks 0-shot.
Our primary baseline is confidence-threshold decoding with the same eligible positions as PIG: Fast-dLLM for LLaDA and Nemotron's native confidence decoder for Nemotron.
We use greedy commits throughout, report Wilson $95\%$ intervals, and compare TPF only within each model.

\textbf{Evaluation and PIG configurations.}
For GSM8K and MATH-500, we report flexible extraction in the main text and defer strict extraction to \Cref{app:results}; code is evaluated with pass@1.
\PIGint{} sweeps the single-round regret budget $\varphi$, while \PIGext{} sweeps the cross-round regret budget $\theta$; for LLaDA, we use $\varphi\in[0.015,3]$ and $\theta\in[0.5,4]$ with  \PIGext{} look-ahead window as $|N(S)|=16$.
On both models the \PIGext{} lane is the same operating rule: its two slowest points additionally floor admission at a confidence $\nu$ ($\theta{=}0.125$ with $\nu{=}0.9$ and $\theta{=}0.25$ with $\nu{=}0.8$), followed by the plain $\theta$ sweep.
The finite-difference estimator is trained once per backbone on training-side GSM8K, MATH, OpenCoder, and MBPP data, while the regret calibration tables are fitted on GSM8K-train and OpenCoder.
Both are reused unchanged across evaluation benchmarks, decoding budgets, and the semi-auto regressive block regimes considered below.

\textbf{PIG improves the accuracy--TPF frontier}
\label{sec:exp-frontier}

We evaluate \PIGint{} and \PIGext{} on LLaDA-8B-Instruct~\citep{nie2025large} and Nemotron-Labs-Diffusion-3B-Instruct~\citep{fu2026nemotron} on GSM8K, MATH-500, MBPP, and HumanEval, against Fast-dLLM~\citep{wu2025fastdllm} (on Nemotron it is the model's own decoder).
All three decoders share prompts, greedy commits, and the evaluation protocol; each is swept over its single knob (the threshold $\nu$, the budget $\varphi$, the budget $\theta$), and we measure decoding efficiency in TPF. For LLaDA-8B-Instruct, we additionally compare against DAPD, EB-Sampler, KLASS, and other learned policies. DAPD we run ourselves from the authors' released code under our protocol, so that comparison is paired; EB-Sampler and KLASS enter as the single operating points reported by \citet{kim2026dapd}, whose prompts and grader differ from ours (\Cref{app:results:dapd} measures the offset at the shared Fast-dLLM point).
\Cref{fig:main-frontier} shows the math benchmarks under fixed $32$-token blocks and the code benchmarks under fixed $64$-token blocks; the single-block and remaining block regimes, with every operating point tabulated, are in \Cref{app:results}.
Both decoders read their finite differences from one ${\sim}2$M-parameter adapter on the frozen backbone, fitted once per model on a \textbf{single data mixture} of planted entropy finite differences and never updated while decoding.
On held-out benchmarks it recovers the gold plants at a per-edge correlation of about $0.8$, and higher on the aggregate measurement the decoders actually consume, the pairwise-sum of perturbations $\hat\mu_i$.
\Cref{app:estimator} gives its architecture, training data and objective, and evaluates it on both backbones, the Nemotron twin at a block size held out of training.

\begin{table}[t]
\centering
\providecolor{deltapos}{HTML}{0F766E}\providecolor{deltaneg}{HTML}{B42318}
\caption{\textbf{Matched Accuracy TPF Increase.} Each cell: our TPF over that baseline's best operating point, and in parentheses our accuracy minus theirs at the \emph{measured} cell of ours that delivers it. Every cell is from \Cref{fig:llada-dapd}. Fast-dLLM and both DAPD variants are our own runs; EB-Sampler and KLASS are the cells published by \citet{kim2026dapd}. LLaDA-8B-Instruct, BL$=$64 fixed.}
\label{tab:matched-int}
\footnotesize
\setlength{\tabcolsep}{4pt}
\fitwidth{\begin{tabular}{l!{\color{black!22}\vrule width 0.4pt}r!{\color{black!22}\vrule width 0.4pt}r!{\color{black!22}\vrule width 0.4pt}r!{\color{black!60}\vrule width 0.7pt}r!{\color{black!22}\vrule width 0.4pt}r}
\toprule
 & \multicolumn{3}{c!{\color{black!60}\vrule width 0.7pt}}{our protocol (paired)} & \multicolumn{2}{c}{their protocol \citep{kim2026dapd}} \\
\cmidrule(lr){2-4} \cmidrule(lr){5-6}
benchmark & Fast-dLLM & DAPD-Direct & DAPD-Staged & EB-Sampler & KLASS \\
\midrule
\textit{PIG.INT} & \multicolumn{5}{c}{relative TPF increase (\textcolor{deltapos}{$+$} or \textcolor{deltaneg}{$-$} accuracy)} \\
\cmidrule(lr){2-6}
GSM8K & $1.15\times$ ($+0.0$) & $1.77\times$ (\textcolor{deltapos}{$+2.0$}) & $2.62\times$ (\textcolor{deltapos}{$+0.6$}) & $2.51\times$ (\textcolor{deltapos}{$+0.5$}) & $2.71\times$ (\textcolor{deltapos}{$+1.0$}) \\
MATH-500 & $1.13\times$ (\textcolor{deltaneg}{$-0.2$}) & $1.65\times$ (\textcolor{deltapos}{$+1.0$}) & $1.82\times$ (\textcolor{deltapos}{$+0.6$}) & $3.38\times$ (\textcolor{deltapos}{$+1.2$}) & $4.38\times$ (\textcolor{deltapos}{$+1.4$}) \\
MBPP & $1.66\times$ ($+0.0$) & $0.83\times$ (\textcolor{deltapos}{$+1.0$}) & $1.85\times$ (\textcolor{deltapos}{$+0.4$}) & $2.60\times$ (\textcolor{deltapos}{$+3.0$}) & $2.05\times$ (\textcolor{deltapos}{$+4.0$}) \\
HumanEval & $1.29\times$ (\textcolor{deltaneg}{$-1.8$}) & $1.46\times$ (\textcolor{deltapos}{$+2.4$}) & $4.28\times$ (\textcolor{deltapos}{$+3.1$}) & $2.88\times$ ($+0.0$) & $3.90\times$ (\textcolor{deltaneg}{$-0.6$}) \\
\midrule
\textit{PIG.EXT} & \multicolumn{5}{c}{relative TPF increase (\textcolor{deltapos}{$+$} or \textcolor{deltaneg}{$-$} accuracy)} \\
\cmidrule(lr){2-6}
GSM8K & $0.85\times$ (\textcolor{deltaneg}{$-0.6$}) & $1.04\times$ (\textcolor{deltapos}{$+4.4$}) & $1.91\times$ (\textcolor{deltapos}{$+1.4$}) & $1.47\times$ (\textcolor{deltapos}{$+2.9$}) & $1.59\times$ (\textcolor{deltapos}{$+3.4$}) \\
MATH-500 & $0.87\times$ (\textcolor{deltapos}{$+0.6$}) & $1.00\times$ (\textcolor{deltapos}{$+2.4$}) & $1.57\times$ (\textcolor{deltapos}{$+0.8$}) & $2.84\times$ (\textcolor{deltapos}{$+2.0$}) & $2.97\times$ (\textcolor{deltapos}{$+3.8$}) \\
MBPP & $1.50\times$ (\textcolor{deltaneg}{$-0.4$}) & $0.90\times$ (\textcolor{deltapos}{$+0.2$}) & $1.73\times$ (\textcolor{deltapos}{$+0.8$}) & $2.40\times$ (\textcolor{deltapos}{$+1.0$}) & $0.98\times$ (\textcolor{deltapos}{$+12.8$}) \\
HumanEval & $1.46\times$ (\textcolor{deltaneg}{$-0.6$}) & $1.10\times$ (\textcolor{deltapos}{$+6.7$}) & $4.06\times$ (\textcolor{deltapos}{$+0.6$}) & $3.25\times$ (\textcolor{deltapos}{$+1.2$}) & $4.40\times$ (\textcolor{deltapos}{$+0.6$}) \\
\bottomrule
\end{tabular}}
\end{table}

PIG consistently improves the accuracy--TPF frontier over Fast-dLLM, with the largest gains appearing as decoding becomes more parallel. We observe that globally, \PIGint{} performs best at the slower end of the frontier while \PIGext{} performs best at the faster end of the frontier. We note that through our experiments, PIG helps globally on the whole frontier on all tested datasets and block sizes with the most impressive results occurring at medium length block sizes. 

\Cref{tab:matched-int} reports the relative increase in TPF of PIG on LLaDA-8B-Instruct against Fast-DLLM and dependency-aware decoders DAPD~\citep{kim2026dapd}, EB-Sampler~\citep{benhamu2025ebsampler}, and KLASS~\citep{kim2025klass}, at the four-block ($64$-token) semi-autoregressive block decoding.
At matched accuracy \PIGint{} decodes $1.5$ to $1.8\times$ more tokens per forward than DAPD-Direct on three of the four benchmarks, and $0.8\times$ on MBPP, in every case at equal or higher accuracy; against the reported EB-Sampler and KLASS points it decodes with $2.1$ to $4.4\times$ higher TPF. Meanwhile, we observe that \PIGext{} struggles a bit more at the slower end of the accuracy-TPF frontier. Against Fast-dLLM, \PIGext{} decodes faster on code tasks only at $1.46-1.50\times$ higher TPF; against DAPD-Direct  it achieves an average increase of 3.4 percentage points at similar TPF ($0.9-1.1\times$); against the reported EB-Sampler and KLASS points it decodes $1.5$ to $4.4\times$ more tokens per forward.
\Cref{app:analyses} draws the full frontiers behind these numbers (\Cref{fig:llada-dapd}) and shows that PIG matches, at their own operating points, the RL unmasking policies of \citet{jazbec2025unmasking} that are trained per benchmark (\Cref{fig:llada-jazbec-bl32}).

In contrast, \Cref{tab:matched-ext} reports interpolated accuracy of the same set-up at fixed throughputs per model. At $10$ TPF on LLaDA, \PIGint{} is the most accurate lane on every benchmark, $4$ to $19$ points above Fast-dLLM; at $15$ TPF \PIGext{} takes over on GSM8K and MBPP with $10$ to $14$ points over Fast-dLLM, and on MATH-500 it is the only lane still measured at $15$ TPF.
On Nemotron-3B-Instruct, the three lanes stay within a few points of one another on GSM8K, while on code \PIGext{} retains $36.9$ against $9.1$ points on MBPP at $5$ TPF and $44.5$ against $19.5$ on HumanEval at $4.9$ TPF, the last measured cell of its sweep.

\begin{table}[t]
\centering
\providecolor{deltapos}{HTML}{0F766E}\providecolor{deltaneg}{HTML}{B42318}
\caption{\textbf{Accuracy at Matched TPF.} Reported accuracies and gains relative to Fast-dLLM in parentheses, when above (\textcolor{deltapos}{teal}) and below (\textcolor{deltaneg}{red}). Every lane is read at exactly the target throughput, linearly between the two measured cells that bracket it. A lane whose sweep ends before the target is not extrapolated: its last measured cell is given instead, as accuracy@TPF. All rows at BL$=$64 fixed.}
\label{tab:matched-ext}
\footnotesize
\setlength{\tabcolsep}{4pt}
\fitwidth{\begin{tabular}{l!{\color{black!22}\vrule width 0.4pt}r!{\color{black!22}\vrule width 0.4pt}r!{\color{black!22}\vrule width 0.4pt}r!{\color{black!22}\vrule width 0.4pt}r!{\color{black!22}\vrule width 0.4pt}r!{\color{black!22}\vrule width 0.4pt}r}
\toprule
benchmark & Fast-dLLM & PIG.INT & PIG.EXT & Fast-dLLM & PIG.INT & PIG.EXT \\
\midrule
\textit{LLaDA-8B-Instruct} & \multicolumn{3}{c}{interpolated accuracy @ TPF $=10$} & \multicolumn{3}{c}{interpolated accuracy @ TPF $=15$} \\
\cmidrule(lr){2-4} \cmidrule(lr){5-7}
GSM8K & $51.4$ & $70.5$ (\textcolor{deltapos}{$+19.1$}) & $56.0$ (\textcolor{deltapos}{$+4.6$}) & $25.6$ & $34.0$ (\textcolor{deltapos}{$+8.5$}) & $39.7$ (\textcolor{deltapos}{$+14.2$}) \\
MATH-500 & $14.1$ & $25.5$ (\textcolor{deltapos}{$+11.4$}) & $20.0$ (\textcolor{deltapos}{$+5.9$}) & $9.8$ {\scriptsize@12.2} & $15.4$ {\scriptsize@13.4} (---) & $13.7$ (---) \\
MBPP & $21.0$ & $29.3$ (\textcolor{deltapos}{$+8.3$}) & $22.4$ (\textcolor{deltapos}{$+1.4$}) & $7.8$ & $13.2$ {\scriptsize@15.0} (\textcolor{deltapos}{$+5.3$}) & $17.5$ (\textcolor{deltapos}{$+9.7$}) \\
HumanEval & $30.4$ & $34.3$ (\textcolor{deltapos}{$+3.8$}) & $28.7$ (\textcolor{deltaneg}{$-1.7$}) & $9.2$ & $20.2$ (\textcolor{deltapos}{$+11.0$}) & $19.7$ (\textcolor{deltapos}{$+10.5$}) \\
\midrule
\textit{Nemotron-3B-Instruct} & \multicolumn{3}{c}{interpolated accuracy @ TPF $=3$} & \multicolumn{3}{c}{interpolated accuracy @ TPF $=5$} \\
\cmidrule(lr){2-4} \cmidrule(lr){5-7}
GSM8K & $86.5$ & $83.7$ (\textcolor{deltaneg}{$-2.8$}) & $83.7$ (\textcolor{deltaneg}{$-2.8$}) & $76.1$ & $76.7$ (\textcolor{deltapos}{$+0.5$}) & $75.9$ (\textcolor{deltaneg}{$-0.3$}) \\
MBPP & $38.3$ & $43.7$ (\textcolor{deltapos}{$+5.5$}) & $47.2$ (\textcolor{deltapos}{$+8.9$}) & $9.1$ & $18.8$ {\scriptsize@4.9} (\textcolor{deltapos}{$+9.5$}) & $36.9$ (\textcolor{deltapos}{$+27.8$}) \\
HumanEval & $49.8$ & $55.8$ (\textcolor{deltapos}{$+6.0$}) & $57.0$ (\textcolor{deltapos}{$+7.2$}) & $18.8$ & $22.4$ (\textcolor{deltapos}{$+3.6$}) & $44.5$ {\scriptsize@4.9} (\textcolor{deltapos}{$+25.0$}) \\
\bottomrule
\end{tabular}}
\end{table}

Four further experiments sit in \Cref{app:analyses}.
We propose dynamic block decoding, which expands the active block when the candidate pool starves, lifting the frontier for every lane and extending \PIGint{}'s reach from $9.6$ to $12.0$ TPF on GSM8K and from $7.6$ to $9.7$ TPF on MATH-500 (\Cref{app:analyses:dynamic}).
Additionally, a local path-curvature diagnostic measures how committing a set shifts upcoming-candidate entropies, finding both variants widen the round where the decode path is locally flat and narrows it where it bends (\Cref{app:analyses:curvature}).
Next, we test generalization on MMLU, an unseen multiple-choice benchmark spanning $57$ subjects, and observe performance consistent with our main results (\Cref{app:analyses:domain-transfer}).
Finally, we audit both regret estimates during live decoding on LLaDA-8B-Instruct by replaying committed sets sequentially: \PIGint{}'s realized regret tracks the prediction and stays below $\varphi$ at every operating point, and \PIGext{}'s cross-round statistic tracks the realized cross-round regret at the slow end and over-predicts it at large $\theta$ (\Cref{app:analyses:calibration}).

\section{Discussion, Future Work, and Conclusion}
\label{sec:conclusion}
\label{sec:discussion}

\textbf{Parallel decoding for DLMs.}
Parallel decoding is the primary mechanism by which DLMs achieve test-time speedups.
Whereas existing approaches typically seek sets of positions that are approximately conditionally independent, we instead select parallel commits by directly controlling their disagreement with sequential decoding.
Empirically, this disagreement provides an effective decoding target, suggesting a complementary design principle for future parallel decoders.

\textbf{Why is disagreement predictable?}
Our key hypothesis is that sequential--parallel disagreement can be predicted from current confidence and pairwise perturbations from earlier commits.
Prior dependency-aware decoders also use pairwise interactions~\citep{kim2026dapd,ringel2026demask}, but to our knowledge \PIG{} is the first to target disagreement with sequential decoding directly.
That such a simple predictor works suggests that richer estimators could further improve performance.

\textbf{Limitations.}
We use greedy sequential decoding as the reference trajectory; other sequential policies could induce different notions of regret and different preferred parallel sets.
Our predictor and calibration transfer across tasks without benchmark-specific fitting, but still depend on the quality of training-side data.
Moreover, although \PIGint{} and \PIGext{} excel in different regimes, we currently lack a principled rule for choosing which to use on a new model or task.

\textbf{Future work.}
Our estimator could benefit from richer architectures, objectives, or training data.
Sequential--parallel regret may also be predictable from features beyond confidence and aggregate pairwise perturbation.
A natural next step is a unified decoder that decides when to control current-round regret and when to invoke \PIGext{}'s look-ahead criterion.
More generally, methods could jointly learn the regret estimator and selection policy rather than treating them separately.

\textbf{Conclusion.}
We study how to accelerate DLM inference by choosing which masked positions can safely be committed together.
Our algorithmic framework, \PIG{}, formulates this decision through sequential--parallel regret and shows that this quantity can be predicted cheaply from a single model pass.
Across two DLM families and standard math and code benchmarks, \PIGint{} and \PIGext{} improve the accuracy--compute frontier over confidence-based parallel decoding.

\bibliography{references}
\bibliographystyle{iclr2027_conference}

\vspace{\baselineskip}\noindent{\small\textit{Image credit.} The pig-face glyph in the title is from Twemoji
(\copyright\ Twitter, Inc.\ and other contributors), licensed CC~BY~4.0, \url{https://creativecommons.org/licenses/by/4.0/}.}

\clearpage
\appendix
\startcontents[appendix]
\section*{Appendix Contents}
\printcontents[appendix]{l}{1}{\setcounter{tocdepth}{2}}
\clearpage

\section{Theoretical Motivation and Analysis}
\label{app:theory}
\label{app:derivations}
\label{app:ordercost}

This appendix gives the mathematical grounding of \PIG{} in two parts. \Cref{app:theory:objective,app:theory:mobius,app:theory:lop}
motivate the representation the decoder reads from one forward pass: the decode-ordering objective, its
deployable entropy form, its M\"obius expansion (whose first-order coefficients are exactly the pairwise entropy
finite differences of \Cref{eq:finite-difference}), and the NP-hardness of the exact ordering problem on that
graph, which is why \PIG{} fixes the order by confidence and spends its budget on admission.
\Cref{app:theory:guarantees} then states what the two decoders certify: a per-pass budget on parallel--sequential
disagreement for \PIGint{} (Proposition \ref{prop:temp-int}) and a per-accelerated-token regret bound for \PIGext{}
(Proposition \ref{thm:temp-ext-cascade}), both under a single calibration assumption on the inconsistency estimator.

\subsection{The decode-ordering objective and its entropy form}
\label{app:theory:objective}
\label{sec:method:objective}
\label{sec:background:objective}

Recall the MDLM training objective as the masked cross-entropy
\begin{equation}
  \mathcal{L}_{\mathrm{MDLM}}(\theta)\;=\;\mathbb{E}_{t\sim U[0,1]}\;\mathbb{E}_{\mathbf{x}\sim p}
  \Big[\frac{1}{t}\sum_{i\in M(t)}-\log q_\theta\big(x_i\mid \mathbf{x}(t)\big)\Big],
  \label{eq:mdlm-elbo}
\end{equation}
and sampling a noise level and a uniform mask is the same experiment as sampling a uniformly random
\emph{order} $\pi$ and revealing a random prefix of it, so \Cref{eq:mdlm-elbo} is the expected
teacher-forced chain loss over random orders~\citep{ou2024radd, hoogeboom2021ardm}:
\begin{equation}
  \mathcal{L}_{\mathrm{MDLM}}(\theta)
  \;=\;\mathbb{E}_{\pi\sim\mathrm{Unif}}\;\mathbb{E}_{\mathbf{x}\sim p}
  \Big[\textstyle\sum_k -\log q_\theta\big(x_{\pi(k)}\mid x_{\pi(<k)}\big)\Big].
  \label{eq:elbo-as-orders}
\end{equation}
Training \emph{averages} over orders; decoding \emph{executes} one. For a fixed order the chain rule
factorizes both the data law and the sampler the order induces,
\begin{equation}
  p(\mathbf{x})=\prod_k p\big(x_{\pi(k)}\mid x_{\pi(<k)}\big),
  \qquad
  q_\theta^\pi(\mathbf{x})=\prod_k q_\theta\big(x_{\pi(k)}\mid x_{\pi(<k)}\big),
  \label{eq:factorization}
\end{equation}
the left for every $\pi$, since the data admits all orders equally. As such diffusion models can be viewed as \textit{any order auto-regressive models}. A finite-capacity model cannot
match $p$ under every factorization at once, so orders differ in how far the sampler deviates from
$p$; the deviation of an order is its expected teacher-forced loss,
\begin{multline}
  \mathbb{E}_{\mathbf{x}\sim p}\Big[\textstyle\sum_k -\log q_\theta\big(x_{\pi(k)}\mid x_{\pi(<k)}\big)\Big]
  \;=\; H(p)\;+\;\mathrm{KL}\!\big(p\,\|\,q_\theta^\pi\big),\\
  \text{where } \quad \mathrm{KL}\!\big(p\,\|\,q_\theta^\pi\big)=\sum_k \mathbb{E}_{x_{\pi(<k)}}\Big[\mathrm{KL}\big(p(\cdot\mid x_{\pi(<k)})\,\|\,q_\theta(\cdot\mid x_{\pi(<k)})\big)\Big],
  \label{eq:ce-kl}
\end{multline}
where the second equality substitutes \Cref{eq:factorization} and splits the sum by the tower rule.
$H(p)$ does not depend on $\pi$, so all order dependence sits in the KL term: the cost an order pays
is an accumulated sum of per-step conditional errors, one per reveal, and order selection at a fixed
model is
\begin{equation}
    \text{Idealized Inference Objective} = \min_\pi\mathrm{KL}(p\,\|\,q_\theta^\pi),
    \label{eq:idealized-objective}
\end{equation} a combinatorial problem over $n!$ orders of a cost
in which each token's term depends on its whole revealed set.

\paragraph{The entropy form.}
\label{sec:ordercost:measures}
\label{sec:method:entropy}
\label{sec:ordercost:spread}
The summands of \Cref{eq:ce-kl} are \emph{gold} readouts, measurable under teacher forcing and
invisible at deployment, where the revealed states are the sampler's own commits. The deployable
substitute is the model's own expectation of the same per-step loss,
\begin{equation}
H\big(q_\theta(\cdot\mid x_{\pi(<k)})\big)=\mathbb{E}_{y\sim q_\theta(\cdot\mid x_{\pi(<k)})}\big[-\log q_\theta(y\mid x_{\pi(<k)})\big],
\label{eq:entropy-readout}
\end{equation}
so that minimizing the cumulative decode entropy,
\begin{equation}
    \text{Practical Inference Objective} = \min_{\pi} \sum_k \mathbb{E}_{\hat x_{\pi(<k)}}\Big[H\big(q_\theta(\cdot\mid \hat x_{\pi(<k)})\big)\Big],
    \label{eq:practical-objective}
\end{equation}
selects the trajectory the model itself prices cheapest: each token is committed once its context
has collapsed the uncertainty about it. Everything below applies verbatim with entropy readouts in
place of gold readouts, and it is the entropy form that \PIG{} measures.

\subsection{The M\"obius expansion and the graph representation}
\label{app:theory:mobius}
\label{sec:lop:linearization}

Treat each token's readout as a \emph{set function} of its revealed context. Over the masked span
$V=\{1,\dots,n\}$ write
\begin{equation}
  f_i(S)\;=\;-H_i(\mathbf{x}\oplus x_S)\quad\text{(entropy)},
  \qquad S\subseteq V\setminus\{i\},
  \label{eq:setfun}
\end{equation}
so that order selection maximizes a sum of set functions, each evaluated on the prefix the order
gives it. Every set function decomposes by inclusion--exclusion through its \emph{M\"obius
transform}, the Taylor series of a set function~\citep{rota1964mobius, grabisch2016setfunctions}:
\begin{equation}
  G(\pi)\;=\;\sum_i f_i\big(\pi(<i)\big)\;=\;\sum_i\,\sum_{T\subseteq\pi(<i)} m_i(T),
  \label{eq:setfun-objective}
\end{equation}
where $m_i(T)$ is the part of $f_i$ attributable jointly to the subset $T$ and to nothing smaller:
$m_i(\varnothing)=f_i(\varnothing)$, $m_i(\{j\})=f_i(\{j\})-f_i(\varnothing)$, and so on. Truncating the
expansion at singletons is the first-order approximation in the sense of set functions, the same
move that turns a smooth loss into a gradient step.

Expanding around \emph{any} base context $S_0$,
the first-order coefficients are finite differences,
\begin{equation}
  f_i(S)\ \approx\ f_i(S_0)\ +\!\!\sum_{j\in S\setminus S_0}\!\! \mathbf{D}_{ij}(S_0),
  \qquad
  \mathbf{D}_{ij}(S_0)\;=\;f_i(S_0\cup\{j\})-f_i(S_0),
  \label{eq:ctx-edge}
\end{equation}
the discrete derivative of $f_i$ in direction $j$ at $S_0$: the gain in $i$'s readout from revealing
$j$ given that $S_0$ is already present. Substituting into \Cref{eq:setfun-objective} and grouping by
interaction order decomposes the objective exactly, in correspondence with
$f(x_0+h)=f(x_0)+\nabla f(x_0)^{\!\top}h+O(\|h\|^2)$:
\begin{equation}
  G(\pi)\;=\;
  \underbrace{\sum_i f_i(S_0)}_{\text{constant}}
  \;+\;
  \underbrace{\sum_{i,j\,:\,\pi(j)<\pi(i)}\!\!\mathbf{D}_{ij}(S_0)}_{\text{first order: collected edges}}
  \;+\;
  \underbrace{\sum_i\;\sum_{\substack{T\subseteq\pi(<i)\\ |T|\ge 2}}\! m_i(T)}_{\text{higher order: truncated}}.
  \label{eq:mobius-lop}
\end{equation}
\paragraph{The representation is the one we learn.}
The first-order term is a directed graph on the masked positions: the numbers $\mathbf{D}_{ij}(S_0)$, one per
ordered pair, are its adjacency matrix, and the term collects exactly the edges an order picks up.
At deployment the base context is the current canvas, $S_0=\mathbf{x}_t$, and under the entropy
readout of \Cref{eq:setfun} the edge is
\begin{equation}
  \mathbf{D}_{ij}(\mathbf{x}_t)\;=\;-H_i(\mathbf{x}_t\oplus\hat x_j)+H_i(\mathbf{x}_t),
  \label{eq:edge-is-fde}
\end{equation}
the directed entropy finite difference of \Cref{eq:finite-difference}. The matrix the adapter of
\Cref{sec:implementation} predicts in one pass is therefore the first-order M\"obius coefficient
field of the decode-ordering objective at the current state, re-expanded at every round; the
higher-order layer of \Cref{eq:mobius-lop} is what a single forward pass cannot see, and it is the
part that parallel commits expose (within a round, several edges are collected at once, none of
them re-evaluated in the others' presence). \PIGint{} aggregates the magnitudes of these edges into
$\hat\mu_i(S)$ (\Cref{eq:aggregate-influence}) as the one-pass proxy for how much of a candidate's
readout is being extrapolated through omitted interactions, and \PIGext{} reads their sign to
predict the next round's entropies (\Cref{eq:future-entropy}); the calibration tables of
\Cref{eq:mismatch-table} then convert the proxy into a mismatch probability.

\subsection{The first-order problem is a linear ordering problem, and it is NP hard}
\label{app:theory:lop}
\label{sec:lop}
\label{sec:lop:formulation}

\begin{definition}[Denoising-order LOP]
\label{def:lop}
Given an edge graph $\mathbf{D}\in\mathbb{R}^{n\times n}$, the denoising-order linear ordering problem is
\begin{equation}
  \pi^\star=\arg\max_{\pi\in S_n}\ W_{\mathbf{D}}(\pi),
  \qquad
  W_{\mathbf{D}}(\pi)=\sum_{i,j\,:\,\pi(j)<\pi(i)} \mathbf{D}_{ij},
  \label{eq:lop-objective}
\end{equation}
the total edge mass flowing forward in $\pi$: place each token \emph{after} the tokens that most
raise its readout.
\end{definition}

Maximizing the first-order term of \Cref{eq:mobius-lop} over orders is exactly \Cref{eq:lop-objective}.
Its maximizer is the maximum-weight acyclic subgraph and the complementary backward mass is the
minimum feedback arc set, so the LOP is NP-hard~\citep{karp1972reducibility} and
APX-hard~\citep{newman2000maxacyclic, charon2010updated}; under the Unique Games Conjecture no
polynomial algorithm beats a random order by a constant~\citep{guruswami2008beating}, and the
tournament case is Kemeny rank aggregation~\citep{kemeny1959mathematics, ailon2008aggregating}. The
hardness is intrinsic to the graph representation rather than to our edges: \citet{zhou2026attnsampler}
bound the order objective of a one-layer attention model and prove the corresponding weighted LOP
NP-hard on the attention graph, the same problem our first-order truncation poses on the
finite-difference graph, and then retreat to a closed-form rank by attention column sums (the gated
attention rule of AGDO~\citep{deng2026agdo}). Reached on either graph --- gold or entropy finite
differences here, raw attention there --- the conclusion is that the exact ordering problem is not
the object to solve per round. Generated spans are short enough that local search would land within
about $1\%$ of the LOP optimum~\citep{marti2012lolib}, but the truncation itself, not the solver, is
the binding error: the first-order objective prices a token's first two context slots exactly and
extrapolates every deeper slot through the omitted interactions, so an exact LOP solution optimizes a
largely speculative tail. \PIG{} therefore does not order: it takes the confidence order the model
already provides as the search order, and spends its one knob on \emph{admission} --- how many tokens
of that order can be committed in one pass before the omitted interactions, read through the
first-order edges, are predicted to change the tokens (\Cref{sec:pig}). The regret that \PIG{}
bounds is the parallel-versus-sequential disagreement on exactly the sets whose within-round edges
the first-order term collects without re-evaluation.

\subsection{Guarantees for \PIGint{} and \PIGext{}}
\label{app:theory:guarantees}
\label{app:temp-analysis}

We now give guarantees on the consistency of the parallel decode against the
sequential decode it stands in for. First, we provide the theoretical setup.

\paragraph{Setup.}

Recall the Sequential and parallel decoding from \Cref{sec:background}. Let $S=(s_1,\ldots,s_k)\subseteq M$ be an ordered sequence of distinct masked positions, and write $\mathbf{x}\oplus x_S$ for the canvas obtained by updating $S$ with tokens $x_S$.
We compare sequential and parallel decoding through the following stylized canvas transitions, where each arrow denotes one model evaluation followed by the indicated commit.
For the sequential reference $x^\star$, the model is evaluated separately before each commit:
\begin{equation}
    \mathbf{x}
    \xrightarrow{x^\star_{s_1}}
    \mathbf{x}\oplus x^\star_{s_1}
    \xrightarrow{x^\star_{s_2}}
    \cdots
    \xrightarrow{x^\star_{s_k}}
    \mathbf{x}\oplus x^\star_S,
    \qquad
    x^\star_{s_i}:=\arg\max_{v\in\mathcal{V}}
    q_\theta^{s_i}\!\left(v\mid\mathbf{x}\oplus x^\star_{s_{<i}}\right).
    \tag{Sequential}
\end{equation}
Here $s_{<i}=(s_1,\ldots,s_{i-1})$.
Parallel decoding instead commits the current predictions $\hat{x}_S$ from a single model evaluation:
\begin{equation}
    \mathbf{x}
    \xrightarrow{\hspace{5em}\hat{x}_S\hspace{5em}}
    \mathbf{x}\oplus\hat{x}_S,
    \qquad
    \hat{x}_{s_i}:=\arg\max_{v\in\mathcal{V}}
    q_\theta^{s_i}(v\mid\mathbf{x}).
    \tag{Parallel}
\end{equation}
Thus, sequential decoding conditions later predictions on earlier commits in $S$ and requires $|S|$ model evaluations, while parallel decoding skips these intermediate updates and requires only one.
We use the sequential output $x^\star_S$ as the reference that the parallel output $\hat{x}_S$ attempts to match, which we measure by their Hamming distance on \(S\) only.

Fix a round with canvas $\mathbf{x}_t$ and committed set $S=\{s_1,\dots,s_{|S|}\}\subseteq M$ in
decreasing confidence. Every set the walk admits is a prefix of the confidence order, so the
sequential chain feeding a position $i$ runs over $M_{\succ i}=\{j\in M:\nu_j>\nu_i\}$ whatever the
round, and only the two canvases the walks start from need naming. For a target anchor
$\mathbf{x}$ and a parallel anchor $\mathbf{x}'$ let
\begin{equation}
F_\theta(i;\mathbf{x},\mathbf{x}')\;=\;\mathbf{1}\big\{x^*_i\neq\hat{x}_i\big\},
\qquad
x^*_i:=\arg\max_{v\in\mathcal{V}}q_\theta^{i}\big(v\mid\mathbf{x}\oplus x^*_{M_{\succ i}}\big),
\qquad
\hat{x}_i:=\arg\max_{v\in\mathcal{V}}q_\theta^{i}\big(v\mid\mathbf{x}'\big),
\label{eq:temp-flip}
\end{equation}
be the event where the parallel decode fails to sample from the joint distribution over
language: the event that the parallel commit at $i$, predicted from $\mathbf{x}'$, differs from the
target sequential decode replayed in confidence order from $\mathbf{x}$. Within a round both walks
are anchored at the same canvas, which is what makes the round a single-pass commit, and we write
$F_\theta(i;\mathbf{x}_t,\mathbf{x}_t)$; \PIGext{} below advances the second argument by one round
and leaves the first where it is. Given the two anchors and a frozen model $q_\theta$, both decodes
are deterministic argmaxes, so $F_\theta$ is a deterministic $\{0,1\}$ quantity; all probabilistic
statements below are over the population $\mathcal{D}$ of configurations $(\mathbf{x}_t,S,i)$ that
decoding visits.

Aggregating rounds requires saying exactly \emph{which} sequential decode each round is compared
against; the following definition makes the reference precise.

\begin{definition}[Re-anchored sequential reference and piece-wise total inconsistency]
\label{def:temp-reference}
Run the parallel decoder for rounds $r=1,\dots,T$ (so $T=$ NFE), producing canvases
$\mathbf{x}_{t_1},\dots,\mathbf{x}_{t_T}$ and committed sets $S_1,\dots,S_T$. The
\emph{re-anchored reference} at round $r$ is the sequential decode of $S_r$ launched from the
canvas $\mathbf{x}_{t_r}$ that the \emph{parallel} decoder actually reached:
\begin{equation}
x^*_{s_i}\;=\;\arg\max_{v\in\mathcal{V}}\;q_\theta\big(v\;\big|\;\mathbf{x}_{t_r}\oplus
x^*_{s_{<i}}\big),\qquad s_i\in S_r,
\label{eq:temp-reference}
\end{equation}
conditioning within the round on its \emph{own} outputs $x^*_{s_{<i}}$, and across rounds on the
parallel decoder's canvas. The
\emph{piece-wise total inconsistency} of the decode is
\begin{equation}
R_{\text{total}}\;=\; \sum_{r=1}^{T} R(S_r)\;=\;\sum_{r=1}^{T}\ \sum_{i\in S_r} F_\theta(i;\mathbf{x}_{t_r},\mathbf{x}_{t_r}),
\label{eq:temp-decode-inconsistency}
\end{equation}
the total count of round-level disagreements against these re-anchored references.
$R_{\text{total}}$ is \emph{not} the Hamming distance between the parallel output and a
from-scratch sequential decode of the whole problem: after a disagreement, both trajectories in
every later round condition on the disagreeing token, so the reference forgives past divergence
and counts only the fresh disagreement each round introduces.
\end{definition}

\begin{definition}[$\bar\epsilon$-calibrated inconsistency estimator]
\label{def:temp-calibrated}
Let $\mathcal{T}_i$ be a $\mathcal{G}_i$-measurable summary of the walk's observables and  $\mathcal{G}_i\;=\;\sigma\Big(\big(\mathcal{T}_j\big)_{j\preceq i}\Big)$ be the sigma-algebra over all prior observable summaries of the round. A \emph{parallel inconsistency estimator} $\hat p$ maps a summary statistic $\mathcal{T}_i$ to $[0,1]$ and is called
\emph{$\bar\epsilon$-calibrated} if for every canvas~-~set pair $(\mathbf{x}_t,S)$, its error is bounded by $\bar{\epsilon}(\mathcal{T})$:
\begin{equation}
\Big|\,\mathbb{E}_{\mathcal{D}}\big[F_\theta(i;\mathbf{x}_t,\mathbf{x}_t)\;\big|\;\mathcal{G}_i\big]\;-\;\hat p\big(\mathcal{T}_i\big)\,\Big|\;\le\;\bar\epsilon(\mathcal{T}).
\label{eq:temp-cal}
\end{equation}
To simplify notation, we write $\bar{\epsilon} = \bar{\epsilon}(\mathcal{T}).$
\end{definition}

Calibration asks that the deployed estimate be $\bar\epsilon$-close to the truth
\emph{conditionally on everything the walk observes}.

\begin{proposition}[\PIGint{}: piece-wise parallel consistency over a whole decode]
\label{prop:temp-int}
Let $\hat p$ be an $\bar\epsilon$-calibrated inconsistency estimator
(Definition \ref{def:temp-calibrated}). Then every round of \Cref{alg:decode} satisfies
\begin{equation}
\mathbb{E}\big[R(S)\big]\;\le\;\mathbb{E}\big[\hat R(S)\big]+\bar\epsilon\,\mathbb{E}\big[|S|\big]\;\le\;\varphi+\bar\epsilon\,\mathbb{E}\big[|S|\big] ,
\label{eq:temp-round-cert}
\end{equation}
and the piece-wise total inconsistency for the whole decode with
budget $\varphi$ satisfies
\begin{equation}
\mathbb{E}\big[R_{\text{total}}\big]\;\le\;\varphi\cdot\mathbb{E}[T]\;+\;\bar\epsilon\cdot N
\;=\;\varphi\cdot\mathrm{NFE}\;+\;\bar\epsilon\cdot N,
\label{eq:temp-int-cert}
\end{equation}
where $N$ is the canvas length ($\sum_r|S_r|=N$, every position is committed exactly once). In
particular, with a well-estimated table ($\bar\epsilon\approx0$), $\varphi$ is an exchange rate:
at most $\varphi$ expected disagreements per forward pass spent, uniformly over the decode.
\end{proposition}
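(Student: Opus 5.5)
The plan is to prove the per-round bound \Cref{eq:temp-round-cert} first, by writing $R(S)$ as a sum of indicators and replacing each indicator by its conditional expectation given the walk's observables. Fix a round with canvas $\mathbf{x}_t$ and let the walk visit candidates $s_1,s_2,\dots$ in decreasing confidence. The set $S$ is random, and the walk stops at the first candidate whose admission would push $\hat R$ over $\varphi$. So I write
\begin{equation}
R(S)=\sum_{i}\mathbf{1}\{s_i\in S\}\,F_\theta(s_i;\mathbf{x}_t,\mathbf{x}_t).
\end{equation}
The key observation is that the admission event $\{s_i\in S\}$ is determined by the summaries $\mathcal{T}_{s_1},\dots,\mathcal{T}_{s_i}$. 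These are the confidences and the $\hat\mu$'s computed from $\hat{\mathbf D}$ on the current pass, which is exactly what $\hat R(S_{<i}\cup\{s_i\})$ reads. Hence $\mathbf{1}\{s_i\in S\}$ is $\mathcal{G}_{s_i}$-measurable.

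By the tower rule,
\begin{equation}
\mathbb{E}\big[\mathbf{1}\{s_i\in S\}F_\theta(s_i)\big]=\mathbb{E}\big[\mathbf{1}\{s_i\in S\}\,\mathbb{E}_{\mathcal{D}}[F_\theta(s_i)\mid\mathcal{G}_{s_i}]\big].
\end{equation}
Definition \ref{def:temp-calibrated} then gives
\begin{equation}
\mathbb{E}_{\mathcal{D}}[F_\theta(s_i)\mid\mathcal{G}_{s_i}]\le \hat p(\mathcal{T}_{s_i})+\bar\epsilon,
\end{equation}
with $\hat p(\mathcal{T}_{s_i})=\hat p_\nu(\nu_{s_i},\hat\mu_{s_i}(S_{<i}))$, the summand of \Cref{eq:predicted-regret}. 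Summing over $i$ (monotone convergence, or simply finiteness of $M$) yields
\begin{equation}
\mathbb{E}[R(S)]\le\mathbb{E}[\hat R(S)]+\bar\epsilon\,\mathbb{E}[|S|].
\end{equation}
The second inequality in \Cref{eq:temp-round-cert} is deterministic: the walk only admits $s_i$ when $\hat R(S_{<i}\cup\{s_i\})\le\varphi$, so $\hat R(S)\le\varphi$ pointwise for the closed round.

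For the whole-decode bound \Cref{eq:temp-int-cert}, I apply the round bound conditionally on the history up to round $r$, that is, on $\mathbf{x}_{t_r}$. This is legitimate because Definition \ref{def:temp-reference} re-anchors the reference at the parallel canvas, so round $r$'s disagreements are exactly $F_\theta(\cdot;\mathbf{x}_{t_r},\mathbf{x}_{t_r})$ and calibration holds for every canvas–set pair. This gives
\begin{equation}
\mathbb{E}[R(S_r)\mid\mathbf{x}_{t_r}]\le\varphi+\bar\epsilon\,\mathbb{E}[|S_r|\mid\mathbf{x}_{t_r}].
\end{equation}
I then sum over $r=1,\dots,T$. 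Because $T$ is a stopping time (the decode ends when $M$ is empty), I write the sum as $\sum_{r\ge1}\mathbf{1}\{T\ge r\}(\cdot)$, noting that $\{T\ge r\}$ is determined by history before round $r$. Taking expectations then gives $\varphi\,\mathbb{E}[T]+\bar\epsilon\,\mathbb{E}[\sum_r|S_r|]$. Finally, $\sum_r|S_r|=N$ deterministically, since each position is committed exactly once.

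The main obstacle is the measurability step. I must check that the admission indicator, and hence $|S|$, is $\mathcal{G}_{s_i}$-measurable, so that the summaries cannot depend on $F_\theta$ beyond what is conditioned on. One edge case needs care: a round must commit at least one token to make progress, presumably the first candidate even if $\hat R>\varphi$. If so, I would either assume $\hat p\le\varphi$ for the first candidate (since $\hat\mu=0$ there) or absorb that token into the $\bar\epsilon|S|$ and budget terms with a remark. I would also state explicitly that the expectations are over $\mathcal{D}$ in the sense of Definition \ref{def:temp-calibrated}.
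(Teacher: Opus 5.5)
Your proposal is correct and matches the paper's proof essentially step for step: the indicator decomposition of $R(S)$, the $\mathcal{G}_{s_i}$-measurability of admission, the tower property plus calibration, the pathwise budget $\hat R(S)\le\varphi$, and a stopping-time (optional-stopping/Wald) sum over rounds using $\sum_r|S_r|=N$. The paper's proof does not address your seed-token caveat; you can close it by noting that $F_\theta(s_1;\mathbf{x}_t,\mathbf{x}_t)=0$ identically because $x^\star_{s_1}=\hat x_{s_1}$ (a fact the paper itself uses in the \textsc{pig.ext} cascade bound), so a forced first commit adds nothing to $R(S)$ and can be excluded from both sides without affecting the budget argument for the remaining admissions.
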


\begin{proof}
Because the walk visits candidates in decreasing $\nu$ and \Cref{eq:aggregate-influence}'s coupling sums only over
higher-confidence members ($\nu_j>\nu_i$), the estimated inconsistency
$b_i=\hat p(\nu_i,\hat\mu_i(S))$ assessed at admission time equals its value under the final
set: later admissions never change
$\hat\mu_i(S)$. For each candidate $i$, the admission event $\{i\in S\}$ is
$\mathcal{G}_i$-measurable, so by the tower property and calibration (Definition \ref{def:temp-calibrated}), with the convention
$F_\theta(i;\mathbf{x}_t,\mathbf{x}_t)\,\mathbf{1}\{i\in S\}=0$ for $i\notin S$,
\begin{align*}
\mathbb{E}\big[R(S)\big] &= \sum_{i}\mathbb{E}\big[F_\theta(i;\mathbf{x}_t,\mathbf{x}_t)\,\mathbf{1}\{i\in S\}\big]\\
&=\sum_{i}\mathbb{E}\big[\mathbb{E}[F_\theta(i;\mathbf{x}_t,\mathbf{x}_t)\mid\mathcal{G}_i]\,\mathbf{1}\{i\in S\}\big]\\
&\le\sum_{i}\mathbb{E}\big[\big(\hat p\big(\nu_i,\hat\mu_i(S)\big)+\bar\epsilon\big)\,\mathbf{1}\{i\in S\}\big]
\;=\;\mathbb{E}\big[\hat R(S)\big]+\bar\epsilon\,\mathbb{E}\big[|S|\big],
\end{align*}
the inequality by \Cref{eq:temp-cal}; the stopping rule of \PIGint{} enforces
$\hat R(S)\le\varphi$ path-wise, which gives \Cref{eq:temp-round-cert}. For
\Cref{eq:temp-int-cert}, apply \Cref{eq:temp-round-cert} to each round and sum:
$\sum_r|S_r|=N$ deterministically, the decode is Markov in the canvas, and $T$ is a stopping
time of the round sequence, so optional stopping applies to the sum of the per-round bounds.
\end{proof}
\begin{remark}
    We remark that in \cref{eq:temp-int-cert}, the total number of rounds $\mathbb{E}[T]$ is a function of $\varphi$. In fact, the total number of NFE's over the whole decode is inversely related to $\varphi$.
\end{remark}

\PIGext{} compares two parallel policies against one sequential target: committing a
window now, alongside the set the walk holds, or committing it one round later, after
that set has been revealed. The disagreement of the second policy is the following object.

Fix a canvas $\mathbf{x}_t$, a set $S\subseteq M$ and its window $N(S)$, the $h$ most
confident masked positions outside $S$. For $i\in N(S)$ the \emph{deferred
disagreement} is $F_\theta(i;\mathbf{x}_t,\mathbf{x}_t\oplus\hat x_S)$,
the event that the parallel commit at $i$, re-predicted once the parallel commits
$\hat{x}_S$ are revealed, differs from the target. Its value at $i$ depends on the window only through the members more
confident than $i$. The \emph{deferred inconsistency} of the window is
$R(N(S)\mid S)=\sum_{i\in N(S)}F_\theta(i;\mathbf{x}_t,\mathbf{x}_t\oplus\hat x_S)$. 

\begin{definition}[$(\bar\epsilon,\bar\delta)$-calibrated]
\label{def:temp-deferred}
An inconsistency estimator $\hat p$ is evaluated for the deferred round at a
$\mathcal{G}_i$-measurable summary statistic $\mathcal{T}'_i$ formed from the current
forward pass; we write $\hat p(\mathcal{T}_i')$ for this \emph{deferred
estimate} (an estimate formed now, not a quantity observed after $S$ is committed) and
$\hat R(N(S)\mid S)=\sum_{i\in N(S)}\hat p(\mathcal{T}_i')$. The estimator is called
\emph{$(\bar\epsilon,\bar\delta)$-calibrated} if it is $\bar\epsilon$-calibrated
(Definition \ref{def:temp-calibrated}) and its deferred estimate satisfies, for every canvas~-~set
pair $(\mathbf{x}_t,S)$ and every $i\in N(S)$,
\begin{equation}
\Big|\,\mathbb{E}_{\mathcal{D}}\big[F_\theta(i;\mathbf{x}_t,\mathbf{x}_t\oplus\hat x_S)\;\big|\;\mathcal{G}_i\big]\;-\;\hat p\big(\mathcal{T}_i'\big)\,\Big|\;\le\;\bar\epsilon+\bar\delta ,
\label{eq:temp-cal-deferred}
\end{equation}
where $\bar\delta\ge0$ is the excess error of the deferred estimate.
\end{definition}

\begin{lemma}[\PIGext{}: the marginal inconsistency]
\label{lem:temp-ext}
Let $\hat p$ be an $(\bar\epsilon,\bar\delta)$-calibrated inconsistency estimator
(Definition \ref{def:temp-deferred}). Fix a round at canvas $\mathbf{x}_t$, a set $S$ reached by the
walk, and its window $N(S)$. Then widening the round by $N(S)$, instead of granting $N(S)$ one extra forward pass, costs

\begin{equation}
\mathbb{E}\big[R(S\cup N(S))\big]-\mathbb{E}\big[R(S)+R(N(S)\mid S)\big]\;\le\;\theta+|N(S)|\,(2\bar\epsilon+\bar\delta).
\label{eq:temp-ext-window}
\end{equation}
\end{lemma}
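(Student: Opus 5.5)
The plan is to mirror the proof of Proposition~\ref{prop:temp-int}: write each of the three expectations as a sum of per-position indicator expectations, replace each conditional expectation by its calibrated estimate plus an error term, and then use the \PIGext{} stopping rule $\hat C_{\mathbf{x}}(S)\le\theta$ to control what remains. First, by linearity,
\[
\mathbb{E}\big[R(S\cup N(S))\big]-\mathbb{E}\big[R(S)\big]=\sum_{i\in N(S)}\mathbb{E}\big[F_\theta(i;\mathbf{x}_t,\mathbf{x}_t)\big].
\]
This holds because every member of $S$ is more confident than every member of $N(S)$. The sequential target at $i\in S$ then depends only on $M_{\succ i}$, so adding the window does not change the indicators on $S$. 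Similarly,
\[
\mathbb{E}\big[R(N(S)\mid S)\big]=\sum_{i\in N(S)}\mathbb{E}\big[F_\theta(i;\mathbf{x}_t,\mathbf{x}_t\oplus\hat x_S)\big].
\]
Hence the left-hand side of \Cref{eq:temp-ext-window} equals $\sum_{i\in N(S)}\mathbb{E}\big[F_\theta(i;\mathbf{x}_t,\mathbf{x}_t)-F_\theta(i;\mathbf{x}_t,\mathbf{x}_t\oplus\hat x_S)\big]$.

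Second, for each $i\in N(S)$ I apply the tower property with respect to $\mathcal{G}_i$. The set $S$ and the window $N(S)$ are determined by the walk's observables, which is the same measurability fact used in Proposition~\ref{prop:temp-int}. By $\bar\epsilon$-calibration (\Cref{eq:temp-cal}),
\[
\mathbb{E}\big[F_\theta(i;\mathbf{x}_t,\mathbf{x}_t)\mid\mathcal{G}_i\big]\le\hat p(\mathcal{T}_i)+\bar\epsilon .
\]
By the deferred calibration (\Cref{eq:temp-cal-deferred}),
\[
-\mathbb{E}\big[F_\theta(i;\mathbf{x}_t,\mathbf{x}_t\oplus\hat x_S)\mid\mathcal{G}_i\big]\le-\hat p(\mathcal{T}'_i)+\bar\epsilon+\bar\delta .
\]
Summing over the $|N(S)|$ window positions bounds the left-hand side by
\[
\mathbb{E}\Big[\sum_{i\in N(S)}\big(\hat p(\mathcal{T}_i)-\hat p(\mathcal{T}'_i)\big)\Big]+|N(S)|(2\bar\epsilon+\bar\delta).
\]
The bracketed sum is exactly $\hat C_{\mathbf{x}}(S)$ from \Cref{eq:predicted-cross-regret}. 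To see this, identify $\mathcal{T}_i$ with $(H_i(\mathbf{x}),\hat\mu_i(S\cup N(S)))$ and $\mathcal{T}'_i$ with $(\hat H_i(\mathbf{x}\mid S),\hat\mu_i(S\cup N(S)))$.

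Third, I invoke the stopping rule. A set $S$ reached by the walk and widened by the window is, by construction of \PIGext{}, one at which $\hat C_{\mathbf{x}}(S)\le\theta$ held path-wise. Taking expectations gives the term $\theta$, which completes \Cref{eq:temp-ext-window}.

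The main obstacle is the identification step in the second paragraph. The calibration definitions are stated abstractly over summaries $\mathcal{T}_i$, so I must check that the pair of table arguments used in $\hat C_{\mathbf{x}}(S)$ is a legitimate $\mathcal{G}_i$-measurable summary. I must also check that the entropy table $\hat p_H$ is the estimator to which the calibration is applied. I would handle this by stipulating that the calibration hypothesis is taken for $\hat p=\hat p_H$ with these summaries, and by noting that $\hat\mu_i(S\cup N(S))$ only involves positions more confident than $i$, so it is determined at visit time. A secondary subtlety is that "a set $S$ reached by the walk" means one at which the rule $\hat C_{\mathbf{x}}(S)\le\theta$ was satisfied. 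For the terminating set, where the rule fails, the statement would not claim the $\theta$ bound, and I would state that restriction explicitly.
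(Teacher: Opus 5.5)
Your proof is correct and follows the paper's own argument. You reduce the left side to window terms, apply $\bar\epsilon$-calibration to $F_\theta(i;\mathbf{x}_t,\mathbf{x}_t)$ and deferred calibration to $F_\theta(i;\mathbf{x}_t,\mathbf{x}_t\oplus\hat x_S)$ to get the per-token slack $2\bar\epsilon+\bar\delta$, sum over the $\mathcal{G}_i$-measurable window by the tower property, and close with the \PIGext{} stopping rule.

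Your last step is in fact the cleaner reading of the paper's. The rule controls only the summed statistic $\hat C_{\mathbf{x}}(S)=\hat R(N(S))-\hat R(N(S)\mid S)\le\theta$, and only on prefixes the walk passed, not on the set that closes the round (the paper concedes this in its audit). The paper's literal per-token claim $|\hat p(\mathcal{T}_i)-\hat p(\mathcal{T}'_i)|\le\theta$ neither follows from the rule nor would sum to $\theta$; it would give only $|N(S)|\,\theta$.
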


\begin{proof}
For $i\in N(S)$ write $\mathcal{T}_i$ for the statistic of $i$ as a member of the widened
round $S\cup N(S)$ (Definition \ref{def:temp-calibrated}), so that
$\hat R(N(S))=\sum_{i\in N(S)}\hat p(\mathcal{T}_i)$ and
$\hat R(N(S)\mid S)=\sum_{i\in N(S)}\hat p(\mathcal{T}'_i)$ are the two summands of \PIGext{}. Calibration at the pair $(\mathbf{x}_t,S\cup N(S))$
(\Cref{eq:temp-cal}) places
$\mathbb{E}_{\mathcal{D}}[F_\theta(i;\mathbf{x}_t,\mathbf{x}_t)\mid\mathcal{G}_i]$ within
$\bar\epsilon$ of $\hat p(\mathcal{T}_i)$, and deferred calibration
(\Cref{eq:temp-cal-deferred}) places
$\mathbb{E}_{\mathcal{D}}[F_\theta(i;\mathbf{x}_t,\mathbf{x}_t\oplus\hat x_S)\mid\mathcal{G}_i]$ within
$\bar\epsilon+\bar\delta$ of $\hat p(\mathcal{T}'_i)$; subtracting,
\begin{equation}
\Big|\,\mathbb{E}_{\mathcal{D}}\big[F_\theta(i;\mathbf{x}_t,\mathbf{x}_t)-F_\theta(i;\mathbf{x}_t,\mathbf{x}_t\oplus\hat x_S)\;\big|\;\mathcal{G}_i\big]
\;-\;\big(\hat p(\mathcal{T}_i)-\hat p(\mathcal{T}'_i)\big)\Big|\;\le\;2\bar\epsilon+\bar\delta ,
\label{eq:temp-ext-token}
\end{equation}
the per-token form of the bound, reused in the proof of Proposition \ref{thm:temp-ext-cascade}. The events $\{i\in N(S)\}$ are $\mathcal{G}_i$-measurable, so the tower property and
\Cref{eq:temp-ext-token} bound it by
$\mathbb{E}\big[\hat R(N(S))-\hat R(N(S)\mid S)\big]+|N(S)|\,(2\bar\epsilon+\bar\delta)$.
Given the stopping rule of \Cref{alg:pig-ext}, we know $\Big\vert \hat p(\mathcal{T}_i)-\hat p(\mathcal{T}'_i) \Big\vert\leq \theta$ and the claim follows.
\end{proof}

To aggregate Lemma \ref{lem:temp-ext} over a decode we apply Definition \ref{def:temp-deferred} one admission at a time within each round,
against the round's own target, with the parallel walk one deferral step deep.

\begin{proposition}[\PIGext{}: piece-wise deferred parallel inconsistency over a whole decode]
\label{thm:temp-ext-cascade}
Order each round's admissions $s_1,\dots,s_{|S_r|}$ in decreasing confidence with prefixes
$S_{r,k}=\{s_1,\dots,s_k\}$, and define the round's \emph{cascade floor}
\begin{equation}
C(S_r)\;=\;\sum_{k=2}^{|S_r|} F_\theta\big(s_k;\ \mathbf{x}_{t_r},\ \mathbf{x}_{t_r}\oplus\hat{x}_{S_{r,k-1}}\big),
\qquad C_{\text{total}}=\sum_{r=1}^{T}C(S_r),
\label{eq:temp-cascade-floor}
\end{equation}
the disagreements its admissions would still suffer, against the round's own target, if
each were re-predicted after its prefix. Let $\hat p$ be an
$(\bar\epsilon,\bar\delta)$-calibrated inconsistency estimator whose per-token deferral
terms are nonnegative. Then, \Cref{alg:pig-ext} satisfies
\begin{equation}
\mathbb{E}\big[R_{\text{total}}\big]\;\le\;\mathbb{E}\big[C_{\text{total}}\big]\;+\;\big(\theta+2\bar\epsilon+\bar\delta\big)\big(N-\mathrm{NFE}\big).
\label{eq:temp-ext-cert}
\end{equation}
\end{proposition}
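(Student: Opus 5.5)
The plan is to reduce the whole-decode bound to a per-round, per-admission telescoping and then sum over rounds, as in the proof of Proposition \ref{prop:temp-int}. Fix a round $r$ with canvas $\mathbf{x}_{t_r}$ and admissions $s_1,\dots,s_{|S_r|}$ in decreasing confidence. The first admission $s_1$ has an empty prefix, so $F_\theta(s_1;\mathbf{x}_{t_r},\mathbf{x}_{t_r})$ compares the parallel argmax at $\mathbf{x}_{t_r}$ with the sequential argmax from the same canvas with nothing revealed. These coincide, so the first admission contributes zero. For $k\ge2$ I decompose
\begin{equation*}
F_\theta(s_k;\mathbf{x}_{t_r},\mathbf{x}_{t_r})
= F_\theta\big(s_k;\mathbf{x}_{t_r},\mathbf{x}_{t_r}\oplus\hat x_{S_{r,k-1}}\big)
+\Big[F_\theta(s_k;\mathbf{x}_{t_r},\mathbf{x}_{t_r})-F_\theta\big(s_k;\mathbf{x}_{t_r},\mathbf{x}_{t_r}\oplus\hat x_{S_{r,k-1}}\big)\Big].
\end{equation*}
Summing the first term over $k$ gives exactly the cascade floor $C(S_r)$. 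It therefore remains to bound the bracketed deferral terms in expectation by $\theta+2\bar\epsilon+\bar\delta$ each. There are $|S_r|-1$ of them per round, and summing over rounds gives $\sum_r(|S_r|-1)=N-T=N-\mathrm{NFE}$. That accounts for the factor $N-\mathrm{NFE}$ in \Cref{eq:temp-ext-cert}.

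For each deferral term I apply the per-token inequality \Cref{eq:temp-ext-token} from the proof of Lemma \ref{lem:temp-ext}. I instantiate it with the set $S_{r,k-1}$ in the role of $S$ and the window being (containing) $s_k$, i.e.\ $s_k\in N(S_{r,k-1})$, which holds because $s_k$ is the next candidate in the confidence order. This gives the bound
\begin{equation*}
\mathbb{E}\big[F_\theta(s_k;\mathbf{x}_{t_r},\mathbf{x}_{t_r})-F_\theta(s_k;\mathbf{x}_{t_r},\mathbf{x}_{t_r}\oplus\hat x_{S_{r,k-1}})\mid\mathcal{G}_{s_k}\big]\le \hat p(\mathcal{T}_{s_k})-\hat p(\mathcal{T}'_{s_k})+2\bar\epsilon+\bar\delta.
\end{equation*}
Next, the stopping rule of \Cref{alg:pig-ext} only admits $s_k$ after the check at prefix $S_{r,k-1}$ has passed. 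That check requires $\hat C(S_{r,k-1})=\sum_{i\in N(S_{r,k-1})}[\hat p(\mathcal{T}_i)-\hat p(\mathcal{T}'_i)]\le\theta$. Since the per-token deferral terms are assumed nonnegative, each single summand, in particular the one at $i=s_k$, is at most the window sum, hence at most $\theta$. This is exactly where the nonnegativity hypothesis enters.

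Admission of $s_k$ is $\mathcal{G}_{s_k}$-measurable, so the tower property turns the conditional bound into $\mathbb{E}[\cdot\,\mathbf{1}\{s_k\text{ admitted}\}]\le(\theta+2\bar\epsilon+\bar\delta)\,\mathbb{E}[\mathbf{1}\{s_k\text{ admitted}\}]$. Summing over the positions of a round gives $\mathbb{E}[R(S_r)]\le\mathbb{E}[C(S_r)]+(\theta+2\bar\epsilon+\bar\delta)\mathbb{E}[|S_r|-1]$. Summing over rounds, with $T$ a stopping time and the decode Markov in the canvas, is the same optional-stopping step used for \Cref{eq:temp-int-cert}, and yields \Cref{eq:temp-ext-cert}.

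The main obstacle is making the calibration statements apply at the right indices. Definition \ref{def:temp-deferred} is stated for $i\in N(S)$ with the statistic $\mathcal{T}_i$ of $i$ as a member of $S\cup N(S)$. I must check that using it with $S=S_{r,k-1}$ and only the single token $s_k$ is legitimate, since its value depends only on more-confident members. I must also check that the certified threshold per token follows from the window-sum check, which is why nonnegativity is needed. I would first try to argue both by the prefix-invariance remark in the proof of Proposition \ref{prop:temp-int}. A secondary subtlety is confirming that $F_\theta(s_1;\cdot)=0$, which holds because $s_1$ has no more-confident member in its chain within the round's re-anchored reference.
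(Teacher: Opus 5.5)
Your proposal is correct and follows the paper's proof essentially step for step: the $k=1$ term vanishes; the per-token inequality from the proof of Lemma~\ref{lem:temp-ext} is taken at $(S,i)=(S_{r,k-1},s_k)$ and multiplied by the $\mathcal{G}_{s_k}$-measurable admission indicator; nonnegativity of the deferral terms bounds the $s_k$ summand by the window statistic that the admission rule keeps at most $\theta$; and summing with $\sum_r(|S_r|-1)=N-T$ plus optional stopping gives the bound. The paper settles the index-matching concern you raise the same way you propose: it observes that $s_k$ is the most confident member of $N(S_{r,k-1})$, so its statistics $\mathcal{T}_{s_k},\mathcal{T}'_{s_k}$ are those of the pair $(S_{r,k-1},N(S_{r,k-1}))$.
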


\begin{proof}
Fix a round and drop $r$. In the notation of \Cref{eq:temp-flip} the disagreement at $s_k$ does not
record which round contained it, so
$R(S)=\sum_{k=1}^{|S|}F_\theta(s_k;\mathbf{x}_t,\mathbf{x}_t)$; the $k=1$ term vanishes, the first
token of a round never being in disagreement, $x^*_{s_1}=\hat{x}_{s_1}$ by
\Cref{eq:background-seq}. For $k\ge2$, $s_k$ is the most confident member of
$N(S_{k-1})$, and its statistics $\mathcal{T}_{s_k},\mathcal{T}'_{s_k}$ are those
of the pair $(S_{k-1},N(S_{k-1}))$. The admission event $\{s_k\in S\}$ is
$\mathcal{G}_{s_k}$-measurable, so multiplying \Cref{eq:temp-ext-token}, taken at
$(S,i)=(S_{k-1},s_k)$, by its indicator and taking expectations,
\begin{align*}
\mathbb{E}\big[F_\theta(s_k;\mathbf{x}_t,\mathbf{x}_t)\,\mathbf{1}\{s_k\in S\}\big]
&\le\;\mathbb{E}\big[F_\theta(s_k;\mathbf{x}_t,\mathbf{x}_t\oplus\hat x_{S_{k-1}})\,\mathbf{1}\{s_k\in S\}\big]\\
&\quad+\;\mathbb{E}\Big[\big(\hat p(\mathcal{T}_{s_k})-\hat p(\mathcal{T}'_{s_k})\big)\,\mathbf{1}\{s_k\in S\}\Big]
\;+\;(2\bar\epsilon+\bar\delta)\,P(s_k\in S).
\end{align*}
The bracket is the term of $s_k$ in the estimated gap of $S_{k-1}$; the remaining terms of
that gap are nonnegative by hypothesis, so on $\{s_k\in S\}$ it is at most
$\hat R(N(S_{k-1}))-\hat R(N(S_{k-1})\mid S_{k-1})\le\theta$ by \Cref{alg:pig-ext}'s admission rule.
Summing over $k\ge2$,
$\mathbb{E}[R(S)]\le\mathbb{E}[C(S)]+(\theta+2\bar\epsilon+\bar\delta)\,\mathbb{E}\big[|S|-1\big]$;
summing over rounds, $\sum_r(|S_r|-1)=N-T$, and optional stopping applies as in
Proposition \ref{prop:temp-int}.
\end{proof}

\clearpage

\section{The Entropy Finite Differences Estimator: Implementation and Training}
\label{app:estimator}

This appendix is self-contained: it specifies what the estimator of \Cref{fig:adapter} consumes,
what it is made of, what it is trained on, and how it was trained, for the LLaDA-8B-Instruct
instantiation used in every LLaDA experiment of this paper; the Nemotron-Diffusion-3B Instruct twin is
described at the end by its differences. Every number is read from the harvest and training code and
the shipped checkpoint, not from a design document.

\subsection{The finite-difference estimator}
\label{app:estimator:fde}

\paragraph{What the estimator predicts.}
For a canvas state $y_t$ with masked positions $\mathcal{M}$, the target is the directed
\emph{entropy finite difference}
\begin{equation}
\hat{\mathbf{D}}_{ij}\;\approx\;H_i(y_t)\;-\;H_i\big(y_t\ \text{with position } j \text{ committed to its argmax}\big),
\qquad i\in\mathcal{M},\ j\in\mathcal{M},\ i\neq j,
\label{eq:app-delta}
\end{equation}
the change in the model's own entropy at receiver $i$ when source $j$ is committed --- positive when
committing $j$ makes $i$ more certain, negative when it makes $i$ less certain. A single forward pass
of the frozen model with $j$ committed measures one full column $\delta_{\cdot j}$ exactly; the
estimator's job is to return the whole matrix from the \emph{one} forward pass the decoder already
runs on $y_t$, so that the two set statistics of \Cref{alg:decode} --- the aggregate coupling
$\mu_i(P)$ and the local curvature $\hat H(P)$ --- cost no extra model evaluations.

\paragraph{Inputs (per canvas position, from the frozen backbone).}
The backbone is never trained, never grad-enabled and never modified: it runs in evaluation mode
under \texttt{no\_grad}, and a forward hook on the final normalisation layer (the input to the
language-model head) reads the post-norm hidden state $h_i\in\mathbb{R}^{4096}$ for every canvas
position. The estimator also receives the canvas token id at each position (the mask token at masked
positions), the two attention maps $A_t, A_{t-1}\in\mathbb{R}^{256\times256}$ of the current and
previous round, and nine per-position scalars: the entropy $H_i$ and confidence $\nu_i$ of the
position's predictive distribution, the top-1/top-2 probability margin, a decoded flag, a three-way
structural class (decoded / committable / other) as a one-hot, and the decode fraction $t$. The
canvas length is fixed at $L=256$ throughout.

\paragraph{Architecture (${\approx}2.2$M parameters).}
Per position, three linear maps embed the inputs --- $h_i$ through $4096\!\to\!192$, the token
embedding through $4096\!\to\!64$, the nine scalars through $9\!\to\!32$ --- and a fused linear
layer with LayerNorm produces $z_i^{(0)}\in\mathbb{R}^{192}$. Two pre-norm transformer layers over
the $256$ positions follow ($d=192$, six heads of dimension $32$, rotary relative positions, GELU
feed-forward of width $768$), each modulated by FiLM conditioning on $t$ (a two-layer MLP maps $t$ to
a per-channel scale and shift). A pairwise \emph{attention-bias} MLP maps, for every ordered pair
$(i,j)$, the five features $[A_t[i,j], A_t[j,i], A_{t-1}[i,j], A_{t-1}[j,i], \log(1+|i-j|)]$ through
$5\!\to\!16\!\to\!7$: six outputs are added to the six heads' attention logits, the seventh is an
additive edge bias $b_{ij}$ on the readout. The readout is bilinear and directed: a receiver map
$f:\mathbb{R}^{192}\!\to\!\mathbb{R}^{96}$ and a source map $g:\mathbb{R}^{192}\!\to\!\mathbb{R}^{96}$
(each a two-layer GELU MLP) and a learned $W\in\mathbb{R}^{96\times96}$ give
\begin{equation}
\hat{\mathbf{D}}_{ij}\;=\;\frac{f(z_i)^{\top}\,W\,g(z_j)}{\sqrt{96}}\;+\;b_{ij},
\label{eq:app-readout}
\end{equation}
computed for all pairs at once as one $256\times256$ matrix product. Two different maps for the two
roles are what make $\hat{\mathbf{D}}$ directed ($\hat{\mathbf{D}}_{ij}\neq\hat{\mathbf{D}}_{ji}$); the $1/\sqrt{96}$
is the usual dot-product normalisation that keeps the readout at unit scale at initialisation and has
no expressive role.

\paragraph{Cost as a function of canvas length.}
With $L$ canvas positions the per-position projections cost $O(L)$ ($\approx L\times1.05$M
multiply-adds, dominated by the $4096\to192$ and $4096\to64$ maps), the two transformer layers
$O(L\,d^2 + L^2 d)$ with $d=192$, and the pairwise terms --- the attention-bias MLP over all
$(i,j)$, the bilinear readout $F W G^{\top}$ and the edge bias --- $O(L^2)$ with small constants
($\approx 192 + 96$ multiply-adds per pair). At $L=256$ the total is ${\approx}0.6$\,GMAC
(${\approx}1.2$\,GFLOP) per round, of which the $L$-linear projections are about half; the $L^2$
terms overtake them only beyond $L\approx10^3$. Memory is $O(L^2)$ for the two attention maps and
the bias tensor. Measured in the production decoder on a shared H100 (the node was not reserved,
so this is an upper bound), the estimator adds $12.8$\,ms per decoding round against $17.9$\,ms
for the attention-input U-Net it replaced, inside the $15$\,ms per-round budget the decoder is
held to.

\paragraph{Training data: states.}
States are harvested by decoding \emph{training-side} problems with the frozen model and
snapshotting the canvas every $24$ commits, so that the first snapshot of a decode is taken at
$t=24/256\approx0.09$ and the last near $t\approx0.94$. Four corpora, all
training splits: GSM8K, MATH (the Hendrycks training set), OpenCoder, and MBPP-train; no test set
of any benchmark enters the harvest. Two decoding policies generate the states: the model's own
one-token-per-round confidence-greedy decode, and confidence-greedy-to-budget set decodes at
$\theta\in\{25,50\}$, each on the free canvas and with eligibility restricted to the leftmost $64$
positions --- so the estimator sees the states that budgeted parallel decoding actually visits, in
both block regimes. In total $34{,}142$ states over $3{,}721$ distinct problems, carrying
${\approx}111$M labelled directed edges; the mixture by corpus, in problems, states and edges:

\begin{center}
\small
\setlength{\tabcolsep}{5pt}
\begin{tabular}{lrrrrrrr}
\toprule
corpus & problems & \% & greedy singles & budgeted sets & states & \% & edges (\%) \\
\midrule
GSM8K (train)  & 1{,}111 & 29.9 & 6{,}750 & 3{,}162 & 9{,}912  & 29.0 & 32.4M (29.2) \\
MATH (train)   & 1{,}230 & 33.1 & 6{,}750 & 4{,}276 & 11{,}026 & 32.3 & 35.7M (32.2) \\
OpenCoder      & 1{,}110 & 29.8 & 6{,}750 & 3{,}233 & 9{,}983  & 29.2 & 32.3M (29.1) \\
MBPP (train)   & 270     & 7.3  & 1{,}620 & 1{,}601 & 3{,}221  & 9.4  & 10.4M (9.4) \\
\midrule
total          & 3{,}721 & 100  & 21{,}870 & 12{,}272 & 34{,}142 & 100 & 111M (100) \\
\bottomrule
\end{tabular}
\end{center}

\noindent Edge counts are $|\text{eligible receivers}|\times24$ per state and vary with $t$; the
percentages show that the mixture is balanced in problems, states and edges alike except for MBPP.
No state with a fully masked canvas ($t=0$) is in the training set: the first round of every decode
--- prompt plus an all-mask span --- is outside the harvested range, and the estimator extrapolates
to it through its $t$-conditioning. The held-out evaluation of \Cref{fig:estimator-quality-detail} does
include $t=0$ states (its first bin), where the estimator tracks the gold local curvature as it
does elsewhere; adding a $t=0$ snapshot to the harvest is a one-line change we have not made.

Splits are by \emph{problem}, never by state: the hash of (corpus, problem id) modulo $10$ assigns
$0$--$7$ to training, $8$ to validation and $9$ to an internal test split ($27{,}231$ / $3{,}468$ /
$3{,}443$ states), so no problem contributes states to two splits. The MBPP arm is about a quarter
the size of the other three, so the code side of the mixture is dominated by OpenCoder.

\paragraph{Training data: labels.}
Each state costs about $28$ additional forward passes of the frozen model. Twenty-four
\emph{single plants} --- the $16$ most confident eligible positions and $8$ eligible positions drawn
at random, to break the selection bias of confidence --- each commit one source $j$ and re-run the
model, giving the exact column $\delta_{\cdot j}$ of \Cref{eq:app-delta} at every eligible receiver:
${\approx}232\times24\approx3{,}300$ labelled directed edges per state, ${\approx}111$M in total.
The loss is evaluated \emph{only} on this labelled block --- the eligible receivers of the $24$
planted sources --- never on the full $256\times256$ output: entries whose source was not planted
have no label and receive no gradient, so the estimator is supervised on ${\approx}5\%$ of the
matrix it emits, at every state.
Three \emph{joint plants} of sizes $4$, $8$ and $12$ (confidence-greedy and random sets alternating)
give the gold non-additive curvature $|H(S)|$ of a set; these are used for evaluation and
diagnostics only and never enter the loss. Each state is stored once in fp16 (${\approx}2.3$\,MB).

\paragraph{Objective and optimisation.}
The shipped model is the plainest recipe in the training ladder: a Huber loss ($\delta_{\rm
Huber}=0.2$, comparable to the typical label magnitude of $0.1$--$0.2$ nats) on the labelled edges
only, unweighted, with no aggregate-mass term and no auxiliary head. AdamW with learning rate
$3\times10^{-4}$ (warm-up, then cosine decay to $3\times10^{-5}$), weight decay $0.01$, gradient
clipping at $1.0$, gradient accumulation over $16$ states per step, ten epochs configured; the
model is selected by validation per-edge correlation at the end of every epoch and training stops
when it fails to improve. The shipped checkpoint is the epoch-$7$ model, validation per-edge
correlation $0.827$. Held-out quality on the four \emph{test} benchmarks --- per-edge, aggregate
coupling and local curvature --- is reported in \Cref{fig:estimator-quality}
(\Cref{fig:estimator-quality-detail} shows the same evaluation as gold-versus-estimate scatter and
as recovered edge mass by magnitude).

\begin{figure}[t]
  \centering
  \includegraphics[width=\textwidth]{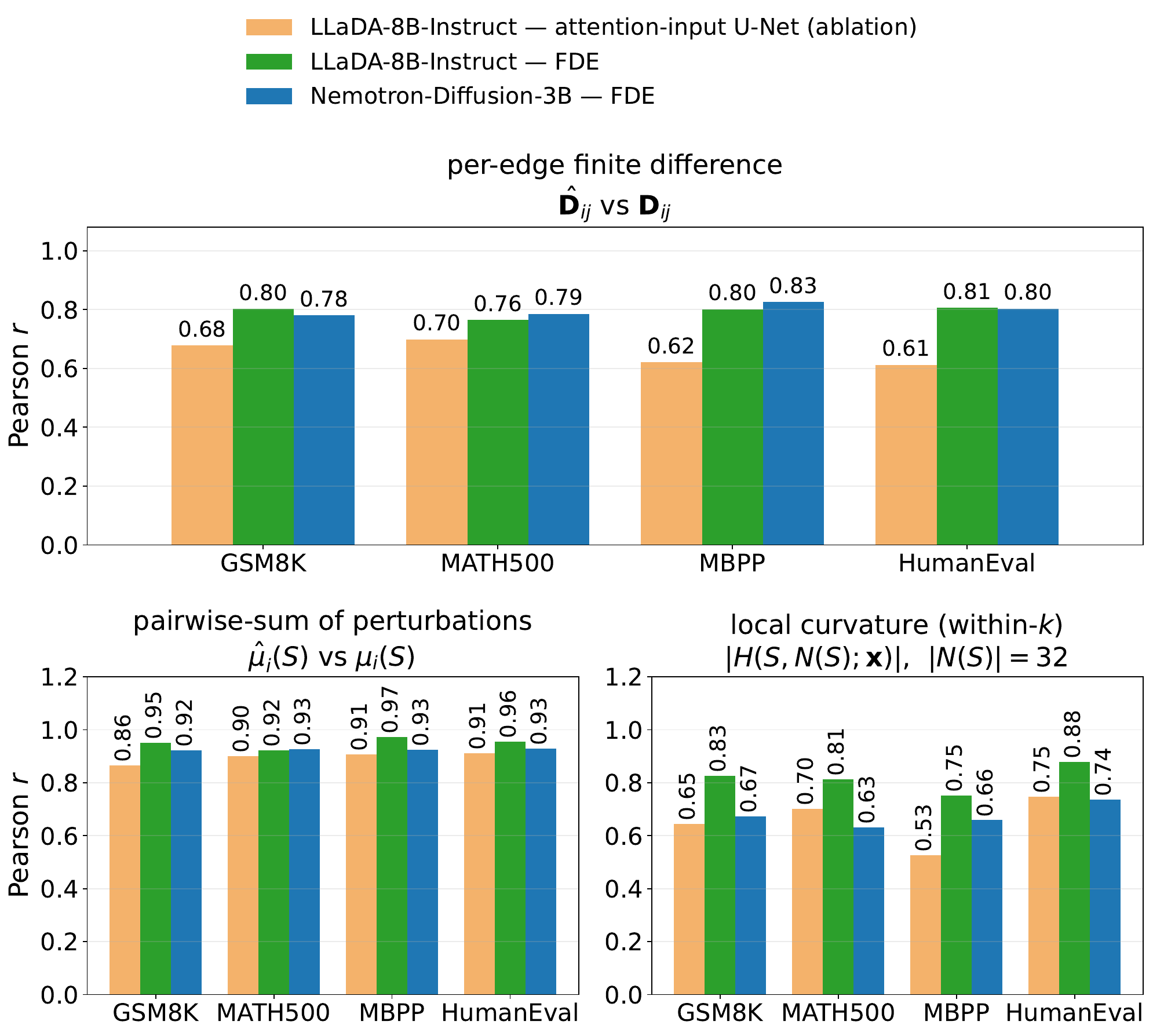}
  \caption{\textbf{Estimator quality on held-out test states.} Pearson correlation between the
  exact quantity and its estimate, per benchmark, for the three quantities the decoders consume:
  the per-edge finite difference $\hat{\mathbf{D}}_{ij}$ against $\mathbf{D}_{ij}$ (left), the
  aggregate coupling $\hat\mu_i(S)$ along the confidence walk (middle), and the local curvature
  $|H(S,N(S);\mathbf{x})|$ over the next $32$ candidates (right; median over set sizes of the
  per-size correlation). Green: the LLaDA-8B-Instruct estimator; orange: the attention-input U-Net
  it replaced (ablation); blue: the Nemotron-Diffusion-3B twin on block-$64$ test states.}
  \label{fig:estimator-quality}
\end{figure}

\begin{figure}[t]
  \centering
  \providecommand{\detailfigscale}{1}%
  \includegraphics[width=\detailfigscale\textwidth]{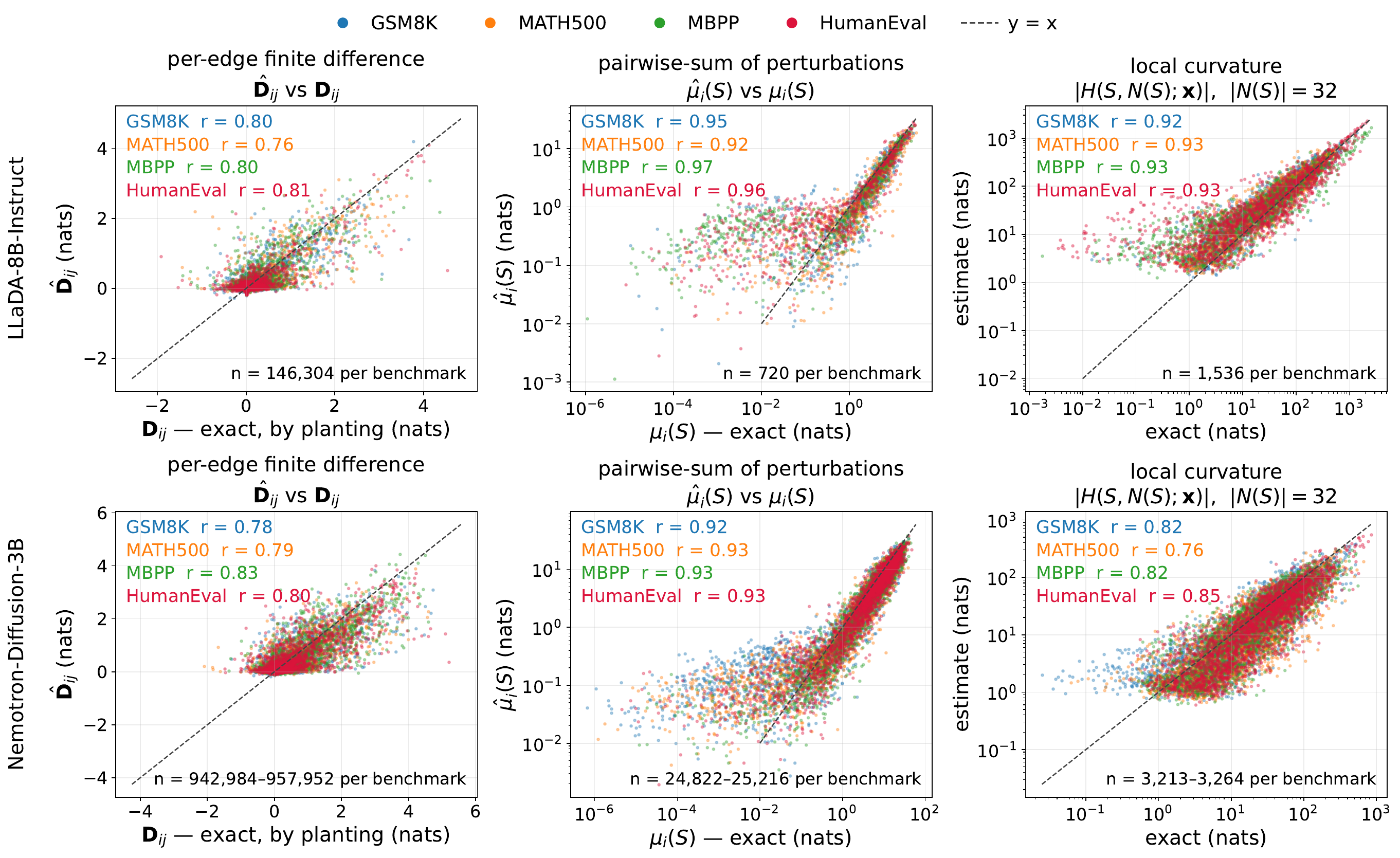}\\[4pt]
  \includegraphics[width=\detailfigscale\textwidth]{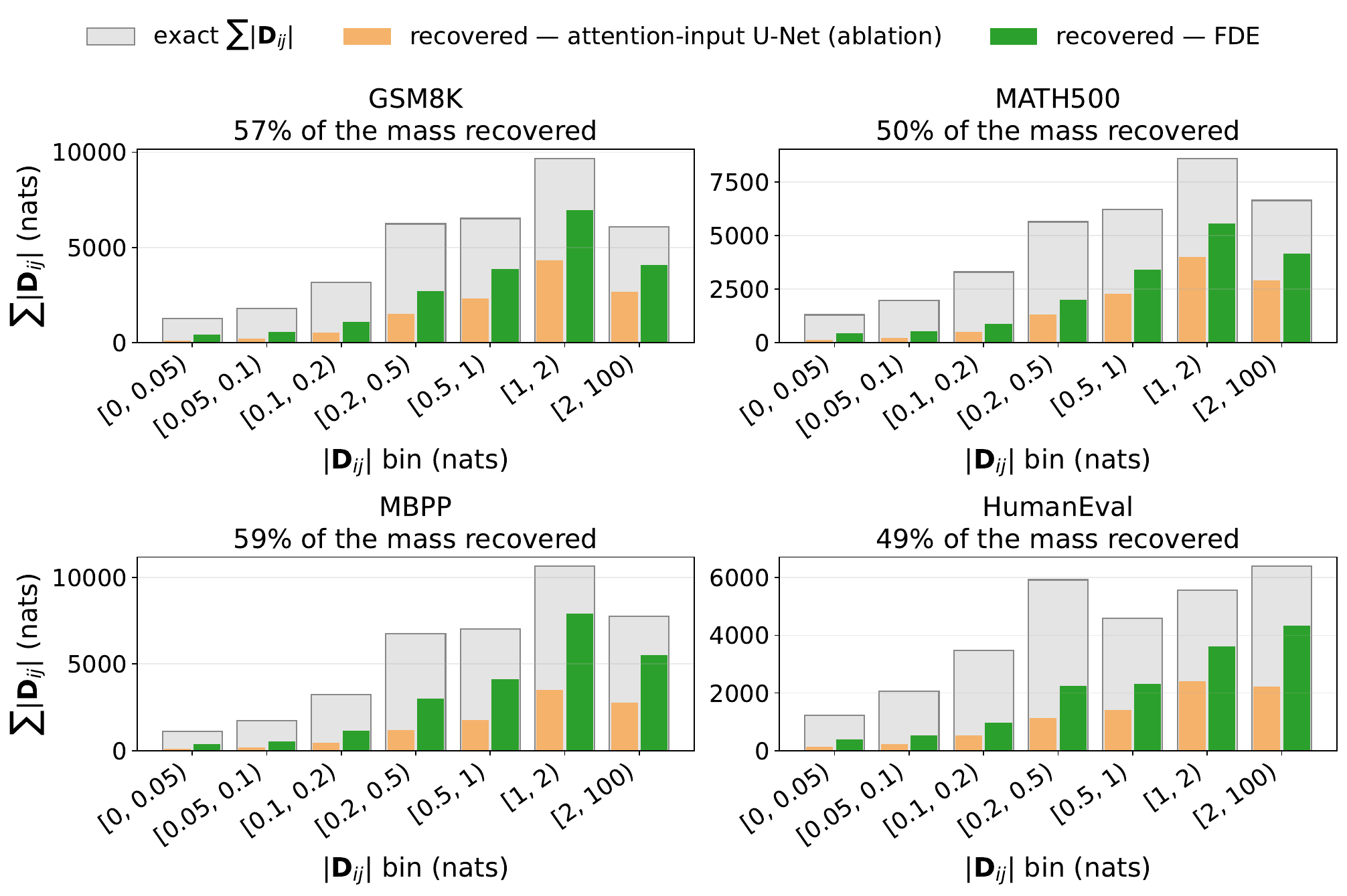}
  \caption{\textbf{The same evaluation in detail.} Top: exact (x) versus estimate (y) on test
  states, one colour per benchmark, LLaDA-8B-Instruct (upper row) and Nemotron-Diffusion-3B (lower
  row), with the per-benchmark correlation printed in each panel; the curvature correlation here
  pools set sizes $|S|\in\{2,\dots,24\}$ and is therefore higher than the within-size value of
  \Cref{fig:estimator-quality}. Bottom: on LLaDA test states, the exact mass
  $\sum|\mathbf{D}_{ij}|$ in each magnitude bin (grey) and the sign-consistent mass the estimator
  recovers (green; $49$--$59\%$ of the total, the U-Net's $26$--$34\%$ in orange): the estimator
  under-shoots in magnitude but does so uniformly across bins.}
  \label{fig:estimator-quality-detail}
\end{figure}

\Cref{alg:estimator-train} states the whole procedure in one place. Its two halves are the two
costs of this estimator. The harvest is the expensive one: every stored state pays one forward pass
to record the model's own readouts and then twenty-four more, one per planted source, each
committing a single position and re-scoring the canvas so that an exact column of the
finite-difference matrix can be read off. Nothing is approximated at this stage; the labels are
differences of entropies the frozen model actually produced. The fit is the cheap one, a Huber
regression of the adapter's output onto that labelled block, and its only subtlety is that the loss
touches only the planted columns, so roughly nineteen twentieths of the matrix the adapter emits is
never supervised and is held to account solely by the held-out evaluation.

\paragraph{The Nemotron-Diffusion-3B twin.}
The same architecture and recipe are re-instantiated for the block-causal
Nemotron-Labs-Diffusion-3B with three differences forced by the model. (i) The hidden state is
$3072$-dimensional and is read from the model's own last-hidden-state output rather than a hook (its
diffusion head is applied to that output, so this reproduces the logits exactly); the two input
projections are re-dimensioned accordingly. (ii) The model never attends across blocks, so the
estimator operates on one block: the transformer layers run over the block's positions with
block-relative rotary positions, the structural class adds a state for the autoregressively seeded
block head, the pairwise bias MLP takes three features, and states are harvested at block lengths
$\{8,32,128\}$ stratified over problems --- block length $64$ is deliberately never harvested and is
the held-out generalisation geometry (it is also the sliding-window operating block). (iii) Plants
re-run only the current block against the cached prefix, so a fully planted state costs roughly
$0.5$\,s instead of ${\approx}5$\,s. Corpora, split rule (by problem), plant recipe ($16$
top-confidence $+$ up to $8$ random sources), Huber-$0.2$ objective and optimiser are unchanged.
The harvest holds $24{,}167$ states; the shipped checkpoint is the best-validation epoch
(epoch $7$), at a validation edge correlation of $0.794$ --- read from the checkpoint's stored
metadata, as for LLaDA.

\subsection{The calibrated mismatch table}
\label{app:estimator:table}
\label{app:results:matrix}

The estimator of \Cref{app:estimator:fde} returns finite differences; the decoder needs
probabilities. Both decoders obtain them from a second, far simpler object: a table of empirical
mismatch rates indexed by two features of \Cref{eq:mismatch-table}, the committing position's
aggregate coupling $\hat\mu$ to the positions already accepted and either its confidence $\nu$,
giving the table $\hat p_\nu$ that \PIGint{} reads (\Cref{fig:flip-matrix}), or its entropy $H$,
giving the table $\hat p_H$ that \PIGext{} reads (\Cref{fig:entropy-table}). Neither table is a
model. Each is a binned average of labelled commits, frozen once per backbone and read by lookup
at decode time. We describe $\hat p_\nu$ first, then the two points on which $\hat p_H$ differs.

\paragraph{Labels and the data mixture.} The labels come from sequential-prefix replay, and from a
\emph{different} corpus mixture than the estimator's. \Cref{alg:table-fit} gives the procedure.
Training-split problems from two corpora only, GSM8K-train and OpenCoder, are decoded one token per
round in decreasing confidence, so that every state the probe visits is one the decoder would
actually reach. Every sixteenth commit the probe stops, builds the production one-pass graph at
that canvas, and assembles about thirty candidate sets from six samplers: confidence-greedy sets
grown until their coupling crosses a cut, top-$K$ and bottom-$K$ by confidence, uniform random
sets, sets grown greedily along the largest edges, and mixtures of the top half by confidence with
a random half. The spread is the point: a table fitted only on the sets a budget would choose would
have no data in the cells a budget must be able to price in order to refuse them. Each set is then
replayed sequentially from that same canvas, and every member yields one row, its confidence, its
aggregate coupling to the members ahead of it, and a binary label for whether the parallel commit
survived the replay. No test split of any benchmark is touched. The LLaDA table is fitted on $792{,}841$ labelled commits, the Nemotron table on
$601{,}730$. The narrower mixture is deliberate: the table has two input features and a few dozen
cells, so it saturates on far less data than the estimator's $111$M labelled edges, and the two
corpora were chosen to bracket the math and code sides of the evaluation.

\paragraph{Binning.} Confidence is cut at the fixed edges $0,\,0.2,\,0.4,\,0.6,\,0.7,\,0.8,\,0.9,
\,0.95,\,1$, eight bins that tighten where admission actually happens. The coupling axis has no
natural scale, so its five bins are cut at the $20$th, $40$th, $60$th and $80$th percentiles of
the \emph{positive} couplings observed in the probe, with the top edge at the $99.9$th percentile
of all of them; a cell holding fewer than $50$ commits falls back to the pooled rate of its
confidence row, which is why no cell is ever read off a handful of observations. The cell value is
the raw mismatch frequency, with no smoothing, shrinkage or parametric fit, so the table carries no
assumption about how risk varies between cells and can express the non-monotone coupling profile we
observe below $\nu=0.6$. Two consequences are worth stating. The coupling edges are quantiles of
\emph{the estimator's own} predicted couplings rather than of gold ones, so the table is matched to
the estimator it will be read beside and must be refitted whenever that estimator changes. And
because the probe advances one token per round while the sets it scores are much wider, the table
is fitted on states a fast decoder reaches but on sets a slow decoder would never commit, which is
what lets a single table price both the budget's acceptances and its refusals.

\paragraph{Why the table is fitted per model.} The coupling edges are model-specific and differ
substantially: $0.26/0.87/2.38/5.91/35.4$ on LLaDA against $0.13/0.76/2.47/5.56/26.7$ on Nemotron,
because Nemotron's edge magnitudes are smaller. Levels differ too, with Nemotron mismatching more
at low confidence ($0.363$ against $0.265$ in the cheapest cell). A table fitted on one backbone
mis-prices the other, so each backbone is decoded with its own; \Cref{fig:flip-matrix} shows both.

\begin{figure}[t]
  \centering
  \includegraphics[width=\textwidth]{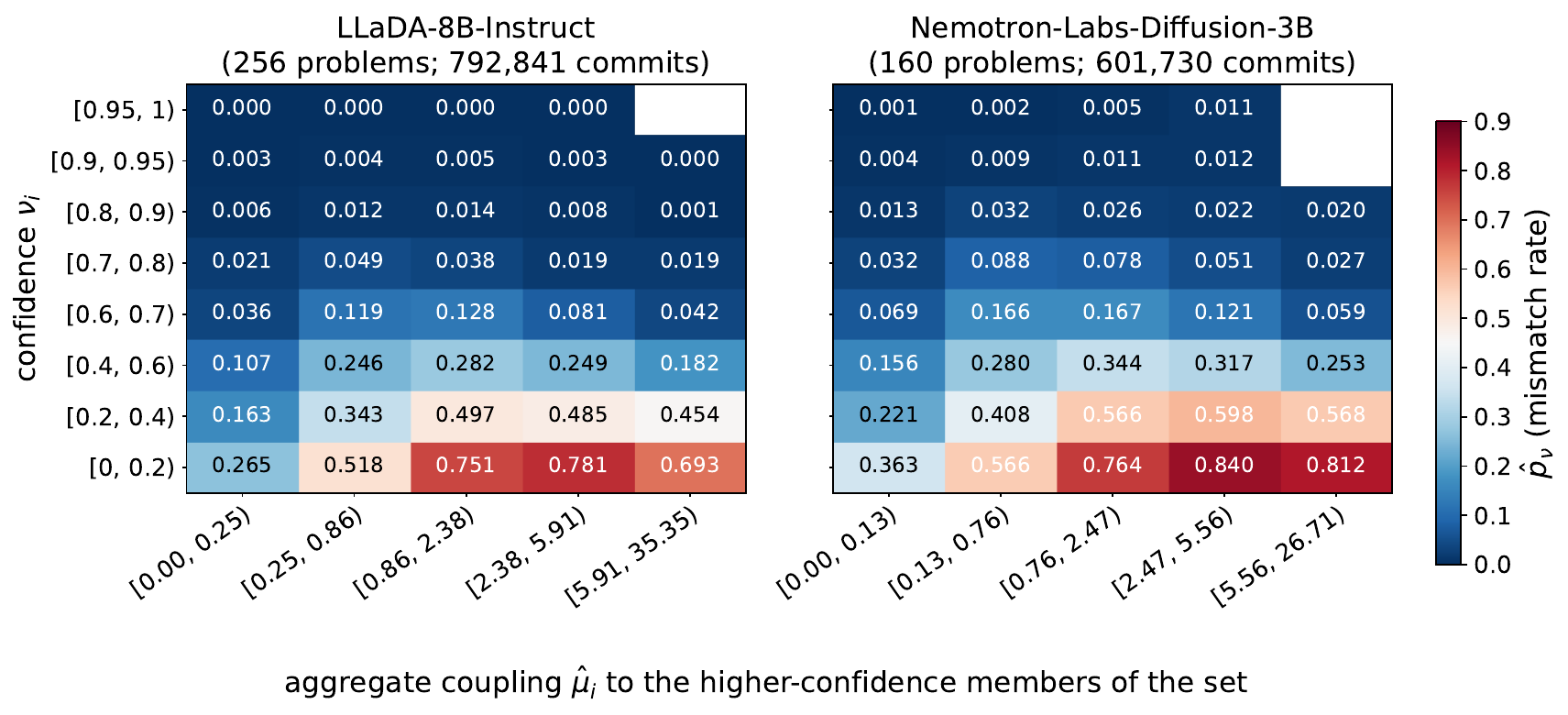}
  \caption{\textbf{The calibrated mismatch table $\hat p_\nu$ read by \PIGint{}}, one per backbone
  (left LLaDA-8B-Instruct, right Nemotron-Labs-Diffusion-3B). Rows: the committing position's
  confidence $\nu_i$ at the fixed edges of \Cref{alg:table-fit}; columns: its aggregate coupling
  $\hat\mu_i$ to the higher-confidence members of the set, at the quantile edges of each model's
  own fit. A cell is the empirical rate at which a parallel commit differed from its sequential
  replay, pooled over the GSM8K-train and OpenCoder probes; a cell holding fewer than $50$ commits
  is left blank and is read by the decoder at the pooled rate of its row.}
  \label{fig:flip-matrix}
\end{figure}

\paragraph{Corpus transfer, and one transfer we assume rather than measure.}
\Cref{fig:flip-matrix-full} shows the two per-corpus fits and their difference. In the decision
region ($\nu\ge0.6$), where a budget actually spends itself, the largest absolute disagreement
between the GSM8K-train and OpenCoder fits is $0.024$ on LLaDA and $0.028$ on Nemotron, which is
why the table we use pools them. Differences above $0.05$ occur only at $\nu<0.4$, where math
tokens mismatch more than code tokens at matched coupling, reaching $-0.18$ in the LLaDA
$[0,0.2)\times[0.25,0.86)$ cell. With $0.3$--$0.6$M commits per corpus even a difference of $0.005$
is statistically significant, so the agreement above is a statement about size, not about noise.
Regime transfer is a weaker claim: the LLaDA table was fitted on free-canvas states and the
Nemotron table on a sliding $32$-token window, and both are then applied at every block geometry we
report. We assume that transfer; we have not measured it.

\begin{figure}[t]
  \centering
  \includegraphics[width=\textwidth]{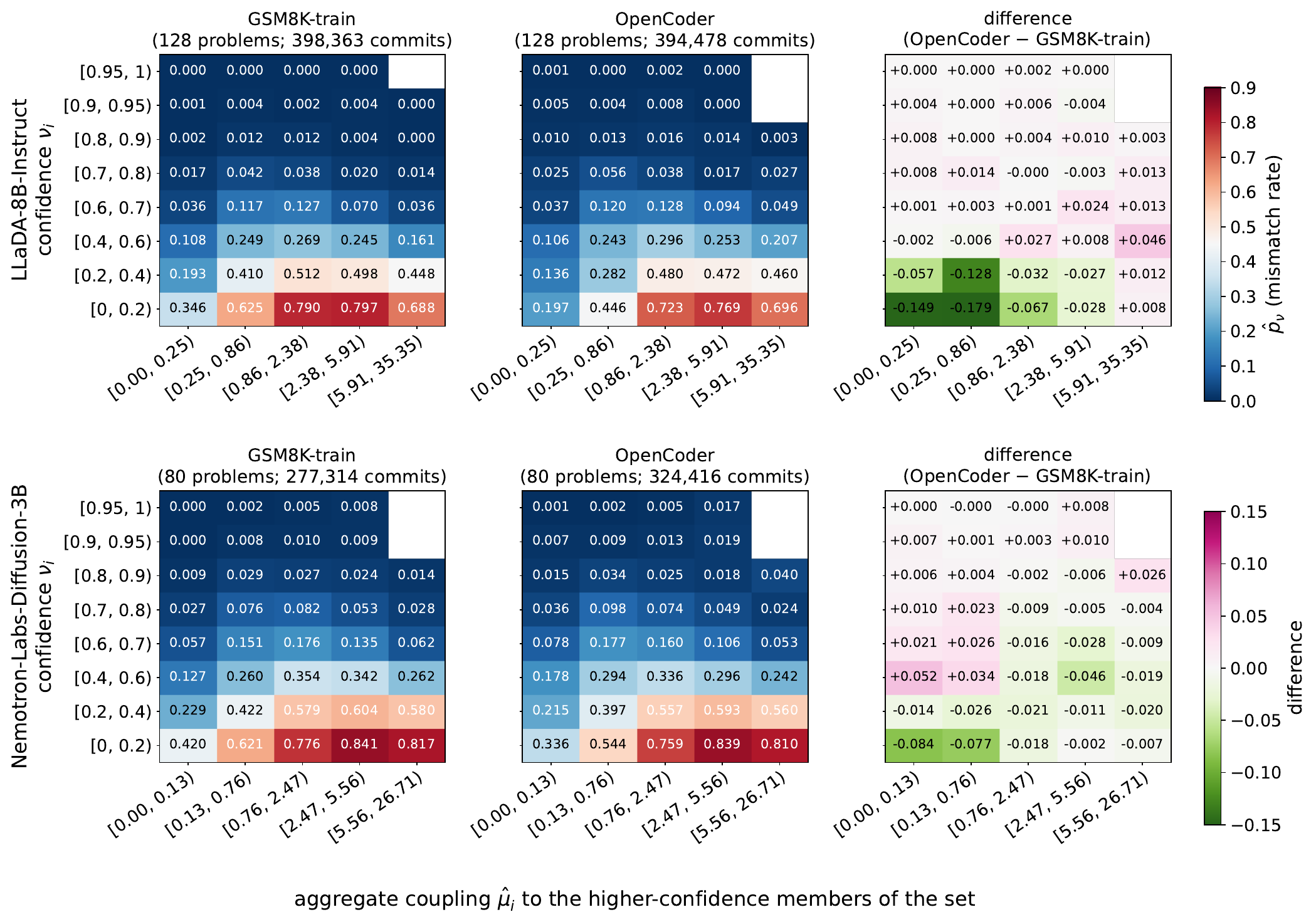}
  \caption{\textbf{Corpus transfer of $\hat p_\nu$.} The table of \Cref{fig:flip-matrix} fitted on
  each training corpus alone, GSM8K-train (left) and OpenCoder (middle), at the pooled bin edges,
  and their difference (right; green where code mismatches less than math at matched coupling,
  magenta where more). Top row LLaDA-8B-Instruct, bottom row Nemotron-Labs-Diffusion-3B. A cell is
  drawn only where both corpora hold at least $50$ commits.}
  \label{fig:flip-matrix-full}
\end{figure}

\paragraph{The entropy-indexed table $\hat p_H$.}
\PIGext{} prices each upcoming candidate $i\in N(S)$ twice, at its current entropy $H_i$ and at
the entropy it is predicted to have once $S$ is committed, $\hat H_i(\mathbf{x}\mid
S)=H_i-\max_{j\in S}\hat{\mathbf D}_{ij}$ (\Cref{eq:future-entropy,eq:predicted-cross-regret}).
The second argument is a prediction on the entropy axis to which no confidence corresponds, so
the lookahead needs the mismatch rate indexed by the entropy of the committing position's
predictive distribution rather than by its confidence. \Cref{fig:entropy-table} shows the
resulting table for both backbones. It is fitted by \Cref{alg:table-fit} with two changes. The row
feature is the receiver's entropy $H_{s_k}$ at the probed canvas, in nats, cut at the fixed edges
$0,\,0.1,\,0.25,\,0.5,\,1,\,1.75,\,2.5,\,3.5,\,4.5,\,6$ and $\infty$, ten bins that tighten at low
entropy where the confident candidates sit (a position at confidence $0.8$ typically has an
entropy near $1$ nat); and the coupling edges are re-derived on the table's own rows by the same
quantile rule, so on LLaDA they differ slightly from those of $\hat p_\nu$
($0.26/0.89/2.45/6.03/36.0$) and on Nemotron they coincide with them. The thin-cell rule is
unchanged. The top row, $H\ge6$ nats, holds no commit on LLaDA and a single one on Nemotron, since
receiver entropies at probe states stay below about $5.5$ nats; it is filled with zeros on LLaDA
and with the row below it on Nemotron, is not read in practice, and is not drawn. The rows are
labelled commits of the same probe. On Nemotron they are the same $601{,}730$ commits: the probe
was re-run recording each member's entropy alongside its confidence, and reproduced the
confidence table's probe exactly. On LLaDA the confidence probe did not record entropy, so
$\hat p_H$ is fitted on a separate probe of the same procedure over $64$ problems per corpus,
$395{,}854$ commits, half the size of the confidence probe.

Two checks. Held out by problem (fitted on the even-numbered problems of each corpus, scored on
the odd), the entropy table is calibrated on both backbones, with a calibration slope of $1.02$
on LLaDA and $1.00$ on Nemotron, at a Brier score of $0.148$ and $0.129$ against $0.137$ and
$0.120$ for the confidence table on the same rows. Confidence is therefore the slightly sharper
single axis for the within-round price, which is why \PIGint{} keeps it, and entropy is the axis
the lookahead requires. Corpus transfer is weaker than for $\hat p_\nu$ and sits elsewhere: the
two per-corpus fits agree within $0.054$ on LLaDA and $0.041$ on Nemotron below $1.75$ nats, but
at high entropy code tokens mismatch less than math tokens at matched coupling, by up to $0.19$
on LLaDA between $2.5$ and $4.5$ nats (and $0.29$ in one cell above $4.5$ nats that holds only
$52$ GSM8K-train commits) and up to $0.17$ on Nemotron above $4.5$ nats. Unlike the confidence
table, whose decisions fall almost entirely where the corpora agree, the lookahead sums over the
next $h$ candidates whatever their entropy, so this disagreement can reach a decision. Every
\PIGext{} result in the paper uses the pooled table, which averages the two corpora at their
commit counts; we have not isolated the effect of the disagreement on the decoded frontier.

\begin{figure}[t]
  \centering
  \includegraphics[width=\textwidth]{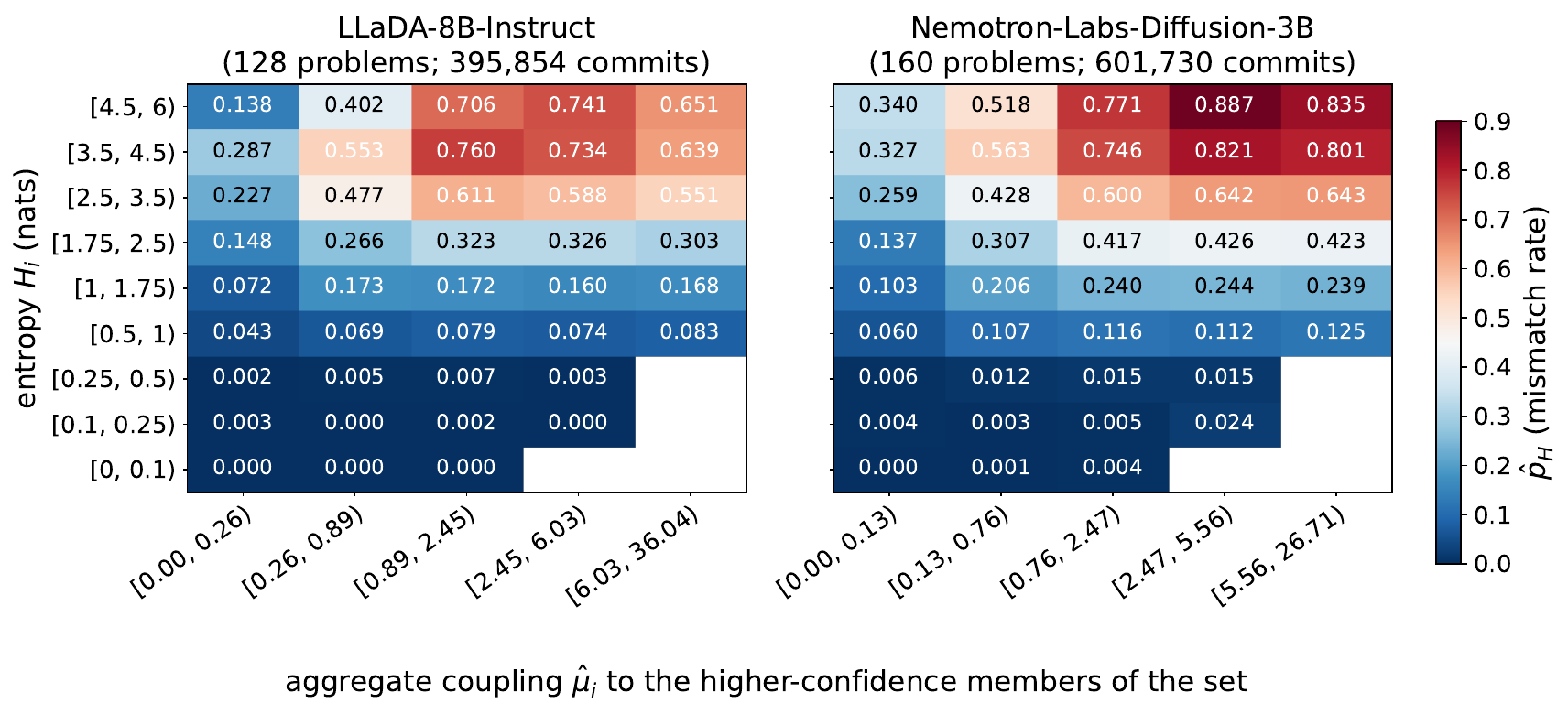}
  \caption{\textbf{The entropy-indexed mismatch table $\hat p_H$ read by \PIGext{}}, one per
  backbone (left LLaDA-8B-Instruct, right Nemotron-Labs-Diffusion-3B). As \Cref{fig:flip-matrix},
  with the committing position's entropy $H_i$ in nats as the row feature, at the fixed edges
  given in the text; the empty top row ($H_i\ge6$) is not drawn. Fitted on the labelled commits of
  the same probe procedure, pooled over GSM8K-train and OpenCoder (Nemotron: the same $601{,}730$
  commits as \Cref{fig:flip-matrix}; LLaDA: a separate probe of $395{,}854$); a cell holding fewer
  than $50$ commits is left blank.}
  \label{fig:entropy-table}
\end{figure}

\clearpage

\section{Algorithms}
\label{app:pig}
\label{app:algorithms}

The three boxes below state the decoders of \Cref{sec:method} in full. \Cref{alg:iterative_diffusion}
is the outer loop shared by every method in the paper: one forward pass per round, a selection rule
that returns the set $S$ to commit, and a parallel commit. \Cref{alg:pig-int} is \PIGint{}, the
selection rule that spends the single-round regret budget $\varphi$ of \Cref{eq:pig-oracle} along the
confidence order; \Cref{alg:pig-ext} is \PIGext{}, the rule that widens the
round until the predicted cross-round
exceeds $\theta$. Both read the same one-pass quantities --- the argmax predictions $\hat x_i$,
confidences $\nu_i$, entropies $H_i$, and the adapter's finite-difference matrix $\hat{\mathbf D}$ of
\Cref{eq:finite-difference} --- and the same calibration tables $\hat p_\nu,\hat p_H$ of
\Cref{eq:mismatch-table}; they differ only in which regret their single knob bounds, and are
alternatives rather than stages of one decoder. Fast-dLLM is \Cref{alg:iterative_diffusion} with the
selection rule $S=\{i\in M:\nu_i\ge\nu\}$.

\begin{algorithm}[h]
\caption{Iterative parallel decoding (outer loop, shared by every decoder).}
\label{alg:iterative_diffusion}
\begin{algorithmic}[1]
\Require prompt tokens, canvas length $N$, selection rule $\textsc{Select}\in\{\text{\PIGint{}}(\varphi),\ \text{\PIGext{}}(\theta),\ \text{Fast-dLLM}(\nu)\}$;
        the adapter and the calibration tables $\hat p_\nu,\hat p_H$, fitted once per backbone
\State $\mathbf{x}_0\gets$ prompt followed by $N$ mask tokens;\quad $M\gets$ its masked positions;\quad $t\gets0$
\While{$M\neq\varnothing$}
    \Statex \hspace{\algorithmicindent}\textbf{Phase 1: one forward pass}
    \State $q_\theta^i(\cdot\mid\mathbf{x}_t)$ for $i\in M$ from the backbone;\quad
           $\hat x_i\gets\arg\max_v q_\theta^i(v\mid\mathbf{x}_t)$,\ $\nu_i\gets\max_v q_\theta^i(v\mid\mathbf{x}_t)$,\ $H_i\gets H\big(q_\theta^i(\cdot\mid\mathbf{x}_t)\big)$
    \State $\hat{\mathbf D}\gets\textsc{Adapter}(\text{backbone features})$ \Comment{one-pass estimate of \Cref{eq:finite-difference}, $\hat{\mathbf D}\in\mathbb{R}^{|M|\times|M|}$}
    \Statex \hspace{\algorithmicindent}\textbf{Phase 2: select the set to commit}
    \State $S\gets\textsc{Select}\big(M,\ \nu,\ H,\ \hat{\mathbf D}\big)$ \Comment{\Cref{alg:pig-int} or \Cref{alg:pig-ext}}
    \Statex \hspace{\algorithmicindent}\textbf{Phase 3: commit in parallel}
    \State $\mathbf{x}_{t+1}\gets\mathbf{x}_t\oplus\hat x_S$;\quad $M\gets M\setminus S$;\quad $t\gets t+1$
\EndWhile
\State \Return $\mathbf{x}_t$ \Comment{$t$ forward passes; TPF $=N/t$}
\end{algorithmic}
\end{algorithm}

\begin{algorithm}[h]
\caption{\PIGint{} --- selection under the single-round regret budget $\varphi$.}
\label{alg:decode}
\label{alg:iterative}
\label{alg:pig-int}
\begin{algorithmic}[1]
\Require masked positions $M$; confidences $\nu_i$; adapter matrix $\hat{\mathbf D}$; budget $\varphi$; table $\hat p_\nu$
\State $\bar{\mathbf D}_{ij}\gets\max(|\hat{\mathbf D}_{ij}|,|\hat{\mathbf D}_{ji}|)$ for $i,j\in M$ \Comment{\Cref{eq:aggregate-influence}}
\State $S\gets\varnothing$;\quad $B\gets\varphi$
\For{$i\in M$ in decreasing $\nu_i$}
  \State $\hat\mu_i(S)\gets\sum_{j\in S}\bar{\mathbf D}_{ij}$ \Comment{aggregate influence of the set already accepted}
  \State $b_i\gets\hat p_\nu\big(\nu_i,\ \hat\mu_i(S)\big)$ \Comment{predicted probability that $\hat x_i\neq x^\star_i$, \Cref{eq:mismatch-table}}
  \If{$b_i\le B$}
    \State $S\gets S\cup\{i\}$;\quad $B\gets B-b_i$ \Comment{the token fits the remaining budget}
  \Else
    \State \textbf{break} \Comment{the round closes at the first candidate that does not fit}
  \EndIf
\EndFor
\State \Return $S$
\end{algorithmic}
\end{algorithm}

\begin{algorithm}[h]
\caption{\PIGext{} --- selection under the cross-round regret budget $\theta$.}
\label{alg:pig-ext}
\begin{algorithmic}[1]
\Require masked positions $M$; confidences $\nu_i$; entropies $H_i$; adapter matrix $\hat{\mathbf D}$; window size $h$; budget $\theta$; table $\hat p_H$
\State $S\gets\varnothing$
\For{$j\in M$ in decreasing $\nu_j$}
  \State $S\gets S\cup\{j\}$ \Comment{admit first: the round widens by the next most confident token}
  \State $N(S)\gets$ the $h$ most confident positions of $M\setminus S$
  \State $\widehat H_i(\mathbf{x}\mid S)\gets H_i-\max_{k\in S}\hat{\mathbf D}_{ik}$ for $i\in N(S)$ \Comment{entropy after committing $S$, \Cref{eq:future-entropy}}
  \State $\hat{C}(S)\gets\sum_{i\in N(S)}\Big[\hat p_H\big(H_i,\hat\mu_i(S\cup N(S))\big)-\hat p_H\big(\widehat H_i(\mathbf{x}\mid S),\hat\mu_i(S\cup N(S))\big)\Big]$ \Comment{\Cref{eq:predicted-cross-regret}}
  \If{$\hat{C}(S)>\theta$}
    \State \textbf{break} \Comment{waiting one round is predicted to remove more than $\theta$ regret on $N(S)$; $j$ stays in $S$}
  \EndIf
\EndFor
\State \Return $S$
\end{algorithmic}
\end{algorithm}

In \Cref{alg:pig-ext} there is no admission test: the statistic closes the round, it never refuses a
token, and the token that first pushes the predicted gain of waiting past $\theta$ stays committed.
Since $N(S)$ is re-read after every admission, near the end of a canvas it is clipped by the pool
boundary, the gain of waiting shrinks, and the walk simply runs out the pool.

The two boxes below are offline and are run once per backbone, before any decoding. They produce
the two objects every decoder above reads: \Cref{alg:estimator-train} fits the adapter that returns
$\hat{\mathbf D}$ in one pass, and \Cref{alg:table-fit} fits the mismatch tables $\hat p_\nu$ and
$\hat p_H$ that convert a confidence, or an entropy, and an aggregate coupling into a probability.
Neither is touched at decode
time. \Cref{app:estimator} discusses both.

\begin{algorithm}[t]
\caption{Fitting the entropy finite differences estimator (offline, once per model).}
\label{alg:estimator-train}
\begin{algorithmic}[1]
\Require frozen dLLM $p_\theta$; training-side problems $\mathcal{D}$; decoding policies $\Pi$
        (confidence-greedy singles; confidence-greedy-to-budget sets at $\theta\in\{25,50\}$, free
        canvas and leftmost-$64$ block); snapshot period $s=24$; sources per state $16$ top-confidence
        $+\,8$ random
\Statex \textbf{Harvest} (states and exact labels)
\For{each problem $q\in\mathcal{D}$ and policy $\pi\in\Pi$}
  \State decode $q$ with $p_\theta$ under $\pi$; every $s$ commits, snapshot the canvas $y$
  \For{each snapshot $y$, with masked positions $C$}
    \State forward $p_\theta(y)$: hidden states $h_i$, token ids, attention maps $A_t,A_{t-1}$,
           scalars $(H_i,\nu_i,\dots,t)$ for all $i$; $H_i(y)$ for $i\in C$
    \State $S\gets$ the $16$ most confident $i\in C$ $\cup$ $8$ further $i\in C$ drawn at random
    \For{each source $j\in S$} \Comment{one extra forward per source}
      \State $y^{(j)}\gets y$ with $j$ committed to $\hat x_j$;\ forward $p_\theta(y^{(j)})$
      \State $\delta_{ij}\gets H_i(y)-H_i(y^{(j)})$ for all $i\in C\setminus\{j\}$
             \Comment{exact column $\delta_{\cdot j}$, \Cref{eq:app-delta}}
    \EndFor
    \State store $\big(h, \text{ids}, A_t, A_{t-1}, \text{scalars}, C, S, \{\delta_{ij}\}\big)$ in fp16
  \EndFor
\EndFor
\State split the stored states by \emph{problem}: $\mathrm{md5}(\text{corpus}{:}q)\bmod 10\in\{0..7\}$
       train, $8$ validation, $9$ test
\Statex \textbf{Fit}
\State initialise the adapter $g_\phi$ (\Cref{eq:app-readout}); AdamW, lr $3\!\times\!10^{-4}$
       warm-up then cosine to $3\!\times\!10^{-5}$, weight decay $0.01$, clip $1.0$
\For{epoch $=1,\dots,10$}
  \For{each training state, in batches of $16$ states per optimiser step}
    \State $\hat\delta\gets g_\phi(h,\text{ids},A_t,A_{t-1},\text{scalars})$ \Comment{one $L\times L$ matrix}
    \State $\mathcal{L}\gets\frac{1}{|C|\,|S|}\sum_{i\in C}\sum_{j\in S,\,j\neq i}
           \mathrm{Huber}_{0.2}\big(\hat\delta_{ij}-\delta_{ij}\big)$
           \Comment{labelled block only; no gradient elsewhere}
    \State gradient step on $\phi$
  \EndFor
  \State evaluate the per-edge correlation $\mathrm{corr}(\hat\delta,\delta)$ on the validation
         states; keep the checkpoint if it improved, stop if it has not
\EndFor
\State \Return the best-validation checkpoint (epoch $7$, $r=0.827$ for LLaDA-8B-Instruct)
\end{algorithmic}
\end{algorithm}

\begin{algorithm}[t]
\caption{Fitting the calibrated mismatch table $\hat p_\nu$ (offline, once per model). The
entropy-indexed table $\hat p_H$ follows the same procedure with each member's entropy $H_{s_k}$
in place of its confidence $\nu_{s_k}$ and the entropy edges of \Cref{app:estimator:table} in
place of $\mathrm{CT}$.}
\label{alg:table-fit}
\begin{algorithmic}[1]
\Require frozen dLLM $q_\theta$; the fitted estimator $g_\phi$ of \Cref{alg:estimator-train};
        training-side problems $\mathcal{D}$ from GSM8K-train and OpenCoder; probe period $16$
        commits; samplers $\Sigma$; thin-cell floor $\tau=50$
\Statex \textbf{Probe} (labelled commits, on the decoder's own state distribution)
\For{each problem $q\in\mathcal{D}$}
  \State decode $q$ committing \emph{one} position per round, the most confident
         \Comment{every visited state is on-distribution}
  \For{every $16$th commit, at canvas $\mathbf{x}$ with eligible positions $M$}
    \State $\hat{\mathbf D}\gets g_\phi(\mathbf{x})$ \Comment{the production one-pass graph}
    \State build ${\approx}30$ candidate sets $S\subseteq M$ with $\Sigma$ (confidence-greedy to a
           coupling cut; top-$K$; bottom-$K$ by confidence; random-$K$; greedy max-edge; mixed),
           deduplicated by membership
    \For{each set $S=(s_1,\dots,s_{|S|})$ in decreasing confidence}
      \State replay $S$ sequentially from $\mathbf{x}$:
             $x^\star_{s_k}=\arg\max_v q_\theta\big(v\mid\mathbf{x}\oplus x^\star_{s_{<k}}\big)$
      \For{each member $s_k\in S$}
        \State emit the row $\big(\nu_{s_k},\ \hat\mu_{s_k}(S_{<k}),\
               \mathbf{1}\{x^\star_{s_k}\neq\hat x_{s_k}\}\big)$,
               $\hat\mu_{s_k}(S_{<k})=\sum_{j\in S_{<k}}\max\big(|\hat{\mathbf D}_{s_kj}|,|\hat{\mathbf D}_{js_k}|\big)$
      \EndFor
    \EndFor
  \EndFor
\EndFor
\Statex \textbf{Fit} (bin, average, freeze)
\State $\mathrm{CT}\gets(0,\,0.2,\,0.4,\,0.6,\,0.7,\,0.8,\,0.9,\,0.95,\,1]$ \Comment{fixed; tighter where admission happens}
\State $\mathrm{MT}\gets\big(0,\ \text{the }20,40,60,80\text{th percentiles of }\{\hat\mu:\hat\mu>0\},\
       \text{the }99.9\text{th percentile of }\hat\mu\,\big]$
\State $N_{ab}\gets$ rows in cell $(a,b)$;\quad $K_{ab}\gets$ mismatches in cell $(a,b)$;\quad
       $P_{ab}\gets K_{ab}/N_{ab}$
\For{each cell with $N_{ab}<\tau$}
  \State $P_{ab}\gets\big(\sum_b K_{ab}\big)\big/\big(\sum_b N_{ab}\big)$
         \Comment{pooled along its confidence row}
\EndFor
\State \Return the frozen table $(\mathrm{CT},\mathrm{MT},P)$
       \Comment{LLaDA: $792{,}841$ rows, $8\times5$ cells}
\end{algorithmic}
\end{algorithm}

\clearpage

\section{Additional Experiments and Analyses}
\label{app:analyses}

This appendix collects the experiments and discussions that support \Cref{sec:experiments} but are
not frontier grids: dynamic block decoding (\Cref{app:analyses:dynamic}), the curvature
analysis and its diagnostic on all four benchmarks (\Cref{app:analyses:curvature}), the
transfer of both decoders to MMLU, a benchmark outside the mathematics and code their learned
components were fitted on (\Cref{app:analyses:domain-transfer}), the audit of both decoders'
regret estimates on the decoder's own trajectories, which measures the calibration premises of
\Cref{app:theory} during live decoding (\Cref{app:analyses:calibration}), the
comparison with the dependency-guided decoders, including
our own reproduction of DAPD and its prompt ablation (\Cref{app:analyses:dapd}), the comparison
with a learned unmasking policy at matched block geometry (\Cref{app:analyses:jazbec}), the
definitions behind the one-number summaries of \Cref{tab:matched-int,tab:matched-ext}
(\Cref{app:results:matched}), and two shorter notes on cross-paper protocol drift and on a
preliminary information-theoretic diagnostic. The complete grids and the tables of every operating
point are in \Cref{app:results}; the protocol under which every lane of this appendix was run is
stated there (\Cref{app:results:llada,app:results:nemotron}).

\subsection{Dynamic block decoding unlocks additional parallelism}
\label{app:analyses:dynamic}
\label{sec:exp-dynamic-blocks}

Fixed-block decoding is a common heuristic for DLM inference, restricting each round to masked positions inside a predefined block.
As decoding progresses through a block, however, the number of remaining masked candidates can shrink even when the decoder could safely commit many more tokens.
We call this effect \emph{candidate starvation}: parallelism becomes limited by the available candidate pool rather than by the decoder's own regret criterion.
We therefore introduce \emph{dynamic block decoding}, which expands the active block as needed to maintain $B$ masked candidates whenever possible.
This changes only candidate eligibility; Fast-dLLM, \PIGint{}, and \PIGext{} otherwise use their original selection rules.

\begin{figure}[H]
    \centering
    \includegraphics[width=0.95\textwidth]{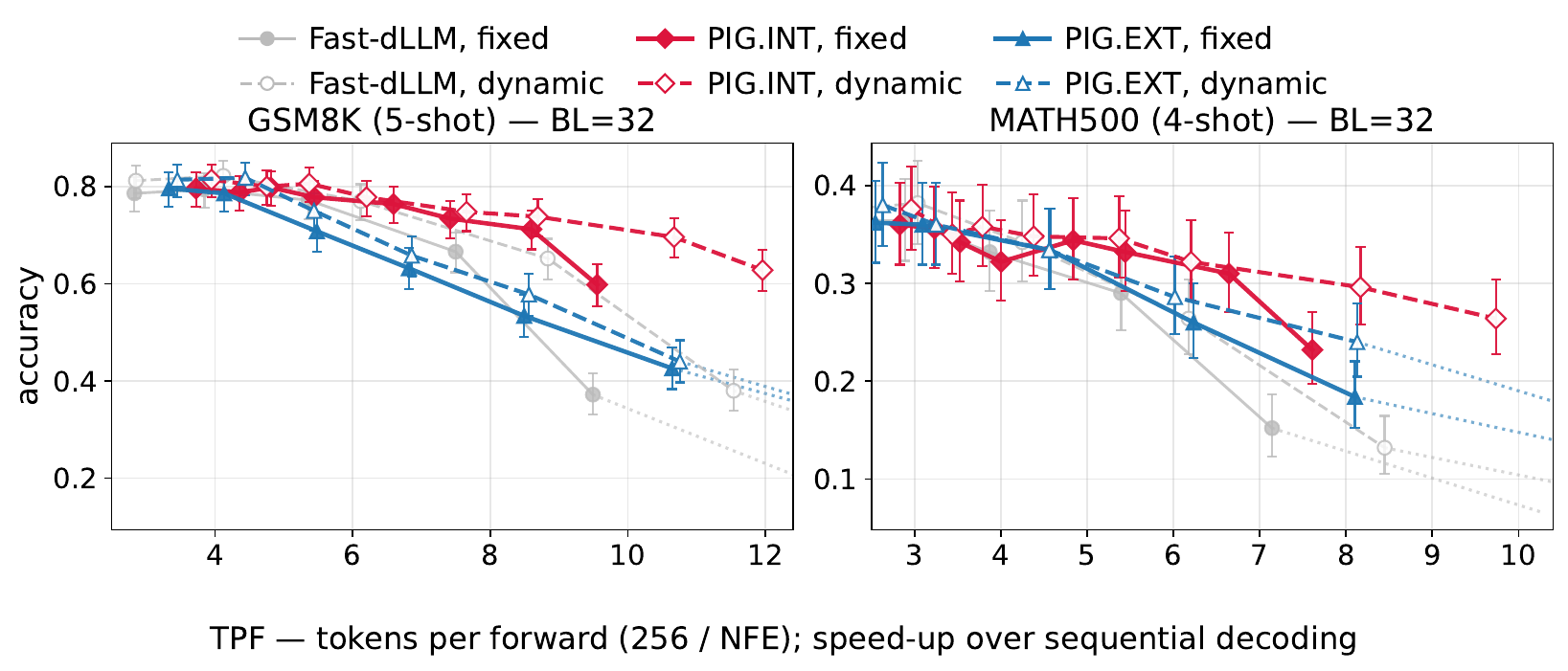}
    \caption{\textbf{Dynamic block decoding unlocks additional parallelism.}
    Accuracy versus TPF on GSM8K and MATH-500 under fixed and dynamic $32$-candidate decoding.
    Filled markers and solid lines denote fixed blocks, while open markers and dashed lines denote dynamic blocks; every decoding rule is evaluated under both candidate geometries.}
    \label{fig:dynamic-blocks}
\end{figure}

Dynamic block decoding improves the accuracy--TPF frontier even for Fast-dLLM, confirming that fixed blocks themselves leave useful parallelism on the table.
The gains are substantially larger for PIG, whose regret-based selection rules can exploit additional low-regret candidates exposed by the larger block.
On GSM8K, \PIGint{} reaches $7.6$ TPF at $0.75$ accuracy (one Wilson interval below Fast-dLLM's slowest point) under dynamic blocks, compared with $7.0$ TPF under fixed blocks, and its sweep now extends past $10$ TPF, where it holds $0.71$ accuracy against $0.54$ for dynamic-block Fast-dLLM.
On MATH-500, \PIGint{} extends from $7.6$ to $9.7$ TPF and holds $0.28$ accuracy at $9$ TPF where its fixed-block sweep has already ended, while \PIGext{} gains $4$ to $5$ points at $7$ to $9$ TPF ($0.27$ against $0.23$, and $0.22$ against $0.17$).
These results suggest that candidate selection and candidate availability are distinct bottlenecks, leaving further opportunities for DLM parallelization by addressing both jointly.

\Cref{tab:dynamic-blocks} reads the fixed and dynamic $32$-token-block sweeps of
\Cref{fig:dynamic-blocks} (LLaDA-8B-Instruct, GSM8K and MATH-500) in the two directions used
throughout the paper: the throughput each lane reaches before it gives up one Wilson interval of
the slowest fixed-block Fast-dLLM point's accuracy, and the accuracy each lane holds at two fixed
throughputs. At matched accuracy the dynamic window helps every lane by a similar factor
($1.12$--$1.13\times$ for Fast-dLLM, $1.04$--$1.08\times$ for \PIGint{}, $1.02$--$1.14\times$ for
\PIGext{}). The difference between the rules appears at the fast end: the dynamic window extends
\PIGint{}'s sweep from $9.6$ to $12.0$ TPF on GSM8K and from $7.6$ to $9.7$ on MATH-500, and at
$10$ and $9$ TPF respectively \PIGint{} holds $0.71$ and $0.28$ accuracy, $17$ points above its
fixed-block extrapolation and $15$--$16$ points above dynamic-block Fast-dLLM, while \PIGext{}
gains $3$--$5$ points at every throughput listed. The larger candidate pool therefore matters most
to the rules that would otherwise run out of admissible candidates inside a block.

\begin{table}[H]
\centering
\caption{\textbf{Fixed versus dynamic $32$-token blocks on LLaDA-8B-Instruct}, the sweeps of \Cref{fig:dynamic-blocks}. $\mathrm{TPF}^{\star}$: largest throughput at which the lane stays within one Wilson 95\% half-width of the slowest fixed-block Fast-dLLM point's accuracy ($a_0$; the matched-accuracy rule of \Cref{tab:matched-int}). acc@$T$: accuracy linearly interpolated at $T$ TPF; $^{\S}$ the sweep ends short of $T$ and is extended along its last segment. Best geometry per lane and column in bold.}
\label{tab:dynamic-blocks}
\footnotesize
\setlength{\tabcolsep}{4pt}
\fitwidth{\begin{tabular}{llrrrrrr}
\toprule
dataset & lane & \multicolumn{2}{c}{$\mathrm{TPF}^{\star}$} & \multicolumn{2}{c}{acc @ $T_1$} & \multicolumn{2}{c}{acc @ $T_2$} \\
\cmidrule(lr){3-4} \cmidrule(lr){5-6} \cmidrule(lr){7-8}
 & & fixed & dynamic & fixed & dynamic & fixed & dynamic \\
\midrule
GSM8K ($a_0{=}0.786$; $T_1{=}8$, $T_2{=}10$) & Fast-dLLM & $5.80$ & $\mathbf{6.58}$ & $0.592$ & $\mathbf{0.688}$ & $0.343$ & $\mathbf{0.535}$ \\
 & PIG.INT & $6.97$ & $\mathbf{7.56}$ & $0.723$ & $\mathbf{0.745}$ & $0.544^{\S}$ & $\mathbf{0.710}$ \\
 & PIG.EXT & $4.75$ & $\mathbf{5.44}$ & $0.563$ & $\mathbf{0.604}$ & $0.458$ & $\mathbf{0.488}$ \\
\midrule
MATH-500 ($a_0{=}0.364$; $T_1{=}7$, $T_2{=}9$) & Fast-dLLM & $4.23$ & $\mathbf{4.75}$ & $0.163$ & $\mathbf{0.216}$ & $0.101$ & $\mathbf{0.122}$ \\
 & PIG.INT & $5.99$ & $\mathbf{6.20}$ & $0.281$ & $\mathbf{0.311}$ & $0.120^{\S}$ & $\mathbf{0.279}$ \\
 & PIG.EXT & $4.84$ & $\mathbf{4.93}$ & $0.229$ & $\mathbf{0.265}$ & $0.167$ & $\mathbf{0.217}$ \\
\bottomrule
\end{tabular}}
\end{table}

\subsection{PIG adapts to the information curvature of decoding}
\label{app:analyses:curvature}
\label{app:results:curvature}
\label{app:temp-curvature}
\label{sec:exp-curvature}

Why can PIG substantially vary its parallel width while maintaining accuracy?
One interpretation is that PIG adapts the size of the parallel decode set $S$ to keep the deviation from the sequential $1$ NFE per token trajectory smallest. We define this deviation through the decode trajectory curvature of second-order finite differences of entropy,
\begin{equation}
    H(S, N(S);\mathbf{x})
    := \sum_{i \in S} \sum_{j \in N(S)} H_j(\mathbf{x}) - H_j(\mathbf{x} \oplus x_i) - \sum_{j \in N(S)} H_j(\mathbf{x}) - H_j(\mathbf{x} \oplus x_S)
    \label{eq:information-curvature}
\end{equation}
Large curvature indicates that committing the current set $S$ substantially changes the distribution over the vocabulary of the upcoming tokens in $N(S)$. Note that curvature is guaranteed to be zero for any set $S$ of cardinality $1$ which corresponds to the sequential oracle. As such, keeping curvature minimal throughout ensures a trajectory that adheres more closely to the sequential $1$ NFE per token sequential oracle.

We measure fixed parallel trajectory's curvature and compare it with the size of $S$ selected by \PIGint{}, \PIGext{}, and fastDLLM.
\Cref{fig:curvature-width} reports the curvature dynamics together with $|S|$ selected by PIG on all four benchmarks.

\begin{figure}[H]
    \centering
    \includegraphics[width=\textwidth]{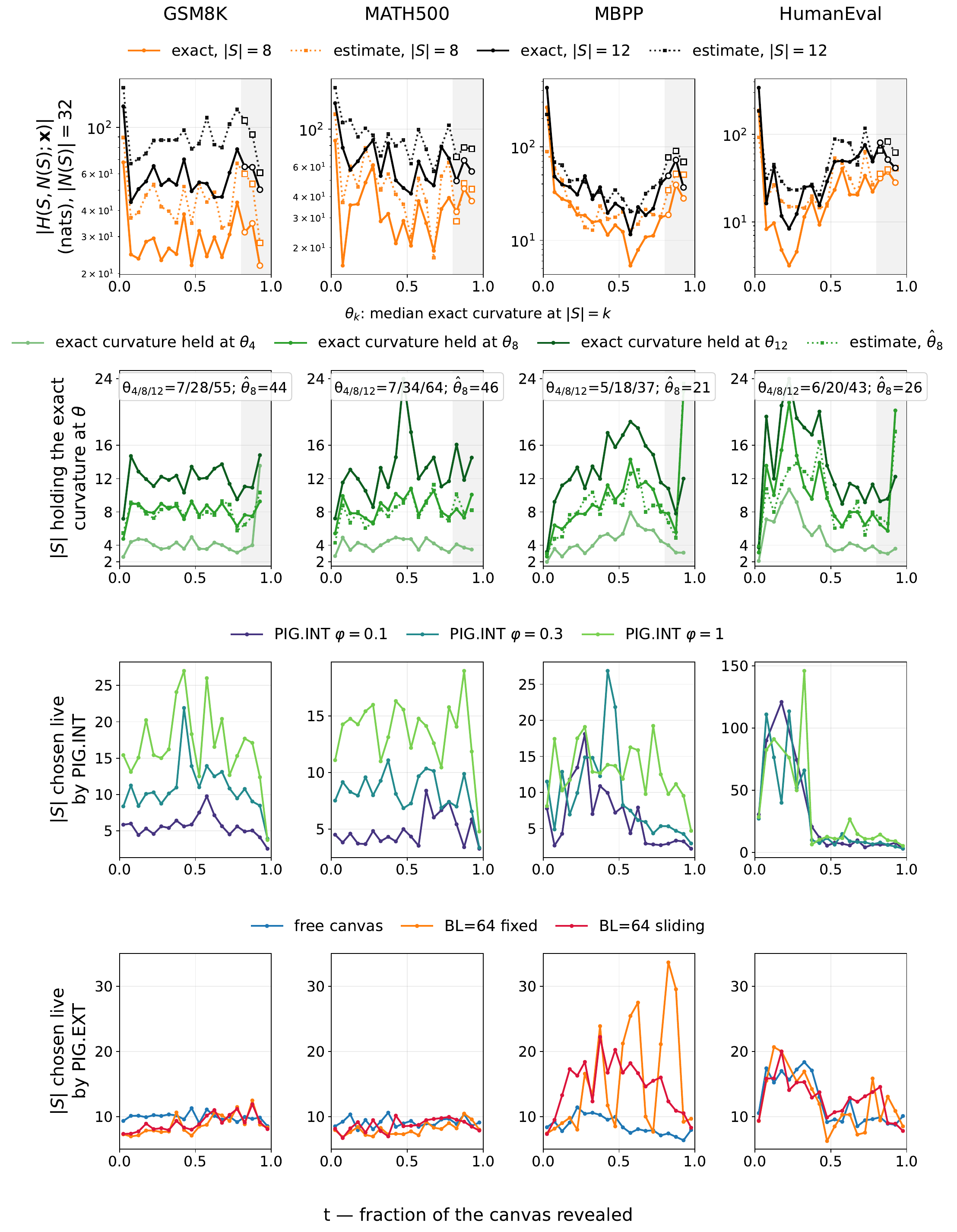}
    \caption{\textbf{PIG adapts its decoding width to information curvature, all four benchmarks.}
    Diagnostic on LLaDA-8B-Instruct across GSM8K, MATH-500, MBPP, and HumanEval.
    The top row compares exact and estimated changes in the uncertainty of upcoming candidates, while the remaining rows show parallel widths induced by fixed regret budgets and live PIG decoding.}
    \label{fig:curvature-width}
\end{figure}

Both PIG variants vary $S$ systematically in line with keeping a constant curvature, taking larger steps in flatter regions and smaller steps as curvature increases.
Confidence-based decoding tracks this structure less closely, since token-wise confidence does not directly measure how committing one set changes the predictions that follow.
The relationship is especially direct for \PIGext{}, whose cross-round regret explicitly measures how much another model evaluation would change the upcoming candidates.

\subsection{Domain transfer: PIG carries over to MMLU without refitting}
\label{app:analyses:domain-transfer}

Everything \PIG{} learns is fitted once per backbone, on mathematics and code alone. The
finite-difference estimator, the adapter of \Cref{fig:adapter}, is trained on decoding states
harvested from the training splits of GSM8K, MATH, OpenCoder and MBPP (\Cref{app:estimator:fde}),
and the calibrated mismatch table on commits replayed from GSM8K-train and OpenCoder
(\Cref{app:estimator:table}). None of these corpora is a multiple-choice benchmark, and none
covers a subject outside mathematics and programming. The frontiers of \Cref{sec:experiments}
therefore leave a question open: whether what these components learn is a property of the
backbone's decoding, which should carry over to any input, or a regularity of the math and code
they were fitted on. We test this on a benchmark that departs from the training mixture in both
format and subject matter.

MMLU \citep{hendrycks2021mmlu} poses four-option multiple-choice questions in $57$ subjects that
span the natural sciences, the humanities, the social sciences and professional fields such as law
and medicine. We draw $500$ of its $14{,}042$ test questions, stratified by subject in proportion
to subject size with a fixed seed, so that every subject appears ($4$ to $52$ questions each).
Only $55$ of the $500$ come from its mathematics and computer-science subjects; the largest single
subject is professional law ($52$ questions), followed by moral scenarios ($32$).
Nemotron-3B-Instruct answers zero-shot under the multiple-choice prompt of the model's released
evaluation harness (NeMo-Skills), which asks for a final line of the form
\texttt{Answer: \textbackslash boxed\{A\}}, and the boxed letter is graded against the key. The
canvas is $2{,}048$ tokens, decoding runs in fixed blocks of $16$, $32$ and $64$ tokens, and TPF
is measured as on every Nemotron panel, as generated tokens before end-of-sequence per forward.
Nothing is refitted or retuned: the runs load the same estimator checkpoint and tables as the math
and code grids, and sweep subsets of the same knob values, Fast-dLLM at
$\nu\in\{0.9,0.7,0.5\}$, \PIGint{} at $\varphi\in\{0.015,0.05,0.1,0.3,0.5\}$, and \PIGext{} under
the same composite rule, its two slowest points ($\theta=0.125$ and $0.25$) floored at confidence
$\nu\ge0.9$ and $\nu\ge0.8$ and the rest plain at $\theta\in\{0.5,1,1.5\}$.

\begin{figure}[H]
  \centering
  \includegraphics[width=\textwidth]{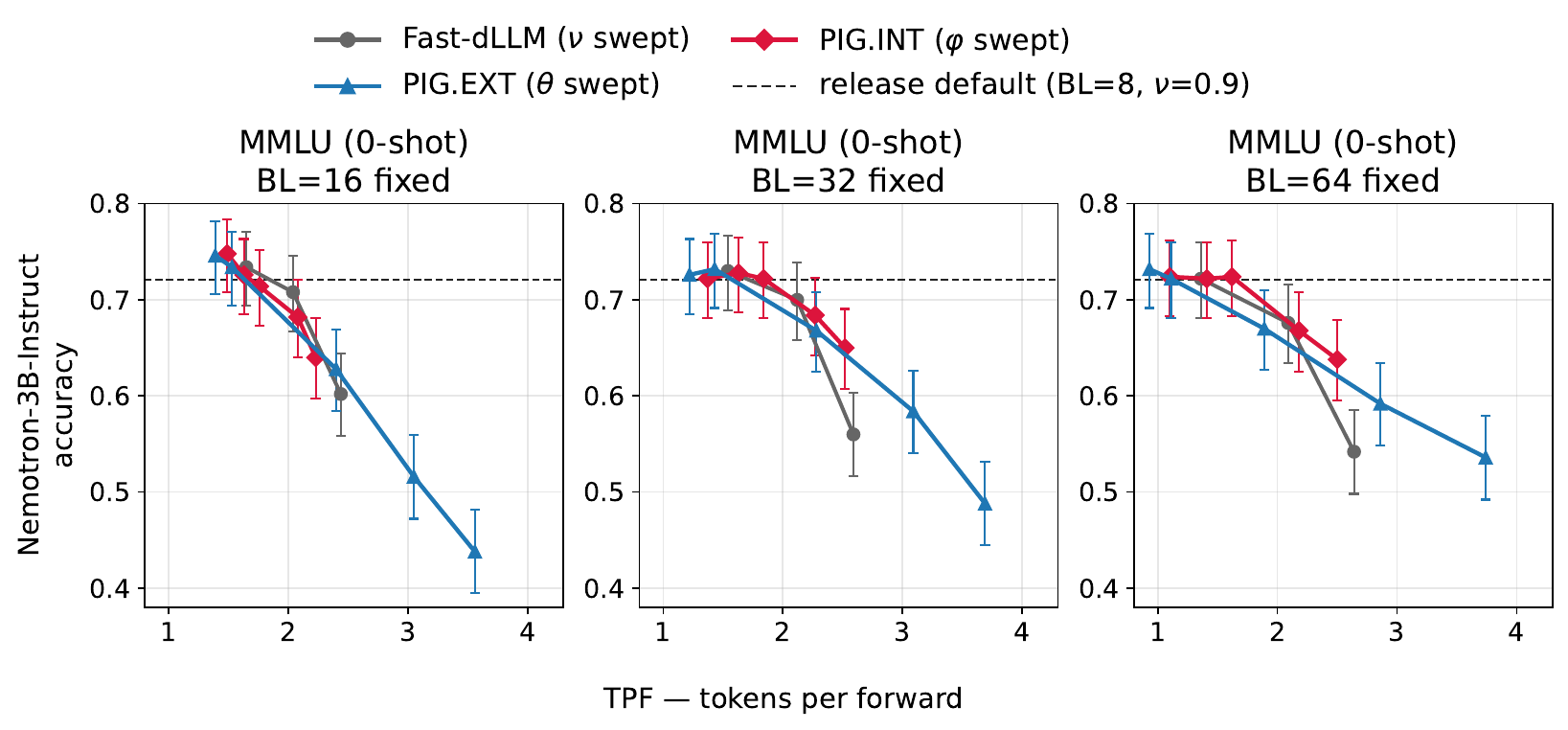}
  \caption{\textbf{Domain transfer: Nemotron-3B-Instruct on MMLU}, zero-shot, $500$ questions
  stratified over all $57$ subjects, at fixed blocks of $16$, $32$ and $64$ tokens (canvas
  $2{,}048$). Grey: Fast-dLLM (on Nemotron, the model's own confidence-threshold decoder, the same
  rule); red: \PIGint{}; blue: \PIGext{}, whose two slowest operating points additionally floor
  admission at a confidence $\nu$. The estimator and tables are the ones fitted on math and code
  (\Cref{app:estimator}), used unchanged. Dashed line: the accuracy the model's release reports at
  its default setting (BL$=$8, $\nu=0.9$) \citep{fu2026nemotron}. Error bars are Wilson $95\%$
  intervals.}
  \label{fig:domain-transfer}
\end{figure}

\Cref{fig:domain-transfer} shows the frontier keeping the shape it has on Nemotron's math and code
benchmarks. At their most conservative settings the three decoders agree, at $0.72$--$0.75$
accuracy for every block size, around the $0.721$ that the model's release reports at its default
setting \citep{fu2026nemotron}: the harness is consistent with the release on a benchmark our
components never saw, and neither \PIG{} variant gives up accuracy at its slow end. As
Fast-dLLM's threshold is lowered to $\nu=0.5$, its accuracy falls by $13$ to $18$ points, to
$0.602$, $0.560$ and $0.542$ at BL$=$16, $32$ and $64$. At BL$=$32 and $64$, \PIGint{} is more
accurate than Fast-dLLM at matched TPF (Fast-dLLM's sweep linearly interpolated at the \PIGint{}
cell's TPF) on every one of its four cells inside Fast-dLLM's range, by $0.3$ to $6.8$ and $0.3$
to $6.2$ points respectively; at the fast end, where \PIGint{} runs within $0.15$ TPF of
Fast-dLLM's fastest point, the two lanes' Wilson intervals no longer overlap ($0.650$ at $2.52$ TPF
against $0.560$ at $2.59$ on BL$=$32, $0.638$ at $2.50$ against $0.542$ at $2.64$ on BL$=$64).
\PIGext{} extends the frontier past Fast-dLLM's fastest point: on BL$=$32 it reaches $0.584$ at
$3.09$ TPF, faster and more accurate than that point, and on BL$=$64 it holds $0.592$ at $2.86$ TPF
and matches Fast-dLLM's fastest accuracy at $1.4\times$ its throughput ($0.536$ at $3.74$ against
$0.542$ at $2.64$). At BL$=$16 the lanes stay within each other's intervals, \PIGint{} $1.2$ to
$1.7$ points below Fast-dLLM on its three cells at matched TPF and \PIGext{} $1.5$ points above at
$2.40$ TPF, which is also how \PIGint{} and Fast-dLLM compare on GSM8K at that block size
(\Cref{fig:nemotron-grid-bl16}).

Neither component is specific to a domain by construction, although nothing guaranteed that it
would transfer. The estimator's target is defined by the backbone alone, as the change in the
model's own entropy at one position when another is committed, and the table indexes an empirical
mismatch rate by two features, the committing position's confidence and its predicted coupling,
whose meaning does not depend on the subject of the question. \Cref{app:estimator:table} found the
table's GSM8K-train and OpenCoder fits within $0.03$ of each other where the budget is spent; MMLU
is a stronger version of that test, a domain neither corpus resembles, read on the decoded frontier
rather than on the table. The evidence is one benchmark on one backbone at $n=500$, which puts the
Wilson half-widths near four points, and every lane stays below $4$ TPF, a narrower band than on
math and code. Within that scope, the three decoders keep the ordering they have on math and code
without any component being refitted, which is what one would expect if what \PIG{} learns is how
the backbone's predictions depend on one another rather than what mathematics and code look like.

\subsection{The adapter is a calibrated predictor of regret}
\label{app:analyses:calibration}

\begin{figure}[H]
  \centering
  \includegraphics[width=\textwidth]{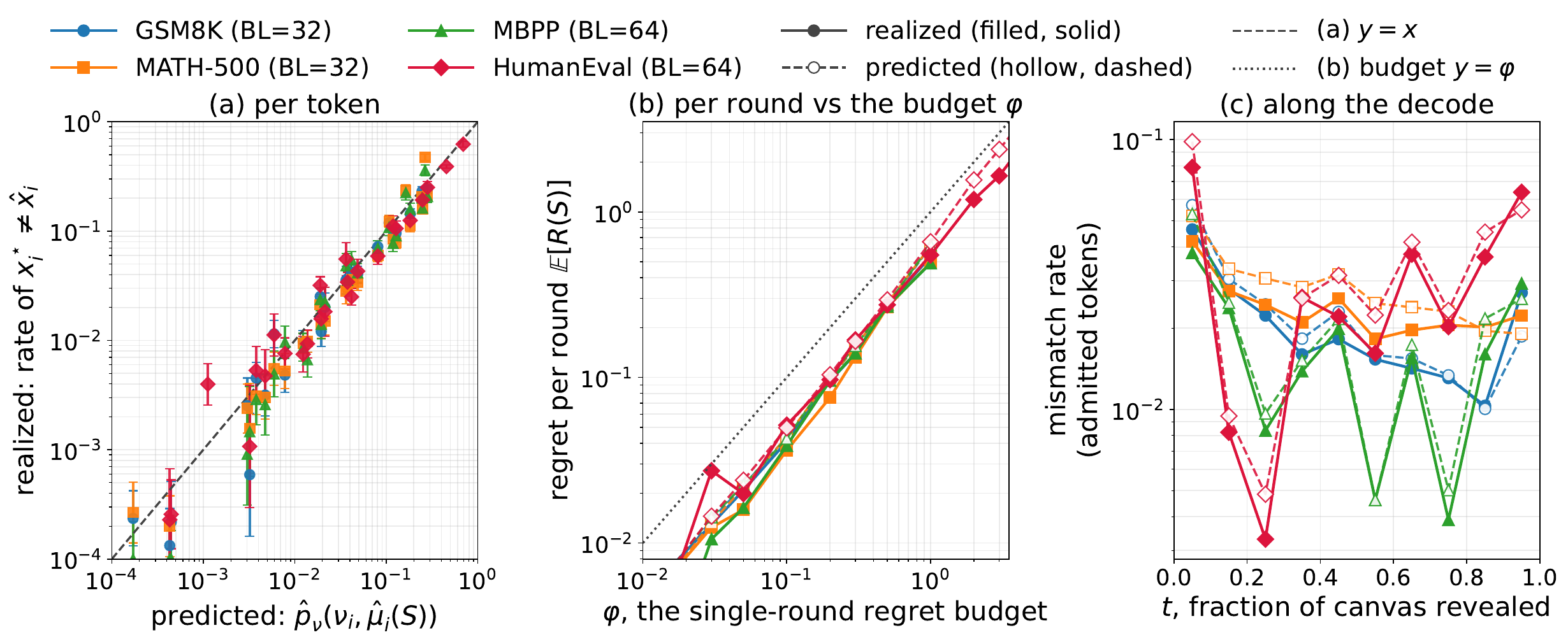}
  \caption{\textbf{The regret calibration on the decoder's own trajectories: \PIGint{} on
  LLaDA-8B-Instruct} at the geometry of \Cref{fig:main-frontier} (GSM8K and MATH-500 at BL$=$32
  fixed, MBPP and HumanEval at BL$=$64 fixed), every third round audited by replaying the committed
  set one token at a time from the round's own canvas in walk order (Definition \ref{def:temp-reference}).
  (a) Realized disagreement
  rate per table cell against the cell's value, pooled over the $\varphi$ ladder (Wilson $95\%$). (b) Per $\varphi$:
  realized regret per audited round $\mathbb{E}[R(S)]$ (filled) and predicted
  $\mathbb{E}[\hat R(S)]$ (hollow) against the budget line $y=\varphi$. (c) Realized (solid) and
  predicted (dashed) mismatch rate per decile of the decode, token-weighted.}
  \label{fig:phat-calibration}
\end{figure}

The guarantees of \Cref{app:theory} rest on one empirical premise: that the mismatch table is
calibrated on the states the decoder actually visits, Definition~\ref{def:temp-calibrated}, from which
Proposition~\ref{prop:temp-int} bounds the realized regret of every \PIGint{} round by its predicted regret
and hence by the budget,
$\mathbb{E}[R(S)]\le\mathbb{E}[\hat R(S)]+\bar\epsilon\,\mathbb{E}|S|\le\varphi+\bar\epsilon\,\mathbb{E}|S|$
(\Cref{eq:temp-round-cert}). The table was fitted offline on replayed probes
(\Cref{app:estimator:table}); whether it stays calibrated during live decoding, on the sets the
budget actually admits, is a separate question, and we measured it on the runs of
\Cref{fig:main-frontier} themselves.

\paragraph{What was measured.} Every \PIGint{} cell of the LLaDA-8B-Instruct frontier at the
geometry of \Cref{fig:main-frontier} (GSM8K and MATH-500 at BL$=$32 fixed, MBPP and HumanEval at
BL$=$64 fixed; $\varphi$ from $0.015$ to $1$, and to $10$ on HumanEval; $36$ cells) ran with an
audit switched on: at every third decoding round (every eighth on the two slowest HumanEval
cells), once the walk had chosen its set $S$, the harness replayed $S$ one token at a time from a
copy of the round's own pre-commit canvas, in walk order, which is exactly the re-anchored
reference of Definition~\ref{def:temp-reference}. For every admitted token after the seed it recorded the
confidence $\nu_i$ and coupling $\hat\mu_i(S_{<i})$ the token was admitted under and the realized
disagreement $\mathbf{1}\{x^\star_i\neq\hat x_i\}$; the predicted value is the table read at those
same two features, the lookup the walk itself performed, so realized and predicted are paired per
token. The decode is untouched by the audit: the parallel set is what is committed, and the replay
forwards are not counted in NFE, so the throughputs and accuracies of \Cref{fig:main-frontier} are
those of the audited runs. The population is the admitted tokens, $754{,}594$ of them over the
$36$ cells; calibration conditional on admission is what Proposition~\ref{prop:temp-int} requires, and refused
candidates are not replayed here. The seed token of each round is consistent by construction and
is excluded from both sides.

\paragraph{What we observe.} \Cref{fig:phat-calibration} reads the audit three ways.
\emph{Per token} (a): grouping the admitted tokens by the table cell they were priced in, the
realized mismatch rate follows the table's value over three decades. Token-weighted over each
benchmark's whole ladder, realized over predicted is $0.92$ on GSM8K, $0.84$ on MATH-500, $0.86$
on MBPP and $0.83$ on HumanEval; per cell, over the $72$ cells holding at least $500$ tokens at
$\hat p_\nu\ge0.01$, where the budget is spent, the ratio runs from $0.49$ to $1.79$ with median
$0.84$. The table errs on the conservative side. \emph{Per round} (b): the realized regret
$\mathbb{E}[R(S)]$ and the predicted $\mathbb{E}[\hat R(S)]$ track each other across two decades
of $\varphi$ on all four benchmarks, realized over predicted lying between $0.7$ and $1.1$ on
every cell with at least $1{,}000$ audited rounds except the slowest MBPP cell ($0.29$, six
mismatches in $3{,}146$ rounds), and both stay below the budget line: realized regret is
$0.13$--$0.56\,\varphi$ against a predicted $0.41$--$0.66\,\varphi$ on those cells, and the largest
realized regret over all $36$ cells is $0.91\,\varphi$. The margin has two sources. The walk closes
at the first candidate that does not fit, so it typically spends about half of $\varphi$, and the
table's slight over-prediction adds to it. \emph{Along the decode} (c): realized over predicted
stays within $0.7$--$1.0$ for the first nine deciles of the canvas on every benchmark and rises to
$1.1$--$1.5$ in the last, where few tokens remain; the saw-tooth of the two BL$=$64 benchmarks is
block structure (the tails of the four $64$-token blocks, where the few remaining candidates are
near-certain), and the two curves move through it together.

Empirically, then, Definition~\ref{def:temp-calibrated} holds on the decoder's own trajectories with a
small $\bar\epsilon$ and, where it does not hold exactly, errs in the safe direction, and the
conclusion of Proposition~\ref{prop:temp-int} is met with margin at every budget: $\varphi$ is an exchange
rate the decode does not exceed. The refused candidates are never replayed on these runs, so the
statement is conditional on admission, as the proposition requires. It is one backbone,
LLaDA-8B-Instruct, at one geometry per benchmark; HumanEval has $164$ problems, and its two
slowest cells carry about $430$ audited rounds against more than $1{,}200$ elsewhere.

\begin{figure}[H]
  \centering
  \includegraphics[width=\textwidth]{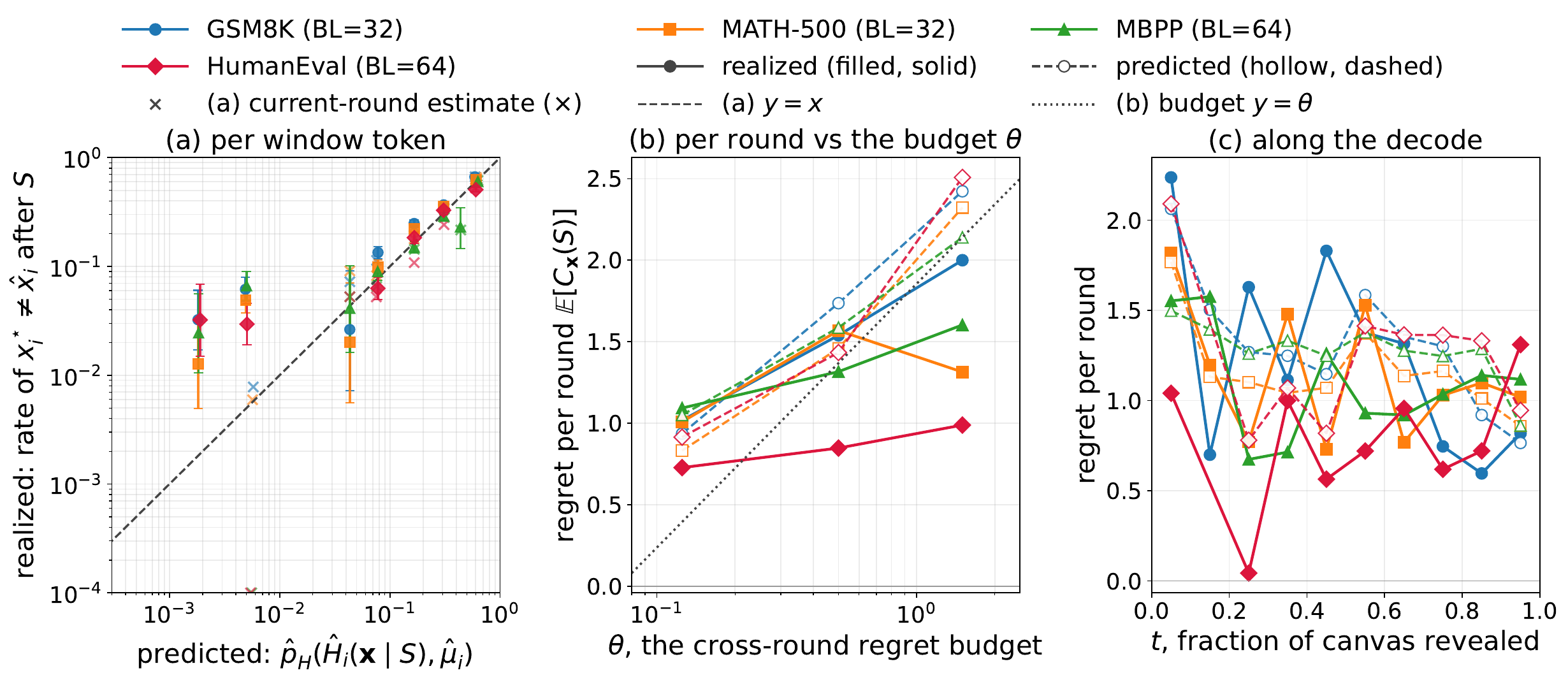}
  \caption{\textbf{The cross-round regret calibration on the decoder's own trajectories: \PIGext{}
  on LLaDA-8B-Instruct} at the geometry of \Cref{fig:main-frontier} (composite rule, $32$ problems
  per cell, every sixth round audited by replaying $S$ and then $N(S)$ sequentially from the
  round's own canvas, Definition~\ref{def:temp-reference}). (a) Realized deferred disagreement of
  the window tokens against the deferred estimate $\hat p_H(\hat H_i(\mathbf{x}\mid S),\hat\mu_i)$
  (Wilson $95\%$); crosses: the current-round estimate of the same tokens. (b) Per $\theta$:
  realized cross-round regret $C_{\mathbf{x}}(S)$ per audited round (filled) against the rule's
  $\hat C_{\mathbf{x}}(S)$ (hollow) and the budget line $y=\theta$. (c) The same per decile of the
  decode.}
  \label{fig:pigext-calibration}
\end{figure}

\paragraph{\PIGext{}: the deferred estimate and the cross-round regret.}
\PIGext{} rests on a second premise. Its statistic $\hat C_{\mathbf{x}}(S)$
(\Cref{eq:predicted-cross-regret}) evaluates the table twice for every window token, at the
current entropy and at the entropy predicted once $S$ is committed, and
Definition~\ref{def:temp-deferred} asks that the second, deferred evaluation be calibrated as well,
with an excess error $\bar\delta$; Lemma~\ref{lem:temp-ext} then bounds the realized cross-round
regret, $C_{\mathbf{x}}(S)$ of \Cref{eq:pig-ext-regret}, of a set the walk has passed by
$\theta+|N(S)|(2\bar\epsilon+\bar\delta)$. We measured both on the composite \PIGext{} rule at the
geometry of \Cref{fig:main-frontier}, three cells per benchmark, the floored point
($\theta=0.125$, $\nu\ge0.9$) and the plain $\theta=0.5$ and $1.5$, on the first $32$ problems of
each benchmark. At every sixth round, once the rule had chosen $S$, the harness replayed $S$ and
then its window $N(S)$, the $16$ most confident remaining positions of the block, one token at a
time from the round's own canvas, committing the sequential argmax at each step, which is the
reference of Definition~\ref{def:temp-reference} extended to the window, and it read the deferred
prediction of every window token from the canvas the decoder actually reached after committing
$S$. Each window token therefore carries two realized disagreements, committed now alongside $S$
and committed one round later, and the two table values the rule priced it with,
$\hat p_H(H_i,\hat\mu_i)$ and $\hat p_H(\hat H_i(\mathbf{x}\mid S),\hat\mu_i)$; per round, the
realized $C_{\mathbf{x}}(S)$ is the difference of the two disagreement counts and
$\hat C_{\mathbf{x}}(S)$ the difference of the two table sums, the number the rule compared with
$\theta$. The replays cost $|S|+|N(S)|$ forward passes per audited round, which is why these cells
are small, $73$ to $469$ audited rounds and $1{,}100$ to $5{,}500$ window tokens each. As before,
the decode itself is untouched and the replay forwards are not counted in NFE.

\Cref{fig:pigext-calibration} reads the audit the same three ways. \emph{Per window token} (a):
binned by its predicted value, the deferred estimate is calibrated wherever it is not tiny. For
$\hat p_H\ge0.05$ the realized deferred disagreement over the prediction lies between $1.13$ and
$1.72$ on GSM8K, $1.03$ and $1.33$ on MATH-500, $0.53$ and $1.16$ on MBPP and $0.81$ and $1.10$
on HumanEval, close to the diagonal above $\hat p_H\approx0.2$. In the tail it under-predicts:
where the table predicts less than $1\%$, $1$ to $7\%$ of the window tokens still change once $S$
is revealed, while the current-round estimate of the same tokens is accurate there ($0$ to
$0.8\%$ realized against $0.02$ to $0.6\%$ predicted), so the miss is specific to the deferred
term, an absolute $\bar\delta$ of a few percent. \emph{Per round} (b): the realized cross-round
regret per audited round is $0.73$--$1.09$ at the floored point, $0.85$--$1.57$ at $\theta=0.5$
and $0.99$--$2.00$ at $\theta=1.5$, against a predicted $0.83$--$1.05$, $1.43$--$1.74$ and
$2.14$--$2.51$: the statistic tracks the realized regret at the slow end (realized over predicted
$0.80$--$1.21$) and over-predicts it as $\theta$ grows ($0.59$--$1.08$ at $\theta=0.5$,
$0.39$--$0.83$ at $\theta=1.5$), the same conservative direction as $\hat p_\nu$ in the \PIGint{}
audit, larger here. Both quantities exceed $\theta$ at every point, the predicted one by
construction: \Cref{alg:pig-ext} admits a token and then tests, so the committed $S$ is the first
set whose statistic exceeds the budget, and the panel reports $\hat C_{\mathbf{x}}$ at that set,
the value of the test that closed the round, on every round the statistic closed ($100\%$ of the
audited rounds in the plain cells, $74$--$81\%$ in the floored cells, the rest being closed by the
floor). The budget bounds the prefix that passed the last test, which is how
Proposition~\ref{thm:temp-ext-cascade} uses it, admission by admission; the committed set carries
one more admission, whose effect on a $16$-token window is of order one predicted disagreement
whatever $\theta$ is, which is why the excess dwarfs the budget at $\theta=0.125$ and is comparable
to it at $1.5$. The rounds the statistic did not close show the other side: on the floor-closed
rounds of the floored cells $\hat C_{\mathbf{x}}$ is $0.02$--$0.03$ and the realized regret
$0.00$--$0.29$ per round. The audit did not record the statistic at the prefix, so the panel
compares realized and predicted regret at the committed set, the comparison calibration needs, and
not the committed set against the budget. \emph{Along the decode} (c): the realized regret per round falls from
$1.0$--$2.2$ in the first decile of the canvas to $0.8$--$1.3$ in the last and the prediction from
$1.5$--$2.1$ to $0.8$--$1.0$; the two move together, the realized curve noisy at $32$ problems.

Definition~\ref{def:temp-deferred} therefore holds on the decoder's own trajectories up to a tail
in which the deferred estimate is too optimistic by a few percent, and every other error is in the
safe direction: \PIGext{} predicts more benefit from waiting than it realizes, and waits
accordingly. The caveats are the cell size, $32$ problems with every sixth round audited, and
that under the floor the window is the paper's $N(S)$, the rule's own above-floor pool being
nearly empty there.

\subsection{Comparison with dependency-guided decoders}
\label{app:analyses:dapd}

\paragraph{Provenance of the reported points.}
No external number in this appendix was produced by us; each is transcribed from a published
table or an official repository, and the decoders themselves are due to their respective authors.
Fast-dLLM is the confidence-threshold parallel decoder of \citet{wu2025fastdllm}; EB-Sampler is
the entropy-bounded unmasking rule of \citet{benhamu2025ebsampler}; KLASS is the KL-guided
stability sampler of \citet{kim2025klass}; DAPD-Staged and DAPD-Direct are the attention-graph
independent-set decoders of \citet{kim2026dapd}; the RL unmasking policy is that of
\citet{jazbec2025unmasking}. In \Cref{fig:llada-dapd}, the BL=64 Fast-dLLM, EB-Sampler and KLASS
points are Table~3 of \citet{kim2026dapd} (LLaDA-8B-Instruct, four $64$-token blocks, accuracy and
steps), which their repository README reproduces verbatim \citep{kim2026dapdcode}; the two DAPD
points at BL=64 appear only in that README (its ``4-block'' rows, torch \texttt{2.5.1+cu121}
according to the README) \citep{kim2026dapdcode}, the paper itself reporting DAPD at a single
$256$-token block only (its Tables~3 and~5), which we do not draw for the reasons given below.
Steps are converted to TPF as $256/\text{steps}$, since their harness performs one forward per
step; the Wilson intervals on these points use their problem counts (GSM8K $1{,}319$, MATH-500
$500$, MBPP $500$, HumanEval $164$), not ours. In \Cref{fig:llada-jazbec-bl32}, the RL-policy and Fast-dLLM points are Appendix~J of
\citet{jazbec2025unmasking} (Tables~2 and~3, the block-$32$ rows, values the authors state
correspond directly to their plotted points),
with their reported three-seed spread as the error bar; that their evaluation is $0$-shot under
the chat template, and that their block-$32$ decoder unmasks contiguous blocks in order, we read
from the official code \citep{jazbec2025code} rather than the paper. Our own lanes on every panel
are the cells tabulated in \Cref{app:results:tables}.

\paragraph{Our reproduction of DAPD.}
The filled black markers in \Cref{fig:llada-dapd} are DAPD run by us, under our protocol, on every
panel. We transcribed the decoder from the released implementation
\citep{kim2026dapdcode}: the dependency graph is the head-averaged attention of the last $30\%$
of layers (ten of LLaDA's $32$), symmetrised, restricted to the masked positions of the active
block and normalised per row by its maximum; candidates are ranked by normalised degree times
normalised confidence; a greedy independent set accepts a candidate when its dependency to every
accepted token is at most $\tau$, with $\tau$ rising linearly in decoded fraction between the
endpoints of their runner scripts (Staged $0.01$--$0.15$ on HumanEval and $0.01$--$0.05$ on GSM8K;
Direct $0.01$--$0.05$ and $0.005$--$0.05$); Direct first commits every position of confidence
$1.0$, Staged also commits every position of confidence at least $0.9$ once half the canvas is
decoded, confidence being read in the model's logits precision as in their code. The
transcription was then checked against their code itself: we ran their \texttt{generate\_dapd}
unmodified, on our prompts and weights, over all $164$ HumanEval problems at their default single
block, and obtained pass@1 $0.256$ at $27.7$ forwards (Staged) and $0.280$ at $24.9$ (Direct)
against $0.262$ at $27.4$ and $0.287$ at $25.5$ from our transcription; per problem, the two agree
on pass/fail for $86\%$ of problems with a median forward-count difference of two, the residual
being the one deliberate numerical difference (their code recomputes attention in single
precision, ours reads the model's half-precision maps). The BL=64 cells drawn use our transcription
at the same $\tau$ schedules with the same $n$ as our lanes, on all four benchmarks.

Under our protocol, DAPD sits on the Fast-dLLM frontier rather than above it: on GSM8K, Staged
reaches $0.776$ at $2.3$ TPF and Direct $0.746$ at $4.2$ against Fast-dLLM's $0.800$--$0.810$ at
$2.9$--$4.0$; on MATH-500, Staged $0.364$ at $2.0$ and Direct $0.348$ at $3.2$ against
$0.380$--$0.368$ at $2.2$--$3.0$; on MBPP, Staged $0.380$ at $2.7$ and Direct $0.386$ at $5.2$,
at parity with Fast-dLLM and the two \PIG{} lanes at that speed; on HumanEval, Staged $0.256$ at
$3.0$ and Direct $0.317$ at $6.9$ against $0.390$ at $5.2$ and $0.360$ at $7.2$. Our forward
counts at BL=64 track their README's on every benchmark (MATH-500 $126$ and $80$ against $123$
and $85$; MBPP $95$ and $50$ against $97$ and $37$), while the accuracies move with the protocol:
higher than theirs by eight to nine points on MATH-500, equal on MBPP, and lower by about twelve
points on HumanEval even though our Fast-dLLM at the same geometry is not; so the offset is
specific to DAPD under our prompt. A prompt ablation with their code locates most of it: run on HumanEval at
their default single block under lm-eval's raw completion prompt, the prompt their released
runners use, their code reproduces the step counts of their Table~3 exactly and DAPD-Direct's
accuracy within a point, and DAPD-Staged's to within six points ($n{=}164$); under our
chat-template prompt the same code at the same $\tau$ decodes in $27$ forwards instead of $45$,
so the prompt changes the dependency graph itself, not only the answers. Our observations are
therefore that a substantial part of the discrepancy between the reported DAPD numbers and our
runs comes down to the prompt protocol, which reshapes the attention graph DAPD selects from; it
does not account for all of it, the ablation covers one benchmark and one geometry, and we have
not run their code inside their own harness.

\paragraph{DAPD's operating points on the BL=64 plane.}
\label{app:results:dapd}
\Cref{fig:llada-dapd} places every BL=64 operating point that \citet{kim2026dapd} report for
LLaDA-8B-Instruct on our BL=64 fixed-block panels, the geometry the two match: Fast-dLLM,
EB-Sampler and KLASS at BL=64, and both DAPD variants at BL=64 (DAPD-Staged, the algorithm of
their main text, and the acceleration variant DAPD-Direct), the latter two reported only in the
project README. Their setting is otherwise close to ours --- canvas $256$, greedy, lm-eval default
shots ($5/4/0/3$), one forward per step so TPF $=256/\text{steps}$ --- but each method is a single
best operating point, no $\tau_t$ sweep is reported, and the comparison is not paired: prompts,
graders and the GSM8K problem set (their full $1{,}319$, our first $500$) differ, their KV-cache
use is unstated, and their released lm-eval runners default to \emph{no} chat template for the
Instruct model where Fast-dLLM's own harness applies it, so whether their LLaDA numbers are raw
few-shot cannot be determined from the repository.

Read at this geometry, DAPD's advantage is confined to the single-block setting the paper
headlines. At BL=64 DAPD-Staged needs $93$--$123$ steps ($2.1$--$2.8$ TPF), no faster than
their own Fast-dLLM; DAPD-Direct is faster (GSM8K $0.758$ at $4.2$ TPF, HumanEval $0.427$ at
$4.3$, MBPP $0.388$ at $7.0$) and on the code panels sits at or above our lanes at those speeds,
while on the math panels our \PIGint{} and \PIGext{} lanes pass through or above it. We do not
draw their single-block results: there the baselines are run with end-of-sequence suppression
after collapsing without it, a protocol our decoder does not use, and their BL=64 Fast-dLLM on
MBPP ($0.206$ at $6.2$ TPF against our $0.354$ at $5.9$, and $0.404$ in their own suppressed
single-block run) shows the same truncation at work.

\begin{figure}[H]
  \centering
  \includegraphics[width=\textwidth]{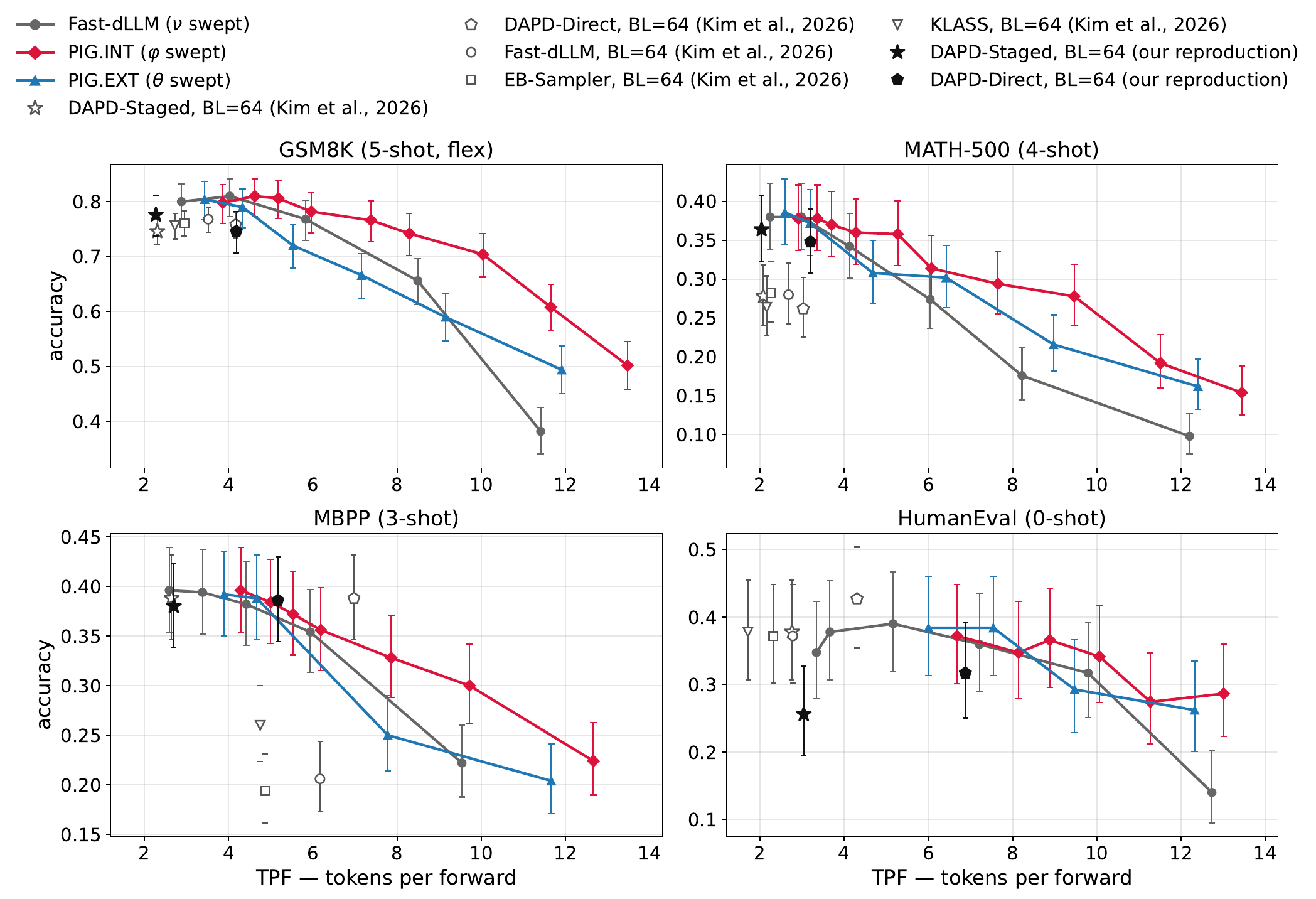}
  \caption{\textbf{Our BL=64 frontier against every BL=64 operating point DAPD reports} on
  LLaDA-8B-Instruct (math above, code below; flexible extraction for GSM8K/MATH-500, pass@1 for
  code with Wilson 95\% bars. Lines: our Fast-dLLM, \PIGint{} and \PIGext{} sweeps at
  BL=64 fixed. Hollow
  markers, all at BL=64 (four $64$-token blocks): Fast-dLLM \citep{wu2025fastdllm}, EB-Sampler
  \citep{benhamu2025ebsampler} and KLASS \citep{kim2025klass} as reported in Table~3 of
  \citet{kim2026dapd}; DAPD-Staged (star) and DAPD-Direct (pentagon) from the project README
  \citep{kim2026dapdcode}, absent from the paper. Filled black star and pentagon: DAPD-Staged and DAPD-Direct run by us
  under our protocol.}
  \label{fig:llada-dapd}
\end{figure}

\subsection{A learned unmasking policy at matched block geometry}
\label{app:analyses:jazbec}
\label{app:results:jazbec-bl32}
\Cref{fig:llada-jazbec-bl32} places the published BL=32 series of \citet{jazbec2025unmasking} on
our BL=32 fixed-block GSM8K and MATH-500 panels: their RL unmasking policy, one policy per reward
weight $\alpha\in\{0,0.3,1,3,10\}$ with their three-seed spread, and Fast-dLLM as run by them
(their Appendix~J, Tables~2 and~3). Their decoder unmasks contiguous $32$-token blocks in order
on a $256$-token canvas with one forward per step, so block geometry, canvas and NFE accounting
match our BL=32 fixed regime exactly; what differs is the prompt (theirs $0$-shot under the chat
template, ours $5$-shot GSM8K and $4$-shot MATH-500) and the grader, so the figure is a reference
at matched geometry rather than a paired comparison. Points beyond
$14$ TPF --- their fastest policy ($\alpha{=}10$, about $25$ TPF, GSM8K $0.35$ and MATH-500 $0.15$),
their fastest Fast-dLLM threshold, and our $\theta{=}3$ cell --- are not drawn; a dashed segment
continues each series toward its hidden neighbour. Read this way, the learned policy sits on our
lanes in the $2$--$4.5$ TPF band where it is reported, and the range in which \PIGint{} and
\PIGext{} extend the frontier ($5$--$14$ TPF) is one where the policy is reported only at its
$\alpha{=}10$ end. Their own Fast-dLLM sweep tracks ours to within about $0.03$ over $5$--$14$ TPF
on GSM8K, which serves as the protocol-alignment check between the two rigs.

\begin{figure}[H]
  \centering
  \includegraphics[width=\textwidth]{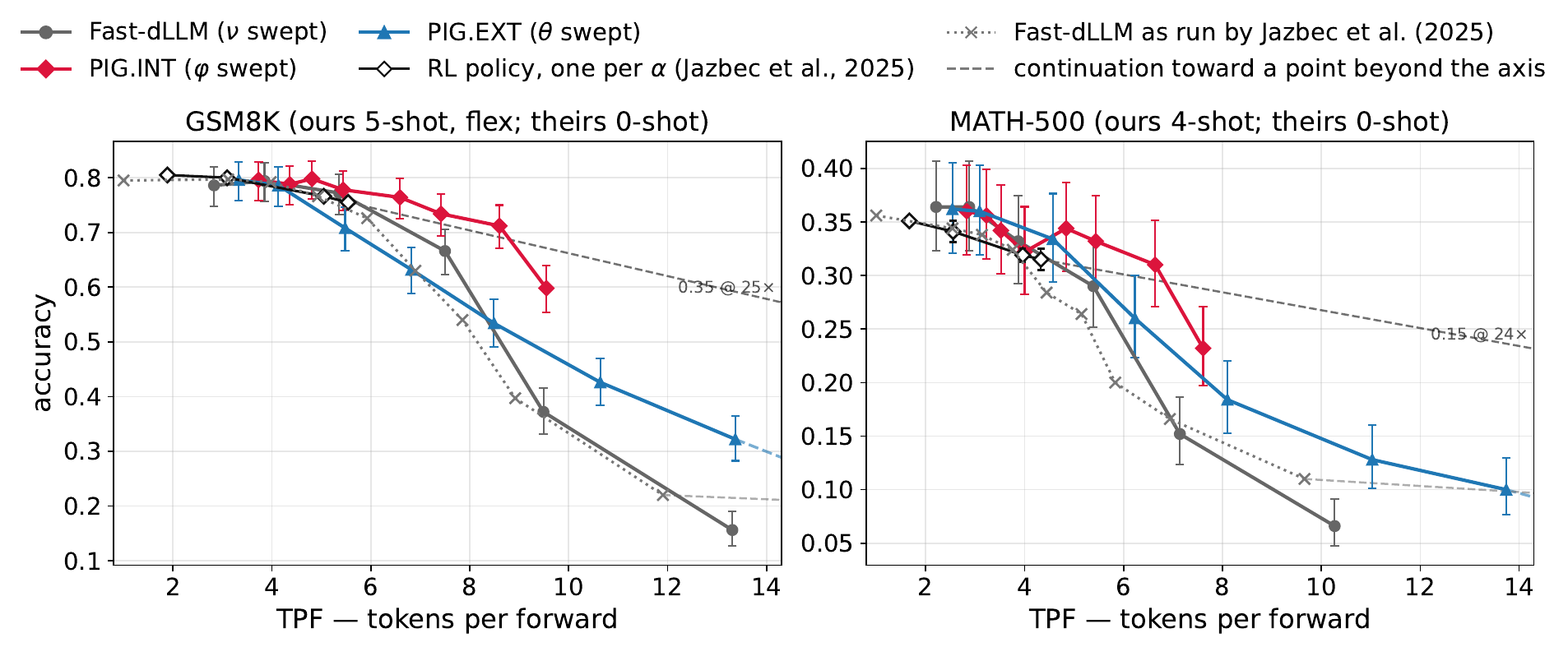}
  \caption{\textbf{Our BL=32 frontier against the published BL=32 series of
  \citet{jazbec2025unmasking}} on LLaDA-8B-Instruct (GSM8K flexible extraction, MATH-500 with Wilson 95\% bars. Lines: our Fast-dLLM, \PIGint{} and \PIGext{} sweeps at BL=32
  fixed. Black diamonds:
  their RL policy, one per reward weight $\alpha$, $\pm$ their three-seed spread; dotted grey:
  Fast-dLLM as run by them; both at BL=32 fixed, canvas $256$, $0$-shot.
  Same block geometry, canvas and NFE accounting.}
  \label{fig:llada-jazbec-bl32}
\end{figure}

\subsection{One number per comparison: matched-accuracy throughput and accuracy at a fixed throughput}
\label{app:results:matched}
The frontier figures compare sweeps; \Cref{tab:matched-int,tab:matched-ext} in \Cref{sec:exp-frontier} reduce each comparison to one number.
\Cref{tab:matched-int} compares both decoders against every baseline of this appendix in the
direction they are built for, throughput at matched accuracy. Each baseline supplies one target,
its best operating point: an accuracy $a_{\mathrm{ref}}$ at a throughput
$\mathrm{TPF}_{\mathrm{ref}}$. Against it we report the fastest \emph{measured} cell of the lane's
sweep whose accuracy is at least $a_{\mathrm{ref}}$, and give two numbers for that cell, its
speed-up $\mathrm{TPF}/\mathrm{TPF}_{\mathrm{ref}}$ and its accuracy minus $a_{\mathrm{ref}}$ in
points; where no cell of a lane reaches the target, the lane's most accurate cell is reported and
the difference is negative. Nothing is interpolated and no tolerance is applied, so both numbers
name a run we performed. Fast-dLLM supplies the most accurate cell of its own sweep on our rig and
DAPD our port of the authors' code at BL$=$64 (\Cref{app:results:dapd}), so those three columns are
paired; EB-Sampler and KLASS supply only the single cells published by \citet{kim2026dapd} at the
same four-block geometry, on their rig, so those columns are not paired, and the protocol offset of
\Cref{app:results:dapd} (our Fast-dLLM exceeds theirs by 4--14 points at equal TPF on three of four
benchmarks) accounts for a large part of them. Read on the paired columns, \PIGint{} reaches
DAPD-Direct's accuracy at $1.5$--$1.8\times$ its throughput on three of the four benchmarks and at
$0.8\times$ on MBPP, in every case at equal or higher accuracy.

\Cref{tab:matched-ext} compares the three lanes we sweep on both models in the other direction,
accuracy at fixed throughputs: $10$ and $15$ TPF on LLaDA-8B-Instruct, $3$ and $5$ TPF on
Nemotron-3B-Instruct, every row at BL$=$64 fixed, each lane's accuracy linearly interpolated between the two
measured cells that bracket the target. These are estimates read off the frontier rather than
measured operating points, and no lane is extrapolated: where a sweep ends before the target, the
table gives that lane's last measured cell, marked with its own throughput, and takes its gain
against Fast-dLLM at that same throughput. Four cells are of this kind (\PIGint{} on LLaDA
MATH-500 and MBPP at $15$ TPF; \PIGint{} on Nemotron MBPP and \PIGext{} on Nemotron HumanEval at
$5$ TPF), and on LLaDA MATH-500 at $15$ TPF Fast-dLLM's own sweep ends at $12.2$ TPF, so no gain
is given in that group. On LLaDA, \PIGint{} is the most accurate lane at $10$ TPF on every
benchmark ($+4$ to $+19$ points over Fast-dLLM) and \PIGext{} takes over at $15$ TPF on GSM8K and
MBPP ($+14$ and $+10$ points over Fast-dLLM); on MATH-500 it is the only lane still measured at
$15$ TPF. On Nemotron the three lanes are within a few points on GSM8K at both throughputs
(Fast-dLLM ahead at $3$ TPF), while on code \PIGext{} is the lane that stays accurate: $+9$/$+7$
points over Fast-dLLM at $3$ TPF, $36.9$ against $9.1$ points on MBPP at $5$ TPF, and $44.5$
against $19.5$ on HumanEval at $4.9$ TPF, the last cell of its sweep. Nemotron MATH-500 has
no BL$=$64 sweep and is not listed.    %
\clearpage

\section{Complete Frontier Grids and Tables}
\label{app:results}
\label{app:results:grids}

This appendix is the complete record of the frontier experiments: the protocol under which every
lane was run, the full frontier grids of both models, and the tables of every operating point
behind them, including the cells the figures crop. The analyses that read these results are in
\Cref{app:analyses}. Contents:
\begin{itemize}[leftmargin=1.4em, itemsep=1pt, topsep=2pt]
  \item \textbf{LLaDA-8B-Instruct} (\Cref{app:results:llada}): benchmarks, prompts, graders and decode regimes;
        \Cref{fig:llada-grid-full}, the five-regime $\times$ four-benchmark grid (flexible extraction);
        \Cref{fig:llada-grid-strict}, the same grid under strict extraction.
  \item \textbf{Nemotron-Labs-Diffusion-3B} (\Cref{app:results:nemotron}): benchmarks, prompts, graders and decode regimes;
        \Cref{fig:nemotron-grid-full}, the three-geometry grid at BL$=$32;
        \Cref{fig:nemotron-grid-bl16,fig:nemotron-grid-bl64}, the same grid at BL$=$16 and BL$=$64.
  \item \textbf{Tables} (\Cref{app:results:tables}), one per model and benchmark, every operating point with its Wilson 95\% interval:
        LLaDA-8B-Instruct GSM8K (\Cref{tab:frontier-llada-gsm8k}), MATH-500 (\Cref{tab:frontier-llada-math500}),
        MBPP (\Cref{tab:frontier-llada-mbpp}), HumanEval (\Cref{tab:frontier-llada-humaneval});
        Nemotron-Labs-Diffusion-3B GSM8K (\Cref{tab:frontier-nemotron-gsm8k}), MATH-500 (\Cref{tab:frontier-nemotron-math500}),
        MBPP (\Cref{tab:frontier-nemotron-mbpp}), HumanEval (\Cref{tab:frontier-nemotron-humaneval}).
\end{itemize}

\subsection{LLaDA-8B-Instruct}
\label{app:results:llada}

\paragraph{Benchmarks, prompts and graders.}
We evaluate LLaDA-8B-Instruct \citep{nie2025large} on GSM8K \citep{cobbe2021training}, MATH-500
\citep{hendrycks2021math}, MBPP \citep{austin2021program} and HumanEval \citep{chen2021evaluating}
under the evaluation protocol published with Fast-dLLM \citep{wu2025fastdllm}, which is the only
decoding paper in this space that states every knob: lm-eval task formats and graders, the shot
counts $5/4/3/0$, a $256$-token canvas, greedy commits, no KV cache. Prompts follow the lm-eval
task formats. GSM8K is five-shot in the \texttt{Question:}/\texttt{Answer:} format with the
\texttt{\#\#\#\#} answer line, the five exemplars fixed once from the GSM8K training split.
MATH-500 uses lm-eval's Minerva template and its four fixed exemplars. MBPP is three-shot with the
task's own \texttt{[BEGIN]}/\texttt{[DONE]} exemplar format, and HumanEval is zero-shot function
completion. Every prompt is wrapped in LLaDA-Instruct's own chat template, as the model's release
and Fast-dLLM's harness do for the Instruct model; this is our one stated deviation from stock
lm-eval, whose LLaDA runs are raw few-shot completions, and it matters (on a BL=64 probe the chat
template gave $0.72$ flexible-extraction accuracy on GSM8K against $0.31$ for the raw prompt).
Graders are lm-eval's: for GSM8K both the strict extractor (the \texttt{\#\#\#\#} line) and the
flexible one (last number in the generation), the latter being the main-text metric because the
instruct model omits the \texttt{\#\#\#\#} line in roughly a third of its answers; for MATH-500
the last \texttt{\textbackslash boxed\{\}} answer under Minerva normalisation and sympy
equivalence; for HumanEval execution-based pass@1 with the standard stop sequences; for MBPP
execution of the task's three tests in a sandbox. Problem counts are the first $500$ of GSM8K's
$1{,}319$ test problems, all $500$ of MATH-500, the $500$ problems of the classic MBPP test split,
and all $164$ of HumanEval; Wilson $95\%$ intervals are drawn on every accuracy. TPF is $256$
divided by the mean number of forwards per problem.

\paragraph{Decode regimes.}
Five candidate geometries are evaluated on every benchmark, each with its own baseline sweep:
fixed $32$- and $64$-token blocks, the semi-autoregressive block decoding of the LLaDA release; the
dynamic $32$- and $64$-candidate windows of \Cref{sec:exp-dynamic-blocks}, which advance as
commits land; and the free canvas, where all $256$ positions are eligible. On every geometry the
\PIGext{} lane is the operating rule stated in \Cref{sec:experiments}: two confidence-floored
cells at the slow end ($\theta{=}0.125$ with $\nu{\ge}0.9$, $\theta{=}0.25$ with $\nu{\ge}0.8$), then
the plain $\theta$ ladder from $0.5$; the tables mark the floored rows. The main text draws
the fixed $32$-token block and the free canvas (\Cref{fig:main-frontier}) and the two $32$-wide
geometries (\Cref{fig:dynamic-blocks}); the full grid is \Cref{fig:llada-grid-full}.

\paragraph{Baseline.}
The baseline on every panel is Fast-dLLM's confidence-threshold rule run by us: at each round,
every eligible masked position whose top-1 probability exceeds the threshold is committed, at
least one always, with the same eligible positions, prompts, canvas and grader as \PIGint{} and
\PIGext{}, and with the threshold swept from $0.95$ down to $0.1$ to trace a frontier rather than
the single default of $0.9$. No KV cache is used on any lane, so TPF counts forwards over the
full canvas for every method alike. Every number taken from another paper, and our own reproduction of DAPD, is documented in
\Cref{app:analyses:dapd}.

\paragraph{Frontier grids.}
\Cref{fig:llada-grid-full} is the complete LLaDA-8B-Instruct frontier: five decode regimes (rows)
$\times$ four benchmarks (columns), all three lanes at full range, flexible extraction. Dashed
segments continue a lane toward its next operating point beyond a panel edge. The main-text
figures are subsets of this grid; every operating point, including those beyond the crops, is
tabulated in \Cref{app:results:tables}.

\begin{figure}[t]
  \centering
  \includegraphics[width=\textwidth]{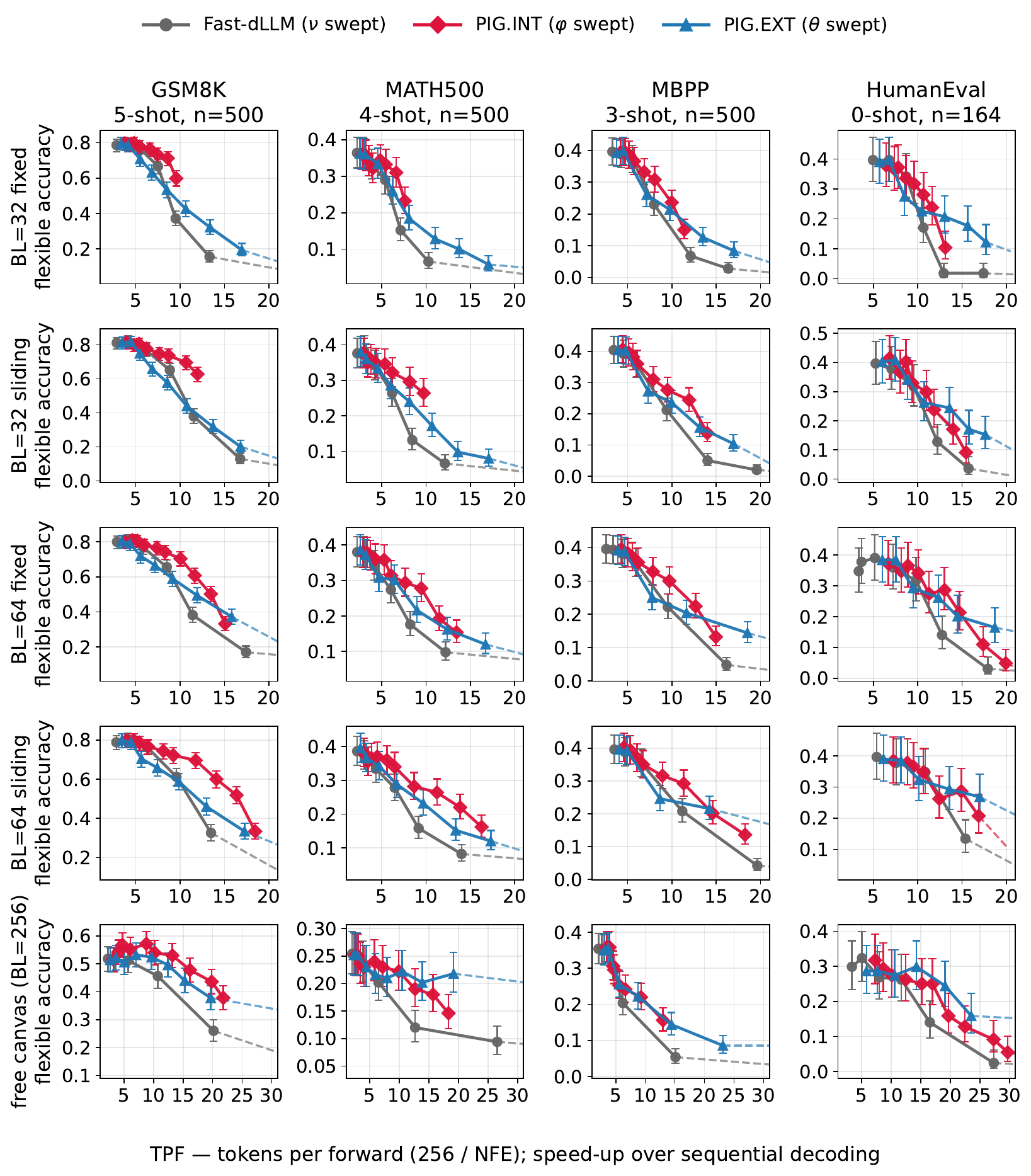}
  \caption{\textbf{The full LLaDA-8B-Instruct frontier} (canvas $256$, greedy, no KV cache;
  flexible extraction for GSM8K/MATH-500, pass@1 for MBPP/HumanEval; bars: Wilson 95\%). Rows:
  decode regime; columns: benchmark (shots and $n$ in the title). Grey: Fast-dLLM confidence
  threshold, swept $0.95\to0.1$; red: \PIGint{} ($\varphi$); blue: \PIGext{}, with its two slowest operating points additionally
  flooring admission at a confidence $\nu$ equal to $0.9$ and $0.8$ respectively.}
  \label{fig:llada-grid-full}
\end{figure}

\paragraph{Strict extraction.}
\Cref{fig:llada-grid-strict} repeats \Cref{fig:llada-grid-full} under the benchmarks' own answer
formats (\texttt{\#\#\#\#} for GSM8K, last \texttt{\textbackslash boxed\{\}} for MATH-500; code panels are
unchanged). Strict GSM8K accuracy is deflated for every lane alike by format non-compliance of the
instruct model (roughly a third of generations omit \texttt{\#\#\#\#}), which is why flexible
extraction is the main-text metric; the ordering of the lanes is the same under both.

\begin{figure}[t]
  \centering
  \includegraphics[width=\textwidth]{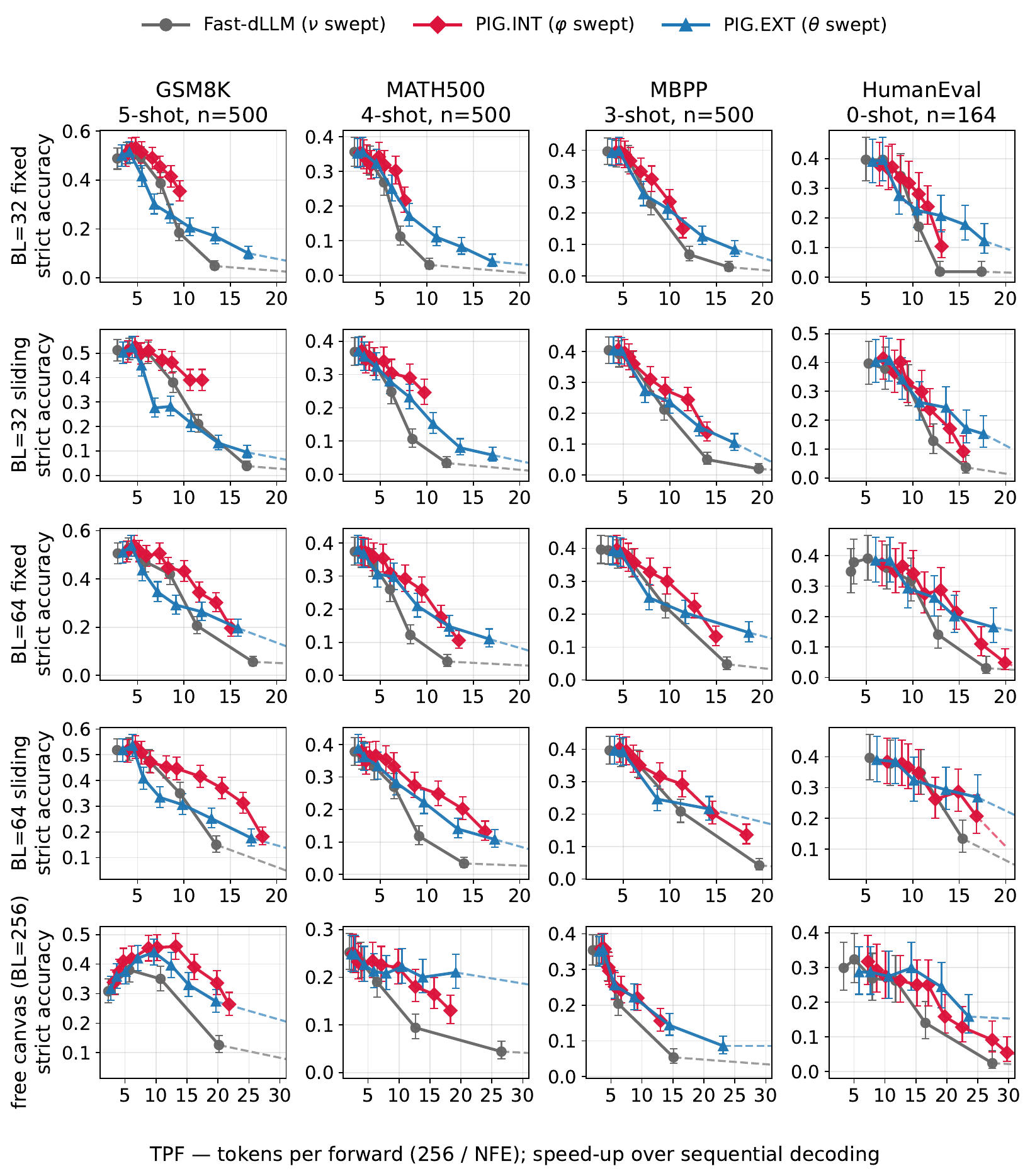}
  \caption{\Cref{fig:llada-grid-full} under strict extraction (GSM8K \texttt{\#\#\#\#}, MATH-500 last
  \texttt{\textbackslash boxed\{\}}); MBPP and HumanEval panels are identical to the main text.}
  \label{fig:llada-grid-strict}
\end{figure}

\subsection{Nemotron-Labs-Diffusion-3B}
\label{app:results:nemotron}
\label{app:results:nemotron:grids}

\paragraph{Benchmarks, prompts and graders.}
We evaluate Nemotron-Labs-Diffusion-3B \citep{fu2026nemotron} on the same four benchmarks,
zero-shot throughout, following the model's own evaluation practice: its released harness
(NeMo-Skills) prompts the instruct model without exemplars, and few-shot prompting makes it answer
the exemplars. Each task body is the benchmark's own text with a minimal instruction (the GSM8K
question as is; the MATH-500 problem followed by ``Please reason step by step''; the MBPP task
description with its three tests; ``Complete the following Python function'' before the HumanEval
prompt), wrapped in the model's chat template. The canvas is $2{,}048$ tokens, the model's
end-of-sequence handling stopping generation early, and the graders are the LLaDA ones above,
imported unchanged: flexible extraction for GSM8K (strict extraction is void at zero-shot, the
model answering in \texttt{\textbackslash boxed\{\}} rather than with a \texttt{\#\#\#\#} line),
last boxed answer under Minerva normalisation for MATH-500, execution for MBPP and HumanEval.
Problem counts are the first $500$ of GSM8K and of MATH-500, $499$ of MBPP, and all $164$ of
HumanEval. TPF is the mean number of generated tokens per forward; because the two models'
canvases differ, TPF is not comparable across models.

\paragraph{Decode regimes and baseline.}
Nemotron decodes in $32$-token blocks with a confidence threshold as its native decoder, and has
no free-canvas mode, so three geometries are evaluated at each block size: the fixed block, a
sliding window of the same width, and a commit band of that width nested in a window twice as
wide. The main text uses the model's own BL=32; \Cref{app:results:nemotron:grids} repeats the grid
at BL=16 and BL=64, and the HumanEval panel of \Cref{fig:teaser} uses the fixed $64$-token block.
The baseline on every panel is the model's own confidence-threshold decoder under the same
geometry, swept from $0.9$ (its default) down to $0.3$: on the fixed $32$-token block it is the
release's \texttt{generate} called as shipped, and on every other geometry the same rule
re-implemented over the window, with the release's end-of-sequence guard applied to every lane
alike. No numbers from other
papers appear on the Nemotron panels; the dashed line of \Cref{fig:teaser} is the accuracy the
model's release reports at its default setting.

\paragraph{Frontier grids.}
\Cref{fig:nemotron-grid-full} is the complete Nemotron-Diffusion-3B frontier at the model's own
block size: three geometries at BL=32 $\times$ four benchmarks. The main-text figure
(\Cref{fig:main-frontier}, bottom row) is a subset; every operating point, including those beyond the crops, is
tabulated in \Cref{app:results:nemotron:tables}. \Cref{fig:nemotron-grid-bl16,fig:nemotron-grid-bl64}
repeat the grid at BL=16 and BL=64, the three geometries scaled with the block (fixed block, sliding
window, and the block nested in a window twice as wide); MATH-500 was not run at BL=64, and at BL=16
its \PIGext{} lane has no confidence-floored cells. Here the baseline is the model's threshold rule re-implemented over
the window at every geometry, with the same end-of-sequence guard as our lanes, and \PIGext{} is
the composite operating rule of every other figure. The ordering of the lanes is the same at all
three block sizes.

\begin{figure}[t]
  \centering
  \includegraphics[width=\textwidth]{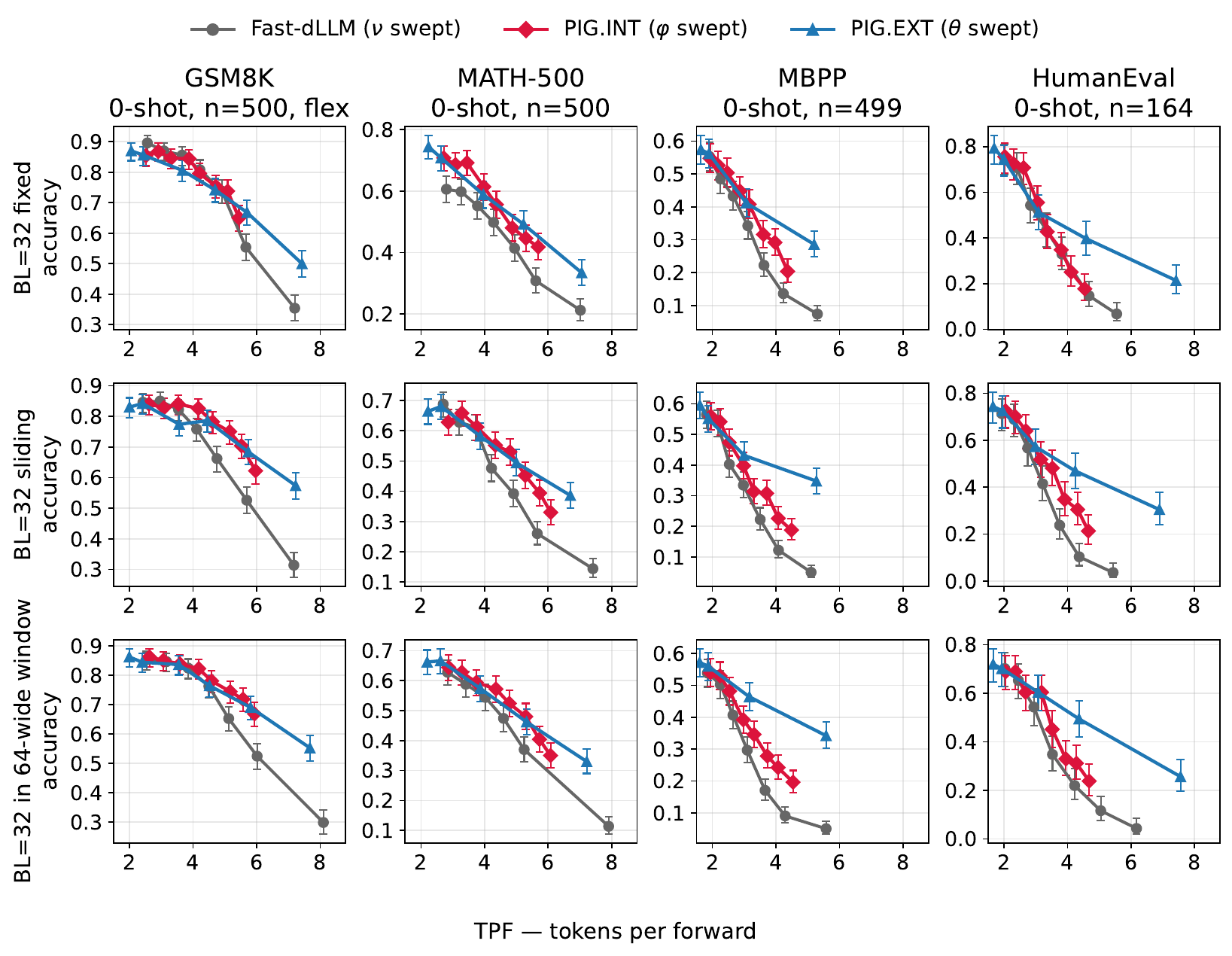}
  \caption{\textbf{The full Nemotron-Diffusion-3B frontier} ($0$-shot, canvas $2048$, greedy;
  flexible extraction for GSM8K, pass@1 for code; bars: Wilson 95\%). Rows: the model's own fixed
  $32$-token block, a sliding $32$-token window, and a $32$-token commit band inside a $64$-wide
  window; columns: benchmark. Grey: Fast-dLLM, the model's own confidence-threshold decoder swept
  $0.9\to0.3$ under the same geometry; red: \PIGint{}; blue: \PIGext{}, the composite operating
  rule of every figure. Cells beyond $8.5$ TPF are omitted; TPF is not comparable across the two
  models' canvases.}
  \label{fig:nemotron-grid-full}
\end{figure}

\begin{figure}[t]
  \centering
  \includegraphics[width=\textwidth]{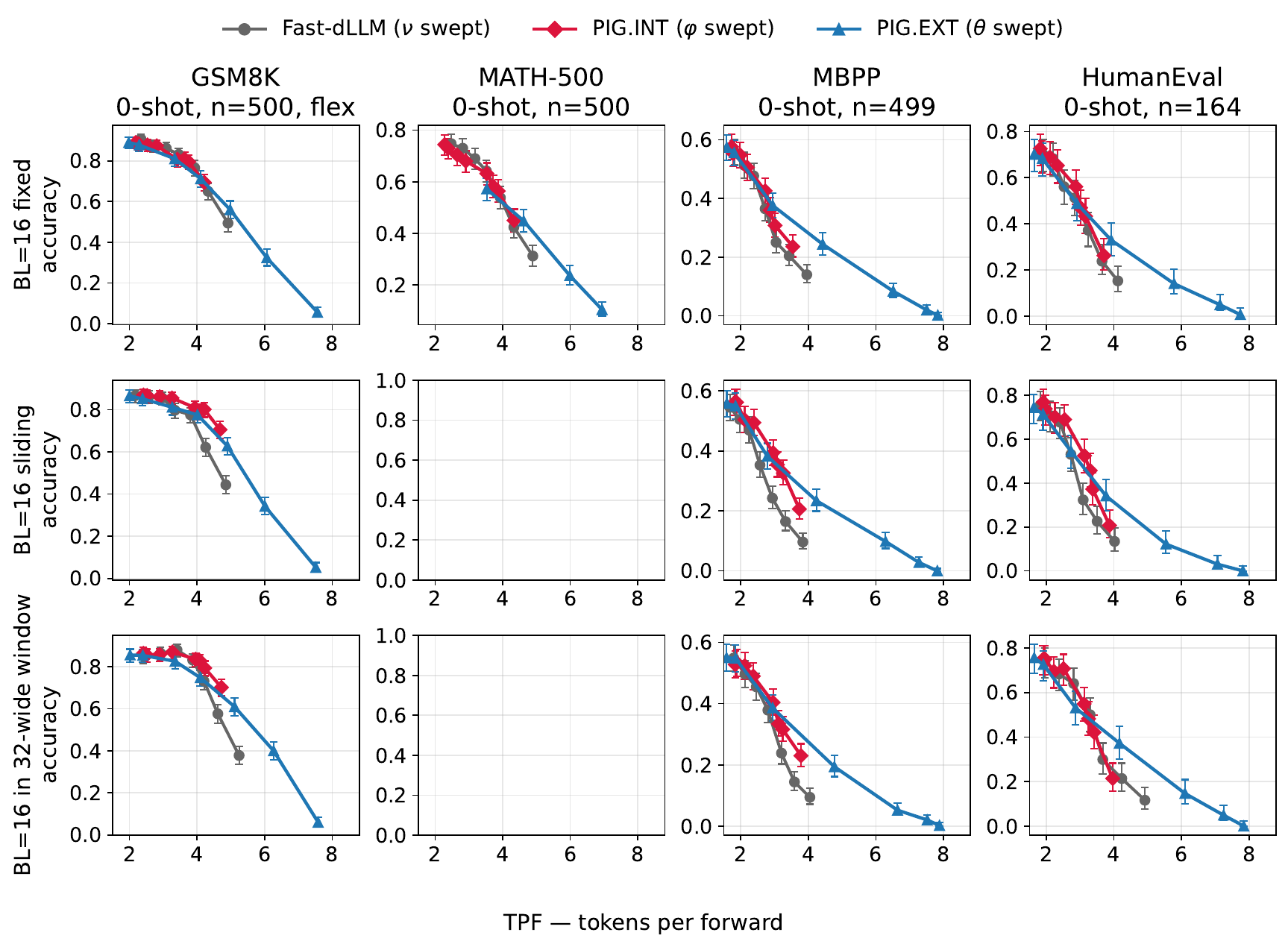}
  \caption{\textbf{The Nemotron-Diffusion-3B frontier at BL=16} ($0$-shot, canvas $2048$, greedy;
  flexible extraction for GSM8K, pass@1 for code; bars: Wilson 95\%). Rows: fixed $16$-token block,
  sliding $16$-token window, and a $16$-token commit band inside a $32$-wide window; columns:
  benchmark. Grey: the model's confidence-threshold decoder under the same geometry, swept
  $0.9\to0.3$; red: \PIGint{}; blue: \PIGext{}, the composite operating rule of
  \Cref{fig:main-frontier}. Cells beyond $8.5$ TPF are omitted.}
  \label{fig:nemotron-grid-bl16}
\end{figure}

\begin{figure}[t]
  \centering
  \includegraphics[width=\textwidth]{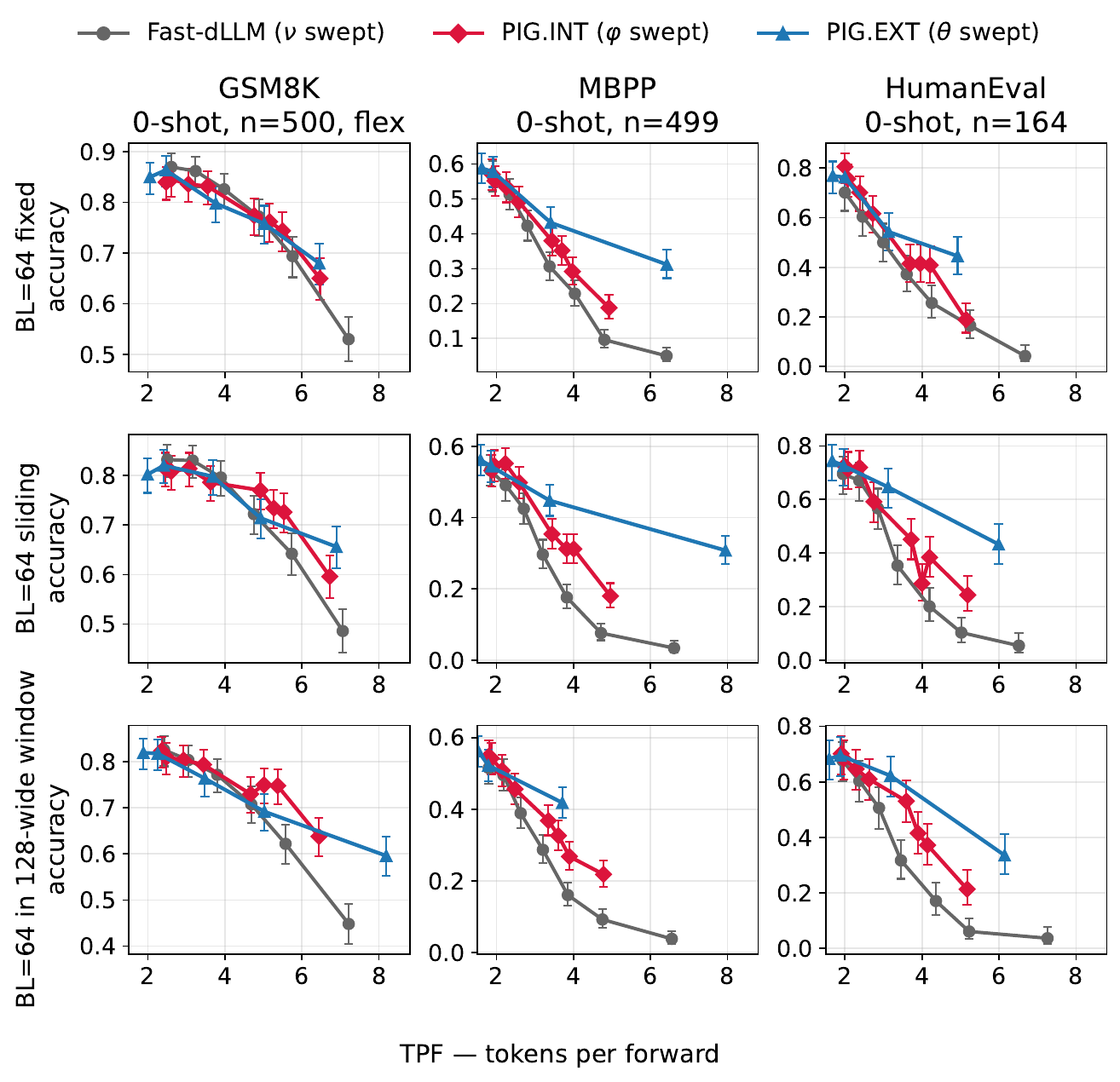}
  \caption{\textbf{The Nemotron-Diffusion-3B frontier at BL=64}, as \Cref{fig:nemotron-grid-bl16}
  with a fixed $64$-token block, a sliding $64$-token window, and a $64$-token commit band inside a
  $128$-wide window; MATH-500 was not run at this block size.}
  \label{fig:nemotron-grid-bl64}
\end{figure}

\clearpage
\subsection{Frontier tables}
\label{app:results:tables}
\label{app:results:nemotron:tables}

The eight tables that follow list every operating point behind the frontier figures of this
appendix, each accuracy with its Wilson 95\% interval, including the cells the figures crop.
\Cref{tab:frontier-llada-gsm8k,tab:frontier-llada-math500,tab:frontier-llada-mbpp,tab:frontier-llada-humaneval}
are the LLaDA-8B-Instruct grids of \Cref{fig:llada-grid-full} and \Cref{fig:llada-grid-strict}
(flexible and strict extraction);
\Cref{tab:frontier-nemotron-gsm8k,tab:frontier-nemotron-math500,tab:frontier-nemotron-mbpp,tab:frontier-nemotron-humaneval}
are the Nemotron-Labs-Diffusion-3B grids of \Cref{fig:nemotron-grid-full}. In every table the
Fast-dLLM lane is the confidence-threshold decoder swept over $\nu$, and PIG.EXT is the composite
operating rule of every figure.

\begin{table}[p]
\centering
\caption{\textbf{Frontier table, GSM8K, 5-shot.} LLaDA-8B-Instruct, canvas 256, greedy, no KV cache; problems $[0,500)$ (HumanEval all 164). Accuracy with its Wilson 95\% interval; flexible and strict extraction. Every operating point behind \Cref{fig:llada-grid-full} and \Cref{fig:llada-grid-strict}, including cells beyond the figures' crops; PIG.EXT is the composite operating rule of every figure: its two slowest rows additionally floor admission at a confidence $\nu$ (written $\theta$ ($\nu{\ge}c$)), then the plain $\theta$ ladder from $0.5$.}
\label{tab:frontier-llada-gsm8k}
\scriptsize
\setlength{\tabcolsep}{3pt}
\fitwidth{\begin{tabular}{lrrrrrrrrrrrr}
\toprule
regime & \multicolumn{4}{c}{Fast-dLLM} & \multicolumn{4}{c}{PIG.INT} & \multicolumn{4}{c}{PIG.EXT} \\
\cmidrule(lr){2-5} \cmidrule(lr){6-9} \cmidrule(lr){10-13}
 & $\nu$ & TPF & acc (flex) & acc (strict) & $\varphi$ & TPF & acc (flex) & acc (strict) & $\theta$ & TPF & acc (flex) & acc (strict) \\
\midrule
BL=32 fixed & 0.95 & 2.83 & $0.786^{+0.034}_{-0.038}$ & $0.488^{+0.044}_{-0.044}$ & 0.015 & 3.73 & $0.796^{+0.033}_{-0.038}$ & $0.500^{+0.044}_{-0.044}$ & 0.125 ($\nu{\ge}0.9$) & 3.33 & $0.796^{+0.033}_{-0.038}$ & $0.500^{+0.044}_{-0.044}$ \\
 & 0.85 & 3.85 & $0.794^{+0.033}_{-0.038}$ & $0.510^{+0.044}_{-0.044}$ & 0.03 & 4.36 & $0.788^{+0.034}_{-0.038}$ & $0.528^{+0.043}_{-0.044}$ & 0.25 ($\nu{\ge}0.8$) & 4.13 & $0.786^{+0.034}_{-0.038}$ & $0.514^{+0.044}_{-0.044}$ \\
 & 0.7 & 5.36 & $0.772^{+0.035}_{-0.039}$ & $0.492^{+0.044}_{-0.044}$ & 0.05 & 4.81 & $0.798^{+0.033}_{-0.037}$ & $0.532^{+0.043}_{-0.044}$ & 0.5 & 5.48 & $0.708^{+0.038}_{-0.041}$ & $0.416^{+0.044}_{-0.042}$ \\
 & 0.5 & 7.50 & $0.666^{+0.040}_{-0.042}$ & $0.386^{+0.043}_{-0.042}$ & 0.1 & 5.44 & $0.778^{+0.034}_{-0.038}$ & $0.514^{+0.044}_{-0.044}$ & 1 & 6.82 & $0.632^{+0.041}_{-0.043}$ & $0.300^{+0.042}_{-0.039}$ \\
 & 0.3 & 9.49 & $0.372^{+0.043}_{-0.041}$ & $0.184^{+0.036}_{-0.032}$ & 0.2 & 6.59 & $0.764^{+0.035}_{-0.039}$ & $0.490^{+0.044}_{-0.044}$ & 1.5 & 8.49 & $0.534^{+0.043}_{-0.044}$ & $0.260^{+0.040}_{-0.037}$ \\
 & 0.2 & 13.31 & $0.156^{+0.034}_{-0.029}$ & $0.048^{+0.022}_{-0.016}$ & 0.3 & 7.42 & $0.734^{+0.037}_{-0.040}$ & $0.454^{+0.044}_{-0.043}$ & 2 & 10.64 & $0.426^{+0.044}_{-0.043}$ & $0.206^{+0.038}_{-0.033}$ \\
 &  &  &  &  & 0.5 & 8.60 & $0.712^{+0.038}_{-0.041}$ & $0.414^{+0.044}_{-0.042}$ & 2.5 & 13.37 & $0.322^{+0.042}_{-0.039}$ & $0.170^{+0.035}_{-0.030}$ \\
 &  &  &  &  & 1 & 9.55 & $0.598^{+0.042}_{-0.044}$ & $0.354^{+0.043}_{-0.041}$ & 3 & 16.95 & $0.194^{+0.037}_{-0.032}$ & $0.100^{+0.029}_{-0.023}$ \\
\midrule
BL=64 fixed & 0.95 & 2.88 & $0.800^{+0.033}_{-0.037}$ & $0.506^{+0.044}_{-0.044}$ & 0.015 & 3.86 & $0.798^{+0.033}_{-0.037}$ & $0.514^{+0.044}_{-0.044}$ & 0.125 ($\nu{\ge}0.9$) & 3.43 & $0.804^{+0.032}_{-0.037}$ & $0.510^{+0.044}_{-0.044}$ \\
 & 0.85 & 4.03 & $0.810^{+0.032}_{-0.037}$ & $0.526^{+0.043}_{-0.044}$ & 0.03 & 4.62 & $0.810^{+0.032}_{-0.037}$ & $0.538^{+0.043}_{-0.044}$ & 0.25 ($\nu{\ge}0.8$) & 4.33 & $0.790^{+0.033}_{-0.038}$ & $0.538^{+0.043}_{-0.044}$ \\
 & 0.7 & 5.83 & $0.768^{+0.035}_{-0.039}$ & $0.472^{+0.044}_{-0.043}$ & 0.05 & 5.18 & $0.806^{+0.032}_{-0.037}$ & $0.516^{+0.044}_{-0.044}$ & 0.5 & 5.53 & $0.720^{+0.038}_{-0.041}$ & $0.436^{+0.044}_{-0.043}$ \\
 & 0.5 & 8.49 & $0.656^{+0.040}_{-0.043}$ & $0.420^{+0.044}_{-0.042}$ & 0.1 & 5.96 & $0.782^{+0.034}_{-0.038}$ & $0.496^{+0.044}_{-0.044}$ & 1 & 7.16 & $0.666^{+0.040}_{-0.042}$ & $0.346^{+0.043}_{-0.040}$ \\
 & 0.3 & 11.41 & $0.382^{+0.043}_{-0.042}$ & $0.206^{+0.038}_{-0.033}$ & 0.2 & 7.38 & $0.766^{+0.035}_{-0.039}$ & $0.506^{+0.044}_{-0.044}$ & 1.5 & 9.15 & $0.590^{+0.042}_{-0.044}$ & $0.292^{+0.041}_{-0.038}$ \\
 & 0.2 & 17.43 & $0.170^{+0.035}_{-0.030}$ & $0.056^{+0.024}_{-0.017}$ & 0.3 & 8.29 & $0.742^{+0.036}_{-0.040}$ & $0.446^{+0.044}_{-0.043}$ & 2 & 11.91 & $0.494^{+0.044}_{-0.044}$ & $0.264^{+0.040}_{-0.037}$ \\
 &  &  &  &  & 0.5 & 10.04 & $0.704^{+0.038}_{-0.041}$ & $0.432^{+0.044}_{-0.043}$ & 2.5 & 15.81 & $0.372^{+0.043}_{-0.041}$ & $0.196^{+0.037}_{-0.032}$ \\
 &  &  &  &  & 1 & 11.65 & $0.608^{+0.042}_{-0.043}$ & $0.344^{+0.043}_{-0.040}$ &  &  &  &  \\
 &  &  &  &  & 2 & 13.47 & $0.502^{+0.044}_{-0.044}$ & $0.302^{+0.042}_{-0.039}$ &  &  &  &  \\
 &  &  &  &  & 3 & 15.08 & $0.332^{+0.042}_{-0.040}$ & $0.196^{+0.037}_{-0.032}$ &  &  &  &  \\
\midrule
BL=32 sliding & 0.95 & 2.85 & $0.812^{+0.032}_{-0.037}$ & $0.512^{+0.044}_{-0.044}$ & 0.015 & 3.95 & $0.814^{+0.032}_{-0.036}$ & $0.508^{+0.044}_{-0.044}$ & 0.125 ($\nu{\ge}0.9$) & 3.45 & $0.814^{+0.032}_{-0.036}$ & $0.502^{+0.044}_{-0.044}$ \\
 & 0.85 & 4.12 & $0.822^{+0.031}_{-0.036}$ & $0.518^{+0.043}_{-0.044}$ & 0.03 & 4.75 & $0.800^{+0.033}_{-0.037}$ & $0.526^{+0.043}_{-0.044}$ & 0.25 ($\nu{\ge}0.8$) & 4.44 & $0.818^{+0.031}_{-0.036}$ & $0.524^{+0.043}_{-0.044}$ \\
 & 0.7 & 6.12 & $0.770^{+0.035}_{-0.039}$ & $0.502^{+0.044}_{-0.044}$ & 0.05 & 5.37 & $0.806^{+0.032}_{-0.037}$ & $0.498^{+0.044}_{-0.044}$ & 0.5 & 5.44 & $0.750^{+0.036}_{-0.040}$ & $0.450^{+0.044}_{-0.043}$ \\
 & 0.5 & 8.84 & $0.652^{+0.040}_{-0.043}$ & $0.380^{+0.043}_{-0.041}$ & 0.1 & 6.20 & $0.778^{+0.034}_{-0.038}$ & $0.510^{+0.044}_{-0.044}$ & 1 & 6.86 & $0.658^{+0.040}_{-0.043}$ & $0.276^{+0.041}_{-0.037}$ \\
 & 0.3 & 11.54 & $0.380^{+0.043}_{-0.041}$ & $0.210^{+0.038}_{-0.033}$ & 0.2 & 7.66 & $0.748^{+0.036}_{-0.040}$ & $0.472^{+0.044}_{-0.043}$ & 1.5 & 8.56 & $0.578^{+0.043}_{-0.044}$ & $0.282^{+0.041}_{-0.038}$ \\
 & 0.2 & 16.77 & $0.130^{+0.032}_{-0.027}$ & $0.038^{+0.021}_{-0.014}$ & 0.3 & 8.69 & $0.738^{+0.037}_{-0.040}$ & $0.462^{+0.044}_{-0.043}$ & 2 & 10.76 & $0.440^{+0.044}_{-0.043}$ & $0.214^{+0.038}_{-0.034}$ \\
 &  &  &  &  & 0.5 & 10.67 & $0.696^{+0.039}_{-0.042}$ & $0.390^{+0.043}_{-0.042}$ & 2.5 & 13.72 & $0.318^{+0.042}_{-0.039}$ & $0.132^{+0.032}_{-0.027}$ \\
 &  &  &  &  & 1 & 11.96 & $0.628^{+0.041}_{-0.043}$ & $0.390^{+0.043}_{-0.042}$ & 3 & 16.81 & $0.202^{+0.037}_{-0.033}$ & $0.094^{+0.029}_{-0.023}$ \\
\midrule
BL=64 sliding & 0.95 & 2.81 & $0.788^{+0.034}_{-0.038}$ & $0.518^{+0.043}_{-0.044}$ & 0.015 & 3.93 & $0.798^{+0.033}_{-0.037}$ & $0.516^{+0.044}_{-0.044}$ & 0.125 ($\nu{\ge}0.9$) & 3.43 & $0.798^{+0.033}_{-0.037}$ & $0.518^{+0.043}_{-0.044}$ \\
 & 0.85 & 4.12 & $0.802^{+0.033}_{-0.037}$ & $0.522^{+0.043}_{-0.044}$ & 0.03 & 4.77 & $0.798^{+0.033}_{-0.037}$ & $0.530^{+0.043}_{-0.044}$ & 0.25 ($\nu{\ge}0.8$) & 4.46 & $0.788^{+0.034}_{-0.038}$ & $0.536^{+0.043}_{-0.044}$ \\
 & 0.7 & 6.27 & $0.764^{+0.035}_{-0.039}$ & $0.476^{+0.044}_{-0.043}$ & 0.05 & 5.43 & $0.786^{+0.034}_{-0.038}$ & $0.508^{+0.044}_{-0.044}$ & 0.5 & 5.64 & $0.704^{+0.038}_{-0.041}$ & $0.408^{+0.044}_{-0.042}$ \\
 & 0.5 & 9.58 & $0.612^{+0.042}_{-0.043}$ & $0.350^{+0.043}_{-0.041}$ & 0.1 & 6.39 & $0.766^{+0.035}_{-0.039}$ & $0.472^{+0.044}_{-0.043}$ & 1 & 7.42 & $0.658^{+0.040}_{-0.043}$ & $0.334^{+0.042}_{-0.040}$ \\
 & 0.3 & 13.48 & $0.326^{+0.042}_{-0.040}$ & $0.150^{+0.034}_{-0.029}$ & 0.2 & 8.10 & $0.744^{+0.036}_{-0.040}$ & $0.452^{+0.044}_{-0.043}$ & 1.5 & 9.85 & $0.588^{+0.042}_{-0.044}$ & $0.306^{+0.042}_{-0.039}$ \\
 &  &  &  &  & 0.3 & 9.23 & $0.722^{+0.037}_{-0.041}$ & $0.446^{+0.044}_{-0.043}$ & 2 & 12.97 & $0.460^{+0.044}_{-0.043}$ & $0.252^{+0.040}_{-0.036}$ \\
 &  &  &  &  & 0.5 & 11.75 & $0.696^{+0.039}_{-0.042}$ & $0.416^{+0.044}_{-0.042}$ & 2.5 & 17.26 & $0.334^{+0.042}_{-0.040}$ & $0.176^{+0.036}_{-0.031}$ \\
 &  &  &  &  & 1 & 14.09 & $0.600^{+0.042}_{-0.044}$ & $0.370^{+0.043}_{-0.041}$ &  &  &  &  \\
 &  &  &  &  & 2 & 16.37 & $0.518^{+0.043}_{-0.044}$ & $0.312^{+0.042}_{-0.039}$ &  &  &  &  \\
 &  &  &  &  & 3 & 18.47 & $0.334^{+0.042}_{-0.040}$ & $0.182^{+0.036}_{-0.031}$ &  &  &  &  \\
\midrule
free canvas & 0.95 & 2.29 & $0.518^{+0.043}_{-0.044}$ & $0.308^{+0.042}_{-0.039}$ & 0.015 & 3.19 & $0.516^{+0.044}_{-0.044}$ & $0.338^{+0.043}_{-0.040}$ & 0.125 ($\nu{\ge}0.9$) & 2.68 & $0.514^{+0.044}_{-0.044}$ & $0.316^{+0.042}_{-0.039}$ \\
 & 0.85 & 3.36 & $0.514^{+0.044}_{-0.044}$ & $0.338^{+0.043}_{-0.040}$ & 0.03 & 4.04 & $0.542^{+0.043}_{-0.044}$ & $0.374^{+0.043}_{-0.041}$ & 0.25 ($\nu{\ge}0.8$) & 3.65 & $0.522^{+0.043}_{-0.044}$ & $0.358^{+0.043}_{-0.041}$ \\
 & 0.7 & 5.69 & $0.512^{+0.044}_{-0.044}$ & $0.380^{+0.043}_{-0.041}$ & 0.05 & 4.73 & $0.568^{+0.043}_{-0.044}$ & $0.410^{+0.044}_{-0.042}$ & 0.5 & 5.06 & $0.506^{+0.044}_{-0.044}$ & $0.374^{+0.043}_{-0.041}$ \\
 & 0.5 & 10.72 & $0.456^{+0.044}_{-0.043}$ & $0.350^{+0.043}_{-0.041}$ & 0.1 & 6.07 & $0.552^{+0.043}_{-0.044}$ & $0.418^{+0.044}_{-0.042}$ & 1 & 7.05 & $0.532^{+0.043}_{-0.044}$ & $0.420^{+0.044}_{-0.042}$ \\
 & 0.3 & 20.16 & $0.260^{+0.040}_{-0.037}$ & $0.126^{+0.032}_{-0.026}$ & 0.2 & 8.78 & $0.572^{+0.043}_{-0.044}$ & $0.454^{+0.044}_{-0.043}$ & 1.5 & 9.67 & $0.524^{+0.043}_{-0.044}$ & $0.440^{+0.044}_{-0.043}$ \\
 &  &  &  &  & 0.3 & 10.18 & $0.540^{+0.043}_{-0.044}$ & $0.456^{+0.044}_{-0.043}$ & 2 & 12.44 & $0.496^{+0.044}_{-0.044}$ & $0.396^{+0.044}_{-0.042}$ \\
 &  &  &  &  & 0.5 & 13.19 & $0.530^{+0.043}_{-0.044}$ & $0.460^{+0.044}_{-0.043}$ & 2.5 & 15.27 & $0.440^{+0.044}_{-0.043}$ & $0.330^{+0.042}_{-0.040}$ \\
 &  &  &  &  & 1 & 16.16 & $0.478^{+0.044}_{-0.043}$ & $0.390^{+0.043}_{-0.042}$ & 3 & 19.68 & $0.378^{+0.043}_{-0.041}$ & $0.274^{+0.041}_{-0.037}$ \\
 &  &  &  &  & 2 & 19.85 & $0.436^{+0.044}_{-0.043}$ & $0.336^{+0.043}_{-0.040}$ &  &  &  &  \\
 &  &  &  &  & 3 & 21.79 & $0.378^{+0.043}_{-0.041}$ & $0.264^{+0.040}_{-0.037}$ &  &  &  &  \\
\bottomrule
\end{tabular}}
\end{table}

\begin{table}[p]
\centering
\caption{\textbf{Frontier table, MATH-500, 4-shot.} LLaDA-8B-Instruct, canvas 256, greedy, no KV cache; problems $[0,500)$ (HumanEval all 164). Accuracy with its Wilson 95\% interval; flexible and strict extraction. Every operating point behind \Cref{fig:llada-grid-full} and \Cref{fig:llada-grid-strict}, including cells beyond the figures' crops; PIG.EXT is the composite operating rule of every figure: its two slowest rows additionally floor admission at a confidence $\nu$ (written $\theta$ ($\nu{\ge}c$)), then the plain $\theta$ ladder from $0.5$.}
\label{tab:frontier-llada-math500}
\scriptsize
\setlength{\tabcolsep}{3pt}
\fitwidth{\begin{tabular}{lrrrrrrrrrrrr}
\toprule
regime & \multicolumn{4}{c}{Fast-dLLM} & \multicolumn{4}{c}{PIG.INT} & \multicolumn{4}{c}{PIG.EXT} \\
\cmidrule(lr){2-5} \cmidrule(lr){6-9} \cmidrule(lr){10-13}
 & $\nu$ & TPF & acc (flex) & acc (strict) & $\varphi$ & TPF & acc (flex) & acc (strict) & $\theta$ & TPF & acc (flex) & acc (strict) \\
\midrule
BL=32 fixed & 0.95 & 2.21 & $0.364^{+0.043}_{-0.041}$ & $0.356^{+0.043}_{-0.041}$ & 0.015 & 2.83 & $0.360^{+0.043}_{-0.041}$ & $0.354^{+0.043}_{-0.041}$ & 0.125 ($\nu{\ge}0.9$) & 2.54 & $0.362^{+0.043}_{-0.041}$ & $0.352^{+0.043}_{-0.041}$ \\
 & 0.85 & 2.88 & $0.364^{+0.043}_{-0.041}$ & $0.354^{+0.043}_{-0.041}$ & 0.03 & 3.23 & $0.356^{+0.043}_{-0.041}$ & $0.348^{+0.043}_{-0.040}$ & 0.25 ($\nu{\ge}0.8$) & 3.09 & $0.360^{+0.043}_{-0.041}$ & $0.356^{+0.043}_{-0.041}$ \\
 & 0.7 & 3.87 & $0.332^{+0.042}_{-0.040}$ & $0.330^{+0.042}_{-0.040}$ & 0.05 & 3.52 & $0.342^{+0.043}_{-0.040}$ & $0.334^{+0.042}_{-0.040}$ & 0.5 & 4.57 & $0.334^{+0.042}_{-0.040}$ & $0.322^{+0.042}_{-0.039}$ \\
 & 0.5 & 5.39 & $0.290^{+0.041}_{-0.038}$ & $0.268^{+0.040}_{-0.037}$ & 0.1 & 4.00 & $0.322^{+0.042}_{-0.039}$ & $0.318^{+0.042}_{-0.039}$ & 1 & 6.23 & $0.260^{+0.040}_{-0.037}$ & $0.250^{+0.040}_{-0.036}$ \\
 & 0.3 & 7.14 & $0.152^{+0.034}_{-0.029}$ & $0.112^{+0.031}_{-0.025}$ & 0.2 & 4.84 & $0.344^{+0.043}_{-0.040}$ & $0.342^{+0.043}_{-0.040}$ & 1.5 & 8.10 & $0.184^{+0.036}_{-0.032}$ & $0.172^{+0.036}_{-0.031}$ \\
 & 0.2 & 10.27 & $0.066^{+0.025}_{-0.019}$ & $0.030^{+0.019}_{-0.012}$ & 0.3 & 5.44 & $0.332^{+0.042}_{-0.040}$ & $0.318^{+0.042}_{-0.039}$ & 2 & 11.03 & $0.128^{+0.032}_{-0.026}$ & $0.110^{+0.030}_{-0.025}$ \\
 &  &  &  &  & 0.5 & 6.64 & $0.310^{+0.042}_{-0.039}$ & $0.302^{+0.042}_{-0.039}$ & 2.5 & 13.74 & $0.100^{+0.029}_{-0.023}$ & $0.082^{+0.027}_{-0.021}$ \\
 &  &  &  &  & 1 & 7.61 & $0.232^{+0.039}_{-0.035}$ & $0.216^{+0.038}_{-0.034}$ & 3 & 17.05 & $0.058^{+0.024}_{-0.017}$ & $0.040^{+0.021}_{-0.014}$ \\
\midrule
BL=64 fixed & 0.95 & 2.24 & $0.380^{+0.043}_{-0.041}$ & $0.374^{+0.043}_{-0.041}$ & 0.015 & 2.91 & $0.378^{+0.043}_{-0.041}$ & $0.374^{+0.043}_{-0.041}$ & 0.125 ($\nu{\ge}0.9$) & 2.59 & $0.386^{+0.043}_{-0.042}$ & $0.380^{+0.043}_{-0.041}$ \\
 & 0.85 & 2.98 & $0.380^{+0.043}_{-0.041}$ & $0.368^{+0.043}_{-0.041}$ & 0.03 & 3.36 & $0.378^{+0.043}_{-0.041}$ & $0.372^{+0.043}_{-0.041}$ & 0.25 ($\nu{\ge}0.8$) & 3.19 & $0.372^{+0.043}_{-0.041}$ & $0.368^{+0.043}_{-0.041}$ \\
 & 0.7 & 4.13 & $0.342^{+0.043}_{-0.040}$ & $0.340^{+0.043}_{-0.040}$ & 0.05 & 3.70 & $0.370^{+0.043}_{-0.041}$ & $0.368^{+0.043}_{-0.041}$ & 0.5 & 4.68 & $0.308^{+0.042}_{-0.039}$ & $0.306^{+0.042}_{-0.039}$ \\
 & 0.5 & 6.04 & $0.274^{+0.041}_{-0.037}$ & $0.260^{+0.040}_{-0.037}$ & 0.1 & 4.28 & $0.360^{+0.043}_{-0.041}$ & $0.358^{+0.043}_{-0.041}$ & 1 & 6.42 & $0.302^{+0.042}_{-0.039}$ & $0.298^{+0.042}_{-0.038}$ \\
 & 0.3 & 8.22 & $0.176^{+0.036}_{-0.031}$ & $0.122^{+0.032}_{-0.026}$ & 0.2 & 5.27 & $0.358^{+0.043}_{-0.041}$ & $0.354^{+0.043}_{-0.041}$ & 1.5 & 8.97 & $0.216^{+0.038}_{-0.034}$ & $0.210^{+0.038}_{-0.033}$ \\
 & 0.2 & 12.20 & $0.098^{+0.029}_{-0.023}$ & $0.042^{+0.021}_{-0.014}$ & 0.3 & 6.07 & $0.314^{+0.042}_{-0.039}$ & $0.308^{+0.042}_{-0.039}$ & 2 & 12.40 & $0.162^{+0.035}_{-0.030}$ & $0.148^{+0.034}_{-0.028}$ \\
 &  &  &  &  & 0.5 & 7.65 & $0.294^{+0.041}_{-0.038}$ & $0.292^{+0.041}_{-0.038}$ & 2.5 & 16.72 & $0.120^{+0.031}_{-0.026}$ & $0.110^{+0.030}_{-0.025}$ \\
 &  &  &  &  & 1 & 9.46 & $0.278^{+0.041}_{-0.037}$ & $0.258^{+0.040}_{-0.036}$ &  &  &  &  \\
 &  &  &  &  & 2 & 11.51 & $0.192^{+0.037}_{-0.032}$ & $0.176^{+0.036}_{-0.031}$ &  &  &  &  \\
 &  &  &  &  & 3 & 13.44 & $0.154^{+0.034}_{-0.029}$ & $0.106^{+0.030}_{-0.024}$ &  &  &  &  \\
\midrule
BL=32 sliding & 0.95 & 2.25 & $0.376^{+0.043}_{-0.041}$ & $0.368^{+0.043}_{-0.041}$ & 0.015 & 2.96 & $0.376^{+0.043}_{-0.041}$ & $0.370^{+0.043}_{-0.041}$ & 0.125 ($\nu{\ge}0.9$) & 2.63 & $0.380^{+0.043}_{-0.041}$ & $0.370^{+0.043}_{-0.041}$ \\
 & 0.85 & 3.03 & $0.382^{+0.043}_{-0.042}$ & $0.372^{+0.043}_{-0.041}$ & 0.03 & 3.44 & $0.350^{+0.043}_{-0.041}$ & $0.346^{+0.043}_{-0.040}$ & 0.25 ($\nu{\ge}0.8$) & 3.25 & $0.360^{+0.043}_{-0.041}$ & $0.354^{+0.043}_{-0.041}$ \\
 & 0.7 & 4.25 & $0.342^{+0.043}_{-0.040}$ & $0.338^{+0.043}_{-0.040}$ & 0.05 & 3.78 & $0.358^{+0.043}_{-0.041}$ & $0.354^{+0.043}_{-0.041}$ & 0.5 & 4.56 & $0.334^{+0.042}_{-0.040}$ & $0.324^{+0.042}_{-0.040}$ \\
 & 0.5 & 6.18 & $0.264^{+0.040}_{-0.037}$ & $0.248^{+0.040}_{-0.036}$ & 0.1 & 4.38 & $0.348^{+0.043}_{-0.040}$ & $0.338^{+0.043}_{-0.040}$ & 1 & 6.02 & $0.286^{+0.041}_{-0.038}$ & $0.280^{+0.041}_{-0.038}$ \\
 & 0.3 & 8.45 & $0.132^{+0.032}_{-0.027}$ & $0.106^{+0.030}_{-0.024}$ & 0.2 & 5.37 & $0.346^{+0.043}_{-0.040}$ & $0.340^{+0.043}_{-0.040}$ & 1.5 & 8.13 & $0.240^{+0.039}_{-0.035}$ & $0.232^{+0.039}_{-0.035}$ \\
 & 0.2 & 12.12 & $0.066^{+0.025}_{-0.019}$ & $0.034^{+0.020}_{-0.013}$ & 0.3 & 6.20 & $0.322^{+0.042}_{-0.039}$ & $0.304^{+0.042}_{-0.039}$ & 2 & 10.68 & $0.172^{+0.036}_{-0.031}$ & $0.152^{+0.034}_{-0.029}$ \\
 &  &  &  &  & 0.5 & 8.17 & $0.296^{+0.041}_{-0.038}$ & $0.290^{+0.041}_{-0.038}$ & 2.5 & 13.59 & $0.098^{+0.029}_{-0.023}$ & $0.080^{+0.027}_{-0.021}$ \\
 &  &  &  &  & 1 & 9.74 & $0.264^{+0.040}_{-0.037}$ & $0.246^{+0.040}_{-0.036}$ & 3 & 17.08 & $0.080^{+0.027}_{-0.021}$ & $0.058^{+0.024}_{-0.017}$ \\
\midrule
BL=64 sliding & 0.95 & 2.24 & $0.386^{+0.043}_{-0.042}$ & $0.378^{+0.043}_{-0.041}$ & 0.015 & 2.96 & $0.382^{+0.043}_{-0.042}$ & $0.378^{+0.043}_{-0.041}$ & 0.125 ($\nu{\ge}0.9$) & 2.61 & $0.396^{+0.044}_{-0.042}$ & $0.388^{+0.043}_{-0.042}$ \\
 & 0.85 & 3.05 & $0.374^{+0.043}_{-0.041}$ & $0.366^{+0.043}_{-0.041}$ & 0.03 & 3.45 & $0.356^{+0.043}_{-0.041}$ & $0.350^{+0.043}_{-0.041}$ & 0.25 ($\nu{\ge}0.8$) & 3.26 & $0.366^{+0.043}_{-0.041}$ & $0.362^{+0.043}_{-0.041}$ \\
 & 0.7 & 4.35 & $0.334^{+0.042}_{-0.040}$ & $0.332^{+0.042}_{-0.040}$ & 0.05 & 3.84 & $0.366^{+0.043}_{-0.041}$ & $0.364^{+0.043}_{-0.041}$ & 0.5 & 4.72 & $0.342^{+0.043}_{-0.040}$ & $0.334^{+0.042}_{-0.040}$ \\
 & 0.5 & 6.47 & $0.278^{+0.041}_{-0.037}$ & $0.270^{+0.041}_{-0.037}$ & 0.1 & 4.50 & $0.370^{+0.043}_{-0.041}$ & $0.366^{+0.043}_{-0.041}$ & 1 & 6.71 & $0.288^{+0.041}_{-0.038}$ & $0.282^{+0.041}_{-0.038}$ \\
 & 0.3 & 9.16 & $0.158^{+0.035}_{-0.029}$ & $0.118^{+0.031}_{-0.025}$ & 0.2 & 5.58 & $0.360^{+0.043}_{-0.041}$ & $0.354^{+0.043}_{-0.041}$ & 1.5 & 9.69 & $0.232^{+0.039}_{-0.035}$ & $0.222^{+0.038}_{-0.034}$ \\
 & 0.2 & 14.00 & $0.082^{+0.027}_{-0.021}$ & $0.034^{+0.020}_{-0.013}$ & 0.3 & 6.45 & $0.340^{+0.043}_{-0.040}$ & $0.332^{+0.042}_{-0.040}$ & 2 & 13.34 & $0.152^{+0.034}_{-0.029}$ & $0.140^{+0.033}_{-0.028}$ \\
 &  &  &  &  & 0.5 & 8.66 & $0.282^{+0.041}_{-0.038}$ & $0.274^{+0.041}_{-0.037}$ & 2.5 & 17.31 & $0.120^{+0.031}_{-0.026}$ & $0.108^{+0.030}_{-0.024}$ \\
 &  &  &  &  & 1 & 11.23 & $0.264^{+0.040}_{-0.037}$ & $0.248^{+0.040}_{-0.036}$ &  &  &  &  \\
 &  &  &  &  & 2 & 13.85 & $0.220^{+0.038}_{-0.034}$ & $0.202^{+0.037}_{-0.033}$ &  &  &  &  \\
 &  &  &  &  & 3 & 16.27 & $0.162^{+0.035}_{-0.030}$ & $0.132^{+0.032}_{-0.027}$ &  &  &  &  \\
\midrule
free canvas & 0.95 & 2.01 & $0.254^{+0.040}_{-0.036}$ & $0.252^{+0.040}_{-0.036}$ & 0.015 & 2.58 & $0.252^{+0.040}_{-0.036}$ & $0.250^{+0.040}_{-0.036}$ & 0.125 ($\nu{\ge}0.9$) & 2.31 & $0.252^{+0.040}_{-0.036}$ & $0.248^{+0.040}_{-0.036}$ \\
 & 0.85 & 2.64 & $0.250^{+0.040}_{-0.036}$ & $0.246^{+0.040}_{-0.036}$ & 0.03 & 3.02 & $0.250^{+0.040}_{-0.036}$ & $0.246^{+0.040}_{-0.036}$ & 0.25 ($\nu{\ge}0.8$) & 2.87 & $0.252^{+0.040}_{-0.036}$ & $0.248^{+0.040}_{-0.036}$ \\
 & 0.7 & 3.80 & $0.238^{+0.039}_{-0.035}$ & $0.234^{+0.039}_{-0.035}$ & 0.05 & 3.39 & $0.236^{+0.039}_{-0.035}$ & $0.232^{+0.039}_{-0.035}$ & 0.5 & 4.37 & $0.230^{+0.039}_{-0.035}$ & $0.226^{+0.039}_{-0.034}$ \\
 & 0.5 & 6.47 & $0.202^{+0.037}_{-0.033}$ & $0.190^{+0.037}_{-0.032}$ & 0.1 & 4.01 & $0.230^{+0.039}_{-0.035}$ & $0.226^{+0.039}_{-0.034}$ & 1 & 5.95 & $0.216^{+0.038}_{-0.034}$ & $0.212^{+0.038}_{-0.034}$ \\
 & 0.3 & 12.66 & $0.120^{+0.031}_{-0.026}$ & $0.094^{+0.029}_{-0.023}$ & 0.2 & 5.76 & $0.240^{+0.039}_{-0.035}$ & $0.234^{+0.039}_{-0.035}$ & 1.5 & 7.92 & $0.210^{+0.038}_{-0.033}$ & $0.208^{+0.038}_{-0.033}$ \\
 & 0.2 & 26.54 & $0.094^{+0.029}_{-0.023}$ & $0.044^{+0.022}_{-0.015}$ & 0.3 & 7.14 & $0.230^{+0.039}_{-0.035}$ & $0.226^{+0.039}_{-0.034}$ & 2 & 10.47 & $0.222^{+0.038}_{-0.034}$ & $0.222^{+0.038}_{-0.034}$ \\
 &  &  &  &  & 0.5 & 9.89 & $0.222^{+0.038}_{-0.034}$ & $0.220^{+0.038}_{-0.034}$ & 2.5 & 13.85 & $0.202^{+0.037}_{-0.033}$ & $0.200^{+0.037}_{-0.033}$ \\
 &  &  &  &  & 1 & 12.65 & $0.190^{+0.037}_{-0.032}$ & $0.180^{+0.036}_{-0.031}$ & 3 & 19.16 & $0.218^{+0.038}_{-0.034}$ & $0.210^{+0.038}_{-0.033}$ \\
 &  &  &  &  & 2 & 15.68 & $0.180^{+0.036}_{-0.031}$ & $0.164^{+0.035}_{-0.030}$ &  &  &  &  \\
 &  &  &  &  & 3 & 18.31 & $0.146^{+0.034}_{-0.028}$ & $0.130^{+0.032}_{-0.027}$ &  &  &  &  \\
\bottomrule
\end{tabular}}
\end{table}

\begin{table}[p]
\centering
\caption{\textbf{Frontier table, MBPP, 3-shot.} LLaDA-8B-Instruct, canvas 256, greedy, no KV cache; problems $[0,500)$ (HumanEval all 164). Accuracy with its Wilson 95\% interval; pass@1. Every operating point behind \Cref{fig:llada-grid-full} and \Cref{fig:llada-grid-strict}, including cells beyond the figures' crops; PIG.EXT is the composite operating rule of every figure: its two slowest rows additionally floor admission at a confidence $\nu$ (written $\theta$ ($\nu{\ge}c$)), then the plain $\theta$ ladder from $0.5$.}
\label{tab:frontier-llada-mbpp}
\scriptsize
\setlength{\tabcolsep}{3pt}
\fitwidth{\begin{tabular}{lrrrrrrrrr}
\toprule
regime & \multicolumn{3}{c}{Fast-dLLM} & \multicolumn{3}{c}{PIG.INT} & \multicolumn{3}{c}{PIG.EXT} \\
\cmidrule(lr){2-4} \cmidrule(lr){5-7} \cmidrule(lr){8-10}
 & $\nu$ & TPF & acc & $\varphi$ & TPF & acc & $\theta$ & TPF & acc \\
\midrule
BL=32 fixed & 0.95 & 3.29 & $0.396^{+0.044}_{-0.042}$ & 0.015 & 4.21 & $0.394^{+0.043}_{-0.042}$ & 0.125 ($\nu{\ge}0.9$) & 3.78 & $0.392^{+0.043}_{-0.042}$ \\
 & 0.85 & 4.26 & $0.388^{+0.043}_{-0.042}$ & 0.03 & 4.73 & $0.394^{+0.043}_{-0.042}$ & 0.25 ($\nu{\ge}0.8$) & 4.50 & $0.398^{+0.044}_{-0.042}$ \\
 & 0.7 & 5.53 & $0.372^{+0.043}_{-0.041}$ & 0.05 & 5.16 & $0.386^{+0.043}_{-0.042}$ & 0.5 & 7.15 & $0.260^{+0.040}_{-0.037}$ \\
 & 0.5 & 7.99 & $0.230^{+0.039}_{-0.035}$ & 0.1 & 5.73 & $0.366^{+0.043}_{-0.041}$ & 1 & 9.76 & $0.214^{+0.038}_{-0.034}$ \\
 & 0.3 & 12.10 & $0.068^{+0.026}_{-0.019}$ & 0.2 & 6.87 & $0.332^{+0.042}_{-0.040}$ & 1.5 & 13.50 & $0.126^{+0.032}_{-0.026}$ \\
 & 0.2 & 16.35 & $0.028^{+0.018}_{-0.011}$ & 0.3 & 8.05 & $0.308^{+0.042}_{-0.039}$ & 2 & 17.00 & $0.084^{+0.028}_{-0.021}$ \\
 &  &  &  & 0.5 & 9.98 & $0.236^{+0.039}_{-0.035}$ &  &  &  \\
 &  &  &  & 1 & 11.41 & $0.150^{+0.034}_{-0.029}$ &  &  &  \\
\midrule
BL=64 fixed & 0.99 & 2.59 & $0.396^{+0.044}_{-0.042}$ & 0.015 & 4.29 & $0.396^{+0.044}_{-0.042}$ & 0.125 ($\nu{\ge}0.9$) & 3.89 & $0.392^{+0.043}_{-0.042}$ \\
 & 0.95 & 3.38 & $0.394^{+0.043}_{-0.042}$ & 0.03 & 5.00 & $0.384^{+0.043}_{-0.042}$ & 0.25 ($\nu{\ge}0.8$) & 4.67 & $0.388^{+0.043}_{-0.042}$ \\
 & 0.85 & 4.42 & $0.382^{+0.043}_{-0.042}$ & 0.05 & 5.53 & $0.372^{+0.043}_{-0.041}$ & 0.5 & 7.78 & $0.250^{+0.040}_{-0.036}$ \\
 & 0.7 & 5.94 & $0.354^{+0.043}_{-0.041}$ & 0.1 & 6.19 & $0.356^{+0.043}_{-0.041}$ & 1 & 11.66 & $0.204^{+0.038}_{-0.033}$ \\
 & 0.5 & 9.54 & $0.222^{+0.038}_{-0.034}$ & 0.2 & 7.86 & $0.328^{+0.042}_{-0.040}$ & 1.5 & 18.52 & $0.144^{+0.033}_{-0.028}$ \\
 & 0.3 & 16.15 & $0.048^{+0.022}_{-0.016}$ & 0.3 & 9.72 & $0.300^{+0.042}_{-0.039}$ &  &  &  \\
 &  &  &  & 0.5 & 12.66 & $0.224^{+0.039}_{-0.034}$ &  &  &  \\
 &  &  &  & 1 & 14.96 & $0.132^{+0.032}_{-0.027}$ &  &  &  \\
\midrule
BL=32 sliding & 0.95 & 3.44 & $0.404^{+0.044}_{-0.042}$ & 0.015 & 4.45 & $0.406^{+0.044}_{-0.042}$ & 0.125 ($\nu{\ge}0.9$) & 3.95 & $0.404^{+0.044}_{-0.042}$ \\
 & 0.85 & 4.55 & $0.392^{+0.043}_{-0.042}$ & 0.03 & 5.13 & $0.396^{+0.044}_{-0.042}$ & 0.25 ($\nu{\ge}0.8$) & 4.78 & $0.402^{+0.044}_{-0.042}$ \\
 & 0.7 & 5.99 & $0.358^{+0.043}_{-0.041}$ & 0.05 & 5.61 & $0.382^{+0.043}_{-0.042}$ & 0.5 & 7.33 & $0.272^{+0.041}_{-0.037}$ \\
 & 0.5 & 9.41 & $0.212^{+0.038}_{-0.034}$ & 0.1 & 6.08 & $0.360^{+0.043}_{-0.041}$ & 1 & 9.97 & $0.236^{+0.039}_{-0.035}$ \\
 & 0.3 & 14.02 & $0.050^{+0.023}_{-0.016}$ & 0.2 & 7.90 & $0.310^{+0.042}_{-0.039}$ & 1.5 & 13.15 & $0.156^{+0.034}_{-0.029}$ \\
 & 0.2 & 19.57 & $0.020^{+0.016}_{-0.009}$ & 0.3 & 9.50 & $0.276^{+0.041}_{-0.037}$ & 2 & 16.94 & $0.104^{+0.030}_{-0.024}$ \\
 &  &  &  & 0.5 & 11.94 & $0.244^{+0.040}_{-0.036}$ &  &  &  \\
 &  &  &  & 1 & 14.00 & $0.138^{+0.033}_{-0.027}$ &  &  &  \\
\midrule
BL=64 sliding & 0.95 & 3.52 & $0.396^{+0.044}_{-0.042}$ & 0.015 & 4.58 & $0.402^{+0.044}_{-0.042}$ & 0.125 ($\nu{\ge}0.9$) & 4.03 & $0.396^{+0.044}_{-0.042}$ \\
 & 0.85 & 4.70 & $0.388^{+0.043}_{-0.042}$ & 0.03 & 5.30 & $0.394^{+0.043}_{-0.042}$ & 0.25 ($\nu{\ge}0.8$) & 4.94 & $0.392^{+0.043}_{-0.042}$ \\
 & 0.7 & 6.56 & $0.354^{+0.043}_{-0.041}$ & 0.05 & 6.06 & $0.370^{+0.043}_{-0.041}$ & 0.5 & 8.65 & $0.246^{+0.040}_{-0.036}$ \\
 & 0.5 & 11.17 & $0.208^{+0.038}_{-0.033}$ & 0.1 & 6.74 & $0.350^{+0.043}_{-0.041}$ & 1 & 14.23 & $0.216^{+0.038}_{-0.034}$ \\
 & 0.3 & 19.63 & $0.042^{+0.021}_{-0.014}$ & 0.2 & 8.93 & $0.316^{+0.042}_{-0.039}$ &  &  &  \\
 &  &  &  & 0.3 & 11.34 & $0.292^{+0.041}_{-0.038}$ &  &  &  \\
 &  &  &  & 0.5 & 14.58 & $0.202^{+0.037}_{-0.033}$ &  &  &  \\
 &  &  &  & 1 & 18.25 & $0.136^{+0.033}_{-0.027}$ &  &  &  \\
\midrule
free canvas & 0.99 & 2.08 & $0.354^{+0.043}_{-0.041}$ & 0.015 & 3.44 & $0.358^{+0.043}_{-0.041}$ & 0.125 ($\nu{\ge}0.9$) & 3.05 & $0.350^{+0.043}_{-0.041}$ \\
 & 0.95 & 2.66 & $0.354^{+0.043}_{-0.041}$ & 0.03 & 3.96 & $0.358^{+0.043}_{-0.041}$ & 0.25 ($\nu{\ge}0.8$) & 3.70 & $0.356^{+0.043}_{-0.041}$ \\
 & 0.85 & 3.54 & $0.356^{+0.043}_{-0.041}$ & 0.05 & 4.58 & $0.306^{+0.042}_{-0.039}$ & 0.5 & 5.57 & $0.256^{+0.040}_{-0.036}$ \\
 & 0.7 & 4.48 & $0.308^{+0.042}_{-0.039}$ & 0.1 & 4.72 & $0.296^{+0.041}_{-0.038}$ & 1 & 8.75 & $0.222^{+0.038}_{-0.034}$ \\
 & 0.5 & 6.19 & $0.204^{+0.038}_{-0.033}$ & 0.2 & 5.63 & $0.252^{+0.040}_{-0.036}$ & 1.5 & 14.43 & $0.144^{+0.033}_{-0.028}$ \\
 & 0.3 & 15.11 & $0.054^{+0.023}_{-0.017}$ & 0.3 & 6.62 & $0.242^{+0.039}_{-0.035}$ & 2 & 23.12 & $0.086^{+0.028}_{-0.022}$ \\
 &  &  &  & 0.5 & 9.27 & $0.220^{+0.038}_{-0.034}$ &  &  &  \\
 &  &  &  & 1 & 13.00 & $0.156^{+0.034}_{-0.029}$ &  &  &  \\
\bottomrule
\end{tabular}}
\end{table}

\begin{table}[p]
\centering
\caption{\textbf{Frontier table, HumanEval, 0-shot.} LLaDA-8B-Instruct, canvas 256, greedy, no KV cache; problems $[0,500)$ (HumanEval all 164). Accuracy with its Wilson 95\% interval; pass@1. Every operating point behind \Cref{fig:llada-grid-full} and \Cref{fig:llada-grid-strict}, including cells beyond the figures' crops; PIG.EXT is the composite operating rule of every figure: its two slowest rows additionally floor admission at a confidence $\nu$ (written $\theta$ ($\nu{\ge}c$)), then the plain $\theta$ ladder from $0.5$.}
\label{tab:frontier-llada-humaneval}
\scriptsize
\setlength{\tabcolsep}{3pt}
\fitwidth{\begin{tabular}{lrrrrrrrrr}
\toprule
regime & \multicolumn{3}{c}{Fast-dLLM} & \multicolumn{3}{c}{PIG.INT} & \multicolumn{3}{c}{PIG.EXT} \\
\cmidrule(lr){2-4} \cmidrule(lr){5-7} \cmidrule(lr){8-10}
 & $\nu$ & TPF & acc & $\varphi$ & TPF & acc & $\theta$ & TPF & acc \\
\midrule
BL=32 fixed & 0.95 & 4.95 & $0.396^{+0.076}_{-0.072}$ & 0.015 & 6.42 & $0.378^{+0.076}_{-0.071}$ & 0.125 ($\nu{\ge}0.9$) & 5.69 & $0.390^{+0.076}_{-0.071}$ \\
 & 0.85 & 6.76 & $0.396^{+0.076}_{-0.072}$ & 0.03 & 7.37 & $0.366^{+0.076}_{-0.070}$ & 0.25 ($\nu{\ge}0.8$) & 6.77 & $0.396^{+0.076}_{-0.072}$ \\
 & 0.7 & 8.69 & $0.342^{+0.075}_{-0.068}$ & 0.05 & 7.79 & $0.372^{+0.076}_{-0.070}$ & 0.5 & 8.50 & $0.274^{+0.073}_{-0.063}$ \\
 & 0.5 & 10.64 & $0.171^{+0.065}_{-0.050}$ & 0.1 & 8.67 & $0.335^{+0.075}_{-0.068}$ & 1 & 10.46 & $0.226^{+0.070}_{-0.057}$ \\
 & 0.3 & 12.92 & $0.018^{+0.034}_{-0.012}$ & 0.2 & 9.59 & $0.317^{+0.075}_{-0.066}$ & 1.5 & 13.03 & $0.207^{+0.068}_{-0.055}$ \\
 & 0.2 & 17.41 & $0.018^{+0.034}_{-0.012}$ & 0.3 & 10.64 & $0.281^{+0.073}_{-0.063}$ & 2 & 15.64 & $0.177^{+0.066}_{-0.051}$ \\
 &  &  &  & 0.5 & 11.60 & $0.238^{+0.071}_{-0.059}$ & 2.5 & 17.69 & $0.122^{+0.059}_{-0.042}$ \\
 &  &  &  & 1 & 13.10 & $0.104^{+0.056}_{-0.038}$ &  &  &  \\
\midrule
BL=64 fixed & 0.9 & 3.34 & $0.348^{+0.076}_{-0.069}$ & 0.015 & 6.68 & $0.372^{+0.076}_{-0.070}$ & 0.125 ($\nu{\ge}0.9$) & 6.00 & $0.384^{+0.076}_{-0.071}$ \\
 & 0.99 & 3.66 & $0.378^{+0.076}_{-0.071}$ & 0.03 & 8.14 & $0.348^{+0.076}_{-0.069}$ & 0.25 ($\nu{\ge}0.8$) & 7.54 & $0.384^{+0.076}_{-0.071}$ \\
 & 0.95 & 5.16 & $0.390^{+0.076}_{-0.071}$ & 0.05 & 8.88 & $0.366^{+0.076}_{-0.070}$ & 0.5 & 9.47 & $0.293^{+0.074}_{-0.064}$ \\
 & 0.85 & 7.21 & $0.360^{+0.076}_{-0.069}$ & 0.1 & 10.06 & $0.342^{+0.075}_{-0.068}$ & 1 & 12.32 & $0.262^{+0.072}_{-0.061}$ \\
 & 0.7 & 9.79 & $0.317^{+0.075}_{-0.066}$ & 0.2 & 11.27 & $0.274^{+0.073}_{-0.063}$ & 1.5 & 14.48 & $0.201^{+0.068}_{-0.054}$ \\
 & 0.5 & 12.73 & $0.140^{+0.061}_{-0.045}$ & 0.3 & 13.01 & $0.287^{+0.073}_{-0.064}$ & 2 & 18.69 & $0.165^{+0.064}_{-0.049}$ \\
 & 0.3 & 17.90 & $0.030^{+0.039}_{-0.017}$ & 0.5 & 14.70 & $0.213^{+0.069}_{-0.056}$ &  &  &  \\
 &  &  &  & 1 & 17.37 & $0.110^{+0.057}_{-0.039}$ &  &  &  \\
 &  &  &  & 2 & 19.93 & $0.049^{+0.045}_{-0.024}$ &  &  &  \\
\midrule
BL=32 sliding & 0.95 & 5.25 & $0.396^{+0.076}_{-0.072}$ & 0.015 & 6.82 & $0.415^{+0.077}_{-0.073}$ & 0.125 ($\nu{\ge}0.9$) & 6.00 & $0.402^{+0.076}_{-0.072}$ \\
 & 0.85 & 7.04 & $0.378^{+0.076}_{-0.071}$ & 0.03 & 8.04 & $0.366^{+0.076}_{-0.070}$ & 0.25 ($\nu{\ge}0.8$) & 7.45 & $0.408^{+0.076}_{-0.072}$ \\
 & 0.7 & 9.54 & $0.317^{+0.075}_{-0.066}$ & 0.05 & 8.68 & $0.402^{+0.076}_{-0.072}$ & 0.5 & 8.85 & $0.342^{+0.075}_{-0.068}$ \\
 & 0.5 & 12.21 & $0.128^{+0.060}_{-0.043}$ & 0.1 & 9.39 & $0.329^{+0.075}_{-0.067}$ & 1 & 10.72 & $0.262^{+0.072}_{-0.061}$ \\
 & 0.3 & 15.72 & $0.037^{+0.041}_{-0.020}$ & 0.2 & 10.96 & $0.299^{+0.074}_{-0.065}$ & 1.5 & 13.58 & $0.244^{+0.071}_{-0.059}$ \\
 &  &  &  & 0.3 & 11.83 & $0.238^{+0.071}_{-0.059}$ & 2 & 15.76 & $0.171^{+0.065}_{-0.050}$ \\
 &  &  &  & 0.5 & 13.99 & $0.171^{+0.065}_{-0.050}$ & 2.5 & 17.62 & $0.152^{+0.063}_{-0.047}$ \\
 &  &  &  & 1 & 15.44 & $0.091^{+0.054}_{-0.035}$ &  &  &  \\
\midrule
BL=64 sliding & 0.95 & 5.36 & $0.396^{+0.076}_{-0.072}$ & 0.015 & 7.25 & $0.384^{+0.076}_{-0.071}$ & 0.125 ($\nu{\ge}0.9$) & 6.13 & $0.390^{+0.076}_{-0.071}$ \\
 & 0.85 & 7.58 & $0.378^{+0.076}_{-0.071}$ & 0.03 & 8.83 & $0.378^{+0.076}_{-0.071}$ & 0.25 ($\nu{\ge}0.8$) & 8.11 & $0.384^{+0.076}_{-0.071}$ \\
 & 0.7 & 10.85 & $0.348^{+0.076}_{-0.069}$ & 0.05 & 9.50 & $0.366^{+0.076}_{-0.070}$ & 0.5 & 10.12 & $0.323^{+0.075}_{-0.067}$ \\
 & 0.5 & 15.37 & $0.134^{+0.061}_{-0.044}$ & 0.1 & 10.63 & $0.348^{+0.076}_{-0.069}$ & 1 & 13.57 & $0.293^{+0.074}_{-0.064}$ \\
 &  &  &  & 0.2 & 12.42 & $0.262^{+0.072}_{-0.061}$ & 1.5 & 16.98 & $0.268^{+0.073}_{-0.062}$ \\
 &  &  &  & 0.3 & 14.90 & $0.287^{+0.073}_{-0.064}$ &  &  &  \\
 &  &  &  & 0.5 & 16.86 & $0.207^{+0.068}_{-0.055}$ &  &  &  \\
\midrule
free canvas & 0.99 & 3.35 & $0.299^{+0.074}_{-0.065}$ & 0.015 & 7.27 & $0.317^{+0.075}_{-0.066}$ & 0.125 ($\nu{\ge}0.9$) & 5.82 & $0.287^{+0.073}_{-0.064}$ \\
 & 0.95 & 5.06 & $0.323^{+0.075}_{-0.067}$ & 0.03 & 8.68 & $0.293^{+0.074}_{-0.064}$ & 0.25 ($\nu{\ge}0.8$) & 7.65 & $0.287^{+0.073}_{-0.064}$ \\
 & 0.85 & 7.96 & $0.268^{+0.073}_{-0.062}$ & 0.05 & 10.10 & $0.274^{+0.073}_{-0.063}$ & 0.5 & 10.60 & $0.274^{+0.073}_{-0.063}$ \\
 & 0.7 & 11.71 & $0.262^{+0.072}_{-0.061}$ & 0.1 & 12.47 & $0.262^{+0.072}_{-0.061}$ & 1 & 14.27 & $0.299^{+0.074}_{-0.065}$ \\
 & 0.5 & 16.54 & $0.140^{+0.061}_{-0.045}$ & 0.2 & 15.18 & $0.250^{+0.071}_{-0.060}$ & 1.5 & 19.13 & $0.244^{+0.071}_{-0.059}$ \\
 & 0.3 & 27.35 & $0.024^{+0.037}_{-0.015}$ & 0.3 & 16.98 & $0.250^{+0.071}_{-0.060}$ & 2 & 23.53 & $0.159^{+0.064}_{-0.048}$ \\
 &  &  &  & 0.5 & 19.68 & $0.159^{+0.064}_{-0.048}$ &  &  &  \\
 &  &  &  & 1 & 22.48 & $0.128^{+0.060}_{-0.043}$ &  &  &  \\
 &  &  &  & 2 & 27.33 & $0.091^{+0.054}_{-0.035}$ &  &  &  \\
 &  &  &  & 3 & 29.71 & $0.055^{+0.046}_{-0.026}$ &  &  &  \\
\bottomrule
\end{tabular}}
\end{table}

\begin{table}[p]
\centering
\caption{\textbf{Frontier table, Nemotron-3B, GSM8K, 0-shot (flex).} Nemotron-Labs-Diffusion-3B, canvas 2048, 0-shot, greedy; the Fast-dLLM lane is the model's own confidence-threshold decoder at the same geometry; PIG.EXT is the composite operating rule (the two slowest points additionally floor admission at a confidence $\nu$, written $\theta$ ($\nu{\ge}c$)). Accuracy with its Wilson 95\% interval; the full three-geometry grid is \Cref{fig:nemotron-grid-full}.}
\label{tab:frontier-nemotron-gsm8k}
\scriptsize
\setlength{\tabcolsep}{3pt}
\fitwidth{\begin{tabular}{lrrrrrrrrr}
\toprule
regime & \multicolumn{3}{c}{Fast-dLLM} & \multicolumn{3}{c}{PIG.INT} & \multicolumn{3}{c}{PIG.EXT} \\
\cmidrule(lr){2-4} \cmidrule(lr){5-7} \cmidrule(lr){8-10}
 & $\nu$ & TPF & acc & $\varphi$ & TPF & acc & $\theta$ & TPF & acc \\
\midrule
BL=32 fixed & 0.9 & 2.56 & $0.896^{+0.024}_{-0.030}$ & 0.03 & 2.52 & $0.854^{+0.028}_{-0.034}$ & 0.125 & 2.05 & $0.870^{+0.027}_{-0.032}$ \\
 & 0.8 & 3.08 & $0.870^{+0.027}_{-0.032}$ & 0.05 & 2.91 & $0.868^{+0.027}_{-0.032}$ & 0.25 & 2.44 & $0.856^{+0.028}_{-0.033}$ \\
 & 0.7 & 3.66 & $0.856^{+0.028}_{-0.033}$ & 0.1 & 3.31 & $0.846^{+0.029}_{-0.034}$ & 0.5 & 3.66 & $0.806^{+0.032}_{-0.037}$ \\
 & 0.6 & 4.22 & $0.808^{+0.032}_{-0.037}$ & 0.2 & 3.87 & $0.844^{+0.029}_{-0.034}$ & 1 & 4.69 & $0.742^{+0.036}_{-0.040}$ \\
 & 0.5 & 4.91 & $0.738^{+0.037}_{-0.040}$ & 0.3 & 4.21 & $0.796^{+0.033}_{-0.038}$ & 1.5 & 5.70 & $0.668^{+0.040}_{-0.042}$ \\
 & 0.4 & 5.66 & $0.554^{+0.043}_{-0.044}$ & 0.5 & 4.71 & $0.754^{+0.036}_{-0.040}$ &  &  &  \\
 &  &  &  & 0.7 & 5.09 & $0.738^{+0.037}_{-0.040}$ &  &  &  \\
 &  &  &  & 1 & 5.44 & $0.650^{+0.041}_{-0.043}$ &  &  &  \\
\midrule
BL=32 sliding & 0.9 & 2.42 & $0.846^{+0.029}_{-0.034}$ & 0.03 & 2.62 & $0.840^{+0.030}_{-0.035}$ & 0.125 & 2.00 & $0.830^{+0.030}_{-0.035}$ \\
 & 0.8 & 2.97 & $0.850^{+0.029}_{-0.034}$ & 0.05 & 3.08 & $0.828^{+0.031}_{-0.036}$ & 0.25 & 2.41 & $0.842^{+0.029}_{-0.035}$ \\
 & 0.7 & 3.54 & $0.822^{+0.031}_{-0.036}$ & 0.1 & 3.54 & $0.840^{+0.030}_{-0.035}$ & 0.5 & 3.56 & $0.774^{+0.034}_{-0.039}$ \\
 & 0.6 & 4.11 & $0.758^{+0.035}_{-0.039}$ & 0.2 & 4.17 & $0.826^{+0.031}_{-0.036}$ & 1 & 4.45 & $0.786^{+0.034}_{-0.038}$ \\
 & 0.5 & 4.75 & $0.662^{+0.040}_{-0.043}$ & 0.3 & 4.61 & $0.782^{+0.034}_{-0.038}$ & 1.5 & 5.74 & $0.684^{+0.039}_{-0.042}$ \\
 & 0.4 & 5.69 & $0.526^{+0.043}_{-0.044}$ & 0.5 & 5.15 & $0.750^{+0.036}_{-0.040}$ &  &  &  \\
 &  &  &  & 0.7 & 5.52 & $0.704^{+0.038}_{-0.041}$ &  &  &  \\
 &  &  &  & 1 & 5.97 & $0.622^{+0.041}_{-0.043}$ &  &  &  \\
\midrule
BL=32 in 64-wide window & 0.9 & 2.53 & $0.852^{+0.028}_{-0.034}$ & 0.03 & 2.63 & $0.862^{+0.027}_{-0.033}$ & 0.125 & 1.99 & $0.862^{+0.027}_{-0.033}$ \\
 & 0.8 & 3.16 & $0.846^{+0.029}_{-0.034}$ & 0.05 & 3.07 & $0.850^{+0.029}_{-0.034}$ & 0.25 & 2.40 & $0.844^{+0.029}_{-0.034}$ \\
 & 0.7 & 3.85 & $0.824^{+0.031}_{-0.036}$ & 0.1 & 3.59 & $0.838^{+0.030}_{-0.035}$ & 0.5 & 3.55 & $0.836^{+0.030}_{-0.035}$ \\
 & 0.6 & 4.50 & $0.762^{+0.035}_{-0.039}$ & 0.2 & 4.18 & $0.822^{+0.031}_{-0.036}$ & 1 & 4.52 & $0.764^{+0.035}_{-0.039}$ \\
 & 0.5 & 5.12 & $0.652^{+0.040}_{-0.043}$ & 0.3 & 4.58 & $0.780^{+0.034}_{-0.038}$ & 1.5 & 5.81 & $0.690^{+0.039}_{-0.042}$ \\
 &  &  &  & 0.5 & 5.17 & $0.744^{+0.036}_{-0.040}$ &  &  &  \\
 &  &  &  & 0.7 & 5.57 & $0.718^{+0.038}_{-0.041}$ &  &  &  \\
 &  &  &  & 1 & 5.93 & $0.668^{+0.040}_{-0.042}$ &  &  &  \\
\bottomrule
\end{tabular}}
\end{table}

\begin{table}[p]
\centering
\caption{\textbf{Frontier table, Nemotron-3B, MATH-500, 0-shot.} Nemotron-Labs-Diffusion-3B, canvas 2048, 0-shot, greedy; the Fast-dLLM lane is the model's own confidence-threshold decoder at the same geometry; PIG.EXT is the composite operating rule (the two slowest points additionally floor admission at a confidence $\nu$, written $\theta$ ($\nu{\ge}c$)). Accuracy with its Wilson 95\% interval; the full three-geometry grid is \Cref{fig:nemotron-grid-full}.}
\label{tab:frontier-nemotron-math500}
\scriptsize
\setlength{\tabcolsep}{3pt}
\fitwidth{\begin{tabular}{lrrrrrrrrr}
\toprule
regime & \multicolumn{3}{c}{Fast-dLLM} & \multicolumn{3}{c}{PIG.INT} & \multicolumn{3}{c}{PIG.EXT} \\
\cmidrule(lr){2-4} \cmidrule(lr){5-7} \cmidrule(lr){8-10}
 & $\nu$ & TPF & acc & $\varphi$ & TPF & acc & $\theta$ & TPF & acc \\
\midrule
BL=32 fixed & 0.9 & 2.80 & $0.606^{+0.042}_{-0.043}$ & 0.03 & 2.72 & $0.708^{+0.038}_{-0.041}$ & 0.125 & 2.24 & $0.744^{+0.036}_{-0.040}$ \\
 & 0.8 & 3.26 & $0.598^{+0.042}_{-0.044}$ & 0.05 & 3.09 & $0.686^{+0.039}_{-0.042}$ & 0.25 & 2.63 & $0.708^{+0.038}_{-0.041}$ \\
 & 0.7 & 3.78 & $0.552^{+0.043}_{-0.044}$ & 0.1 & 3.45 & $0.692^{+0.039}_{-0.042}$ & 0.5 & 3.96 & $0.588^{+0.042}_{-0.044}$ \\
 & 0.6 & 4.28 & $0.498^{+0.044}_{-0.044}$ & 0.2 & 3.98 & $0.614^{+0.042}_{-0.043}$ & 1 & 5.23 & $0.492^{+0.044}_{-0.044}$ \\
 & 0.5 & 4.95 & $0.414^{+0.044}_{-0.042}$ & 0.3 & 4.37 & $0.556^{+0.043}_{-0.044}$ &  &  &  \\
 & 0.4 & 5.61 & $0.308^{+0.042}_{-0.039}$ & 0.5 & 4.87 & $0.480^{+0.044}_{-0.043}$ &  &  &  \\
 &  &  &  & 0.7 & 5.30 & $0.446^{+0.044}_{-0.043}$ &  &  &  \\
 &  &  &  & 1 & 5.69 & $0.418^{+0.044}_{-0.042}$ &  &  &  \\
\midrule
BL=32 sliding & 0.9 & 2.70 & $0.688^{+0.039}_{-0.042}$ & 0.03 & 2.86 & $0.628^{+0.041}_{-0.043}$ & 0.125 & 2.22 & $0.664^{+0.040}_{-0.043}$ \\
 & 0.8 & 3.20 & $0.628^{+0.041}_{-0.043}$ & 0.05 & 3.28 & $0.658^{+0.040}_{-0.043}$ & 0.25 & 2.63 & $0.680^{+0.039}_{-0.042}$ \\
 & 0.7 & 3.71 & $0.612^{+0.042}_{-0.043}$ & 0.1 & 3.76 & $0.612^{+0.042}_{-0.043}$ & 0.5 & 3.87 & $0.582^{+0.042}_{-0.044}$ \\
 & 0.6 & 4.22 & $0.476^{+0.044}_{-0.043}$ & 0.2 & 4.34 & $0.552^{+0.043}_{-0.044}$ & 1 & 5.00 & $0.494^{+0.044}_{-0.044}$ \\
 & 0.5 & 4.91 & $0.392^{+0.043}_{-0.042}$ & 0.3 & 4.80 & $0.530^{+0.043}_{-0.044}$ &  &  &  \\
 & 0.4 & 5.66 & $0.260^{+0.040}_{-0.037}$ & 0.5 & 5.28 & $0.452^{+0.044}_{-0.043}$ &  &  &  \\
 &  &  &  & 0.7 & 5.74 & $0.394^{+0.043}_{-0.042}$ &  &  &  \\
\midrule
BL=32 in 64-wide window & 0.9 & 2.84 & $0.628^{+0.041}_{-0.043}$ & 0.03 & 2.87 & $0.644^{+0.041}_{-0.043}$ & 0.125 & 2.19 & $0.662^{+0.040}_{-0.043}$ \\
 & 0.8 & 3.41 & $0.588^{+0.042}_{-0.044}$ & 0.05 & 3.29 & $0.628^{+0.041}_{-0.043}$ & 0.25 & 2.61 & $0.666^{+0.040}_{-0.042}$ \\
 & 0.7 & 4.03 & $0.544^{+0.043}_{-0.044}$ & 0.1 & 3.74 & $0.594^{+0.042}_{-0.044}$ & 0.5 & 3.86 & $0.574^{+0.043}_{-0.044}$ \\
 & 0.6 & 4.60 & $0.474^{+0.044}_{-0.043}$ & 0.2 & 4.36 & $0.572^{+0.043}_{-0.044}$ & 1 & 5.32 & $0.462^{+0.044}_{-0.043}$ \\
 & 0.5 & 5.25 & $0.370^{+0.043}_{-0.041}$ & 0.3 & 4.78 & $0.524^{+0.043}_{-0.044}$ &  &  &  \\
 &  &  &  & 0.5 & 5.30 & $0.480^{+0.044}_{-0.043}$ &  &  &  \\
 &  &  &  & 0.7 & 5.73 & $0.404^{+0.044}_{-0.042}$ &  &  &  \\
\bottomrule
\end{tabular}}
\end{table}

\begin{table}[p]
\centering
\caption{\textbf{Frontier table, Nemotron-3B, MBPP, 0-shot.} Nemotron-Labs-Diffusion-3B, canvas 2048, 0-shot, greedy; the Fast-dLLM lane is the model's own confidence-threshold decoder at the same geometry; PIG.EXT is the composite operating rule (the two slowest points additionally floor admission at a confidence $\nu$, written $\theta$ ($\nu{\ge}c$)). Accuracy with its Wilson 95\% interval; the full three-geometry grid is \Cref{fig:nemotron-grid-full}.}
\label{tab:frontier-nemotron-mbpp}
\scriptsize
\setlength{\tabcolsep}{3pt}
\fitwidth{\begin{tabular}{lrrrrrrrrr}
\toprule
regime & \multicolumn{3}{c}{Fast-dLLM} & \multicolumn{3}{c}{PIG.INT} & \multicolumn{3}{c}{PIG.EXT} \\
\cmidrule(lr){2-4} \cmidrule(lr){5-7} \cmidrule(lr){8-10}
 & $\nu$ & TPF & acc & $\varphi$ & TPF & acc & $\theta$ & TPF & acc \\
\midrule
BL=32 fixed & 0.9 & 1.91 & $0.553^{+0.043}_{-0.044}$ & 0.03 & 1.92 & $0.548^{+0.043}_{-0.044}$ & 0.125 & 1.64 & $0.574^{+0.043}_{-0.044}$ \\
 & 0.8 & 2.26 & $0.485^{+0.044}_{-0.044}$ & 0.05 & 2.18 & $0.526^{+0.043}_{-0.044}$ & 0.25 & 1.89 & $0.562^{+0.043}_{-0.044}$ \\
 & 0.7 & 2.65 & $0.433^{+0.044}_{-0.043}$ & 0.1 & 2.47 & $0.504^{+0.044}_{-0.044}$ & 0.5 & 3.06 & $0.412^{+0.044}_{-0.042}$ \\
 & 0.6 & 3.11 & $0.343^{+0.043}_{-0.040}$ & 0.2 & 2.87 & $0.448^{+0.044}_{-0.043}$ & 1 & 5.20 & $0.286^{+0.041}_{-0.038}$ \\
 & 0.5 & 3.62 & $0.222^{+0.039}_{-0.034}$ & 0.3 & 3.16 & $0.408^{+0.044}_{-0.042}$ &  &  &  \\
 & 0.4 & 4.23 & $0.136^{+0.033}_{-0.027}$ & 0.5 & 3.60 & $0.316^{+0.042}_{-0.039}$ &  &  &  \\
 & 0.3 & 5.30 & $0.074^{+0.026}_{-0.020}$ & 0.7 & 3.97 & $0.292^{+0.041}_{-0.038}$ &  &  &  \\
 &  &  &  & 1 & 4.36 & $0.204^{+0.038}_{-0.033}$ &  &  &  \\
\midrule
BL=32 sliding & 0.9 & 1.81 & $0.565^{+0.043}_{-0.044}$ & 0.03 & 1.95 & $0.560^{+0.043}_{-0.044}$ & 0.125 & 1.60 & $0.596^{+0.042}_{-0.044}$ \\
 & 0.8 & 2.17 & $0.537^{+0.043}_{-0.044}$ & 0.05 & 2.24 & $0.542^{+0.043}_{-0.044}$ & 0.25 & 1.87 & $0.552^{+0.043}_{-0.044}$ \\
 & 0.7 & 2.53 & $0.403^{+0.044}_{-0.042}$ & 0.1 & 2.53 & $0.474^{+0.044}_{-0.043}$ & 0.5 & 2.99 & $0.432^{+0.044}_{-0.043}$ \\
 & 0.6 & 2.97 & $0.335^{+0.043}_{-0.040}$ & 0.2 & 2.98 & $0.398^{+0.044}_{-0.042}$ & 1 & 5.27 & $0.348^{+0.043}_{-0.040}$ \\
 & 0.5 & 3.50 & $0.222^{+0.039}_{-0.034}$ & 0.3 & 3.30 & $0.314^{+0.042}_{-0.039}$ &  &  &  \\
 & 0.4 & 4.08 & $0.122^{+0.032}_{-0.026}$ & 0.5 & 3.71 & $0.308^{+0.042}_{-0.039}$ &  &  &  \\
 & 0.3 & 5.10 & $0.050^{+0.023}_{-0.016}$ & 0.7 & 4.07 & $0.226^{+0.039}_{-0.034}$ &  &  &  \\
 &  &  &  & 1 & 4.49 & $0.188^{+0.037}_{-0.032}$ &  &  &  \\
\midrule
BL=32 in 64-wide window & 0.9 & 1.84 & $0.539^{+0.043}_{-0.044}$ & 0.03 & 1.94 & $0.540^{+0.043}_{-0.044}$ & 0.125 & 1.60 & $0.572^{+0.043}_{-0.044}$ \\
 & 0.8 & 2.25 & $0.503^{+0.044}_{-0.044}$ & 0.05 & 2.24 & $0.530^{+0.043}_{-0.044}$ & 0.25 & 1.87 & $0.560^{+0.043}_{-0.044}$ \\
 & 0.7 & 2.65 & $0.407^{+0.044}_{-0.042}$ & 0.1 & 2.54 & $0.482^{+0.044}_{-0.043}$ & 0.5 & 3.16 & $0.464^{+0.044}_{-0.043}$ \\
 & 0.6 & 3.09 & $0.297^{+0.042}_{-0.038}$ & 0.2 & 2.97 & $0.392^{+0.043}_{-0.042}$ & 1 & 5.58 & $0.342^{+0.043}_{-0.040}$ \\
 & 0.5 & 3.66 & $0.170^{+0.035}_{-0.030}$ & 0.3 & 3.31 & $0.346^{+0.043}_{-0.040}$ &  &  &  \\
 & 0.4 & 4.29 & $0.090^{+0.028}_{-0.022}$ & 0.5 & 3.73 & $0.278^{+0.041}_{-0.037}$ &  &  &  \\
 & 0.3 & 5.58 & $0.050^{+0.023}_{-0.016}$ & 0.7 & 4.07 & $0.242^{+0.039}_{-0.035}$ &  &  &  \\
 &  &  &  & 1 & 4.54 & $0.196^{+0.037}_{-0.032}$ &  &  &  \\
\bottomrule
\end{tabular}}
\end{table}

\begin{table}[p]
\centering
\caption{\textbf{Frontier table, Nemotron-3B, HumanEval, 0-shot.} Nemotron-Labs-Diffusion-3B, canvas 2048, 0-shot, greedy; the Fast-dLLM lane is the model's own confidence-threshold decoder at the same geometry; PIG.EXT is the composite operating rule (the two slowest points additionally floor admission at a confidence $\nu$, written $\theta$ ($\nu{\ge}c$)). Accuracy with its Wilson 95\% interval; the full three-geometry grid is \Cref{fig:nemotron-grid-full}.}
\label{tab:frontier-nemotron-humaneval}
\scriptsize
\setlength{\tabcolsep}{3pt}
\fitwidth{\begin{tabular}{lrrrrrrrrr}
\toprule
regime & \multicolumn{3}{c}{Fast-dLLM} & \multicolumn{3}{c}{PIG.INT} & \multicolumn{3}{c}{PIG.EXT} \\
\cmidrule(lr){2-4} \cmidrule(lr){5-7} \cmidrule(lr){8-10}
 & $\nu$ & TPF & acc & $\varphi$ & TPF & acc & $\theta$ & TPF & acc \\
\midrule
BL=32 fixed & 0.9 & 2.02 & $0.750^{+0.060}_{-0.071}$ & 0.03 & 2.03 & $0.756^{+0.059}_{-0.071}$ & 0.125 & 1.69 & $0.793^{+0.055}_{-0.068}$ \\
 & 0.8 & 2.41 & $0.707^{+0.064}_{-0.074}$ & 0.05 & 2.30 & $0.726^{+0.063}_{-0.073}$ & 0.25 & 2.00 & $0.744^{+0.061}_{-0.072}$ \\
 & 0.7 & 2.84 & $0.543^{+0.074}_{-0.076}$ & 0.1 & 2.62 & $0.707^{+0.064}_{-0.074}$ & 0.5 & 3.09 & $0.512^{+0.075}_{-0.076}$ \\
 & 0.6 & 3.34 & $0.433^{+0.077}_{-0.073}$ & 0.2 & 3.06 & $0.555^{+0.074}_{-0.076}$ & 1 & 4.59 & $0.396^{+0.076}_{-0.072}$ \\
 & 0.5 & 3.83 & $0.329^{+0.075}_{-0.067}$ & 0.3 & 3.36 & $0.427^{+0.077}_{-0.073}$ &  &  &  \\
 & 0.4 & 4.68 & $0.146^{+0.062}_{-0.046}$ & 0.5 & 3.80 & $0.348^{+0.076}_{-0.069}$ &  &  &  \\
 & 0.3 & 5.55 & $0.067^{+0.049}_{-0.029}$ & 0.7 & 4.12 & $0.250^{+0.071}_{-0.060}$ &  &  &  \\
 &  &  &  & 1 & 4.55 & $0.177^{+0.066}_{-0.051}$ &  &  &  \\
\midrule
BL=32 sliding & 0.9 & 1.94 & $0.713^{+0.064}_{-0.073}$ & 0.03 & 2.05 & $0.726^{+0.063}_{-0.073}$ & 0.125 & 1.66 & $0.744^{+0.061}_{-0.072}$ \\
 & 0.8 & 2.32 & $0.689^{+0.066}_{-0.074}$ & 0.05 & 2.36 & $0.701^{+0.065}_{-0.074}$ & 0.25 & 1.96 & $0.726^{+0.063}_{-0.073}$ \\
 & 0.7 & 2.75 & $0.567^{+0.073}_{-0.077}$ & 0.1 & 2.69 & $0.640^{+0.069}_{-0.076}$ & 0.5 & 2.99 & $0.573^{+0.073}_{-0.077}$ \\
 & 0.6 & 3.22 & $0.415^{+0.077}_{-0.073}$ & 0.2 & 3.16 & $0.518^{+0.075}_{-0.076}$ & 1 & 4.24 & $0.469^{+0.076}_{-0.075}$ \\
 & 0.5 & 3.75 & $0.238^{+0.071}_{-0.059}$ & 0.3 & 3.52 & $0.482^{+0.076}_{-0.075}$ &  &  &  \\
 & 0.4 & 4.37 & $0.104^{+0.056}_{-0.038}$ & 0.5 & 3.91 & $0.348^{+0.076}_{-0.069}$ &  &  &  \\
 & 0.3 & 5.44 & $0.037^{+0.041}_{-0.020}$ & 0.7 & 4.32 & $0.305^{+0.074}_{-0.065}$ &  &  &  \\
 &  &  &  & 1 & 4.66 & $0.213^{+0.069}_{-0.056}$ &  &  &  \\
\midrule
BL=32 in 64-wide window & 0.9 & 2.03 & $0.701^{+0.065}_{-0.074}$ & 0.03 & 2.06 & $0.689^{+0.066}_{-0.074}$ & 0.125 & 1.66 & $0.720^{+0.063}_{-0.073}$ \\
 & 0.8 & 2.46 & $0.652^{+0.069}_{-0.076}$ & 0.05 & 2.36 & $0.689^{+0.066}_{-0.074}$ & 0.25 & 1.94 & $0.701^{+0.065}_{-0.074}$ \\
 & 0.7 & 2.94 & $0.543^{+0.074}_{-0.076}$ & 0.1 & 2.68 & $0.604^{+0.072}_{-0.076}$ & 0.5 & 3.08 & $0.604^{+0.072}_{-0.076}$ \\
 & 0.6 & 3.53 & $0.348^{+0.076}_{-0.069}$ & 0.2 & 3.18 & $0.604^{+0.072}_{-0.076}$ & 1 & 4.36 & $0.494^{+0.076}_{-0.075}$ \\
 & 0.5 & 4.23 & $0.220^{+0.069}_{-0.057}$ & 0.3 & 3.52 & $0.451^{+0.076}_{-0.074}$ &  &  &  \\
 & 0.4 & 5.05 & $0.116^{+0.058}_{-0.040}$ & 0.5 & 3.95 & $0.329^{+0.075}_{-0.067}$ &  &  &  \\
 &  &  &  & 0.7 & 4.28 & $0.311^{+0.074}_{-0.066}$ &  &  &  \\
 &  &  &  & 1 & 4.68 & $0.238^{+0.071}_{-0.059}$ &  &  &  \\
\bottomrule
\end{tabular}}
\end{table}

\clearpage

\end{document}